\newtheorem{theorem}{Theorem}[section]
\newtheorem{lemma}[theorem]{Lemma}
\theoremstyle{definition}
\theoremstyle{remark}
\title{Momentum Centering and Asynchronous Update for Adaptive Gradient Methods}
\author{%
  Juntang Zhuang$^1$; Yifan Ding$^2$; Tommy Tang$^3$; Nicha Dvornek$^1$;\\ 
  \textbf{Sekhar Tatikonda}$^1$; \textbf{James S. Duncan}$^1$\\
  $^1$ Yale University; $^2$ University of Central  Florida; $^3$ University of Illinois at Urbana-Champaign\\
  \texttt{\{j.zhuang;nicha.dvornek;sekhar.tatikonda;james.duncan\}@yale.edu;}\\
  \texttt{ yf.ding@knights.ucf.edu;tommymt2@illinois.edu}
}
\begin{document}
\setlist[itemize]{leftmargin=5.5mm}
\maketitle

\begin{abstract}
We propose ACProp (Asynchronous-centering-Prop), an adaptive optimizer which combines centering of second momentum and asynchronous update (e.g. for $t$-th update, denominator uses information up to step $t-1$, while numerator uses gradient at $t$-th step). 
ACProp has both strong theoretical properties and empirical performance. 
With the example by Reddi et al. (2018), we show that asynchronous optimizers (e.g. AdaShift, ACProp) have weaker convergence condition than synchronous optimizers (e.g. Adam, RMSProp, AdaBelief); within asynchronous optimizers, 
we show that centering of second momentum further weakens the convergence condition. 
We demonstrate that ACProp has a convergence rate of $O(\frac{1}{\sqrt{T}})$ for the stochastic non-convex case, which matches the oracle rate and outperforms the $O(\frac{logT}{\sqrt{T}})$ rate of RMSProp and Adam. 
We validate ACProp in extensive empirical studies: ACProp outperforms both SGD and other adaptive optimizers in image classification with CNN, and outperforms well-tuned adaptive optimizers in the training of various GAN models, reinforcement learning and transformers. 
To sum up, ACProp has good theoretical properties including weak convergence condition and optimal convergence rate, and strong empirical performance including good generalization like SGD and training stability like Adam. We provide the implementation at \url{ https://github.com/juntang-zhuang/ACProp-Optimizer}.
\end{abstract}

\section{Introduction}
Deep neural networks are typically trained with first-order gradient optimizers due to their computational efficiency and good empirical performance \cite{sun2019optimization}. Current first-order gradient optimizers can be broadly categorized into the stochastic gradient descent (SGD) \cite{robbins1951stochastic} family and the adaptive family. The SGD family uses a global learning rate for all parameters, and includes variants such as Nesterov-accelerated SGD \cite{nesterov27method}, SGD with momentum \cite{sutskever2013importance} and the heavy-ball method \cite{polyak1964some}. Compared with the adaptive family, SGD optimizers typically generalize better but converge slower, and are the default for vision tasks such as image classification \cite{he2016deep}, object detection \cite{ren2015faster} and segmentation \cite{badrinarayanan2017segnet}.

The adaptive family uses element-wise learning rate, and the representatives include AdaGrad \cite{duchi2011adaptive}, AdaDelta \cite{zeiler2012adadelta}, RMSProp \cite{graves2013generating}, Adam \cite{kingma2014adam} and its variants such as AdamW \cite{loshchilov2017decoupled}, AMSGrad \cite{reddi2019convergence} AdaBound \cite{luo2019adaptive}, AdaShift \cite{zhou2018adashift}, Padam \cite{chen2018closing}, RAdam \cite{liu2019variance} and AdaBelief \cite{zhuang2020adabelief}. Compared with the SGD family, the adaptive optimizers typically converge faster and are more stable, hence are the default for generative adversarial networks (GANs) \cite{goodfellow2014generative}, transformers \cite{vaswani2017attention}, and deep reinforcement learning \cite{mnih2013playing}.

We broadly categorize adaptive optimizers according to different criteria, as in Table.~\ref{table:category}. (a) \textit{Centered v.s. uncentered} Most optimizers such as Adam and AdaDelta uses uncentered second momentum in the denominator; RMSProp-center \cite{graves2013generating}, SDProp \cite{ida2016adaptive} and AdaBelief \cite{zhuang2020adabelief} use square root of centered second momentum in the denominator. AdaBelief  \cite{zhuang2020adabelief} is shown to achieve good generalization like the SGD family, fast convergence like the adaptive family, and training stability in complex settings such as GANs. (b) \textit{Sync vs async} The synchronous optimizers typically use gradient $g_t$ in both numerator and denominator, which leads to correlation between numerator and denominator; most existing optimizers belong to this category. The asynchronous optimizers decorrelate numerator and denominator (e.g. by using $g_t$ as numerator and use $\{g_0,...g_{t-1}\}$ in denominator for the $t$-th update),  
and is shown to have weaker convergence conditions than synchronous optimizers\cite{zhou2018adashift}.

\begin{table}
\centering
\caption{Categories of adaptive optimizers}
\label{table:category}
\scalebox{0.9}{
\begin{tabular}{c|c|c}
\hline
             & Uncentered second momentum                       & Centered second momentum                   \\ \hline
Synchronous  & Adam , RAdam, AdaDelta, RMSProp, Padam & RMSProp-center, SDProp, AdaBelief \\ \hline
Asynchronous & AdaShift              & ACProp (ours)                    \\ \hline
\end{tabular}
}
\vspace{-2mm}
\end{table}

We propose Asynchronous Centering Prop (ACProp), which combines centering of second momentum with the asynchronous update. We show that ACProp has both good theoretical properties and strong empirical performance. Our contributions are summarized as below:
\begin{itemize}
\item \textbf{Convergence condition\ \ } \textit{(a) Async vs Sync} We show that for the example by Reddi et al. (2018), asynchronous optimizers (AdaShift, ACProp) converge for any valid hyper-parameters, while synchronous optimizers (Adam, RMSProp et al.) could diverge if the hyper-paramaters are not carefully chosen. \textit{(b) Async-Center vs Async-Uncenter} Within the asynchronous optimizers family, by example of an online convex problem with sparse gradients, we show that Async-Center (ACProp) has weaker conditions for convergence than Async-Uncenter (AdaShift). 
\item \textbf{Convergence rate\ \ } We demonstrate that ACProp achieves a convergence rate of $O(\frac{1}{\sqrt{T}})$ for stochastic non-convex problems, matching the oracle
of first-order optimizers \cite{arjevani2019lower}, and outperforms the $O(\frac{logT}{\sqrt{T}})$ rate of Adam and RMSProp.
\item \textbf{Empirical performance\ \ } We validate performance of ACProp in experiments: on image classification tasks, ACProp outperforms SGD and AdaBelief, and demonstrates good generalization performance; in experiments with transformer, reinforcement learning and various GAN models, ACProp outperforms well-tuned Adam, demonstrating high stability. ACProp often outperforms AdaBelief, and achieves good generalization like SGD and training stability like Adam.
\end{itemize}

\section{Overview of algorithms}
\subsection{Notations}
\begin{itemize}[leftmargin=2.1cm]
\item[$x, x_t \in \mathbb{R}^d$:] $x$ is a $d-$dimensional parameter to be optimized, and $x_t$ is the value at step $t$.
\item [$f(x), f^* \in \mathbb{R}$:] $f(x)$ is the scalar-valued function to be minimized, with optimal (minimal) $f^*$.
\item[$ \alpha_t, \epsilon \in \mathbb{R}$:] $\alpha_t$ is the learning rate at step $t$. $\epsilon$ is a small number to avoid division by 0.
\item[$g_t \in \mathbb{R}^d$:] The noisy observation of gradient $\nabla f(x_t)$ at step $t$.
\item[$\beta_1, \beta_2 \in \mathbb{R}$:] Constants for exponential moving average, $0\leq \beta_1, \beta_2 < 1$. 
\item[$m_t \in \mathbb{R}^d$:] $m_t = \beta_1 m_{t-1} + (1-\beta_1) g_t$. The Exponential Moving Average (EMA) of observed gradient at step $t$. 
\item[$\Delta g_t \in \mathbb{R}^d$:] $\Delta g_t = g_t - m_{t}$. The difference between observed gradient $g_t$ and EMA of $g_t$.
\item[$v_t \in \mathbb{R}^d$:] $v_t = \beta_2 v_{t-1} + (1 - \beta_2) g_t^2$. The EMA of $g_t^2$.
\item[$s_t \in \mathbb{R}^d$:] $s_t = \beta_2 s_{t-1} + (1-\beta_2) (\Delta g_t)^2$. The EMA of $(\Delta g_t)^2$.
\end{itemize}
\begin{figure*}
\begin{minipage}{0.49\textwidth}
\begin{algorithm}[H]
\label{algo:adabelief}
\SetAlgorithmName{Algorithm }{} \\
\textbf{Initialize} $x_0$, $m_0 \leftarrow 0$ , $s_0 \leftarrow 0$, $t \leftarrow 0$ \\
\textbf{While} $x_t$ not converged \\
\hspace{8mm} $t \leftarrow t + 1 $ \\
\hspace{8mm} $g_t \leftarrow \nabla_{x}f_t(x_{t-1})$ \\
\hspace{8mm} $m_t \leftarrow \beta_1 m_{t-1} + (1 - \beta_1) g_t$ \\
\hspace{8mm} $s_t \leftarrow  \beta_2 s_{t-1}{+}(1 {-}\beta_2){\color{black}(g_t {-} m_t)^2 }$ \\
\hspace{8mm} ${ \color{blue} x_t \leftarrow  \prod_{\mathcal{F},\sqrt{ s_t}} \Big( x_{t-1} - \frac{\alpha } { \sqrt{{\color{blue} s_t} + {\color{blue} \epsilon}} } m_t \Big)}$
\caption{AdaBelief}
\end{algorithm}
\end{minipage}
\hfill
\begin{minipage}{0.49\textwidth}
\begin{algorithm}[H]
\label{algo:bgrad}
\SetAlgorithmName{Algorithm }{} \\
\textbf{Initialize} $x_0$, $m_0 \leftarrow 0$ , $s_0 \leftarrow 0$, $t \leftarrow 0$ \\
\textbf{While} $x_t$ not converged \\
\hspace{8mm} $t \leftarrow t + 1 $ \\
\hspace{8mm} $g_t \leftarrow \nabla_{x}f_t(x_{t-1})$ \\
\hspace{8mm} $m_t \leftarrow \beta_1 m_{t-1} + (1 - \beta_1) g_t$ \\
\hspace{8mm} ${ \color{blue} x_t \leftarrow  \prod_{\mathcal{F},\sqrt{ s_{t-1}}} \Big( x_{t-1} - \frac{\alpha} { \sqrt{{\color{blue} s_{t-1}} + {\color{blue} \epsilon} }} g_t \Big)}$ \\
\hspace{8mm} $s_t \leftarrow \beta_2 s_{t-1}{+}(1 {-}\beta_2){\color{black}(g_t {-} m_t)^2 }$ \\
\caption{ACProp}
\end{algorithm}
\end{minipage}
\end{figure*}
\subsection{Algorithms}
In this section, we summarize the AdaBelief \cite{zhuang2020adabelief} method in Algo.~\ref{algo:adabelief} and ACProp in Algo.~\ref{algo:bgrad}. For the ease of notations, all operations in Algo.~\ref{algo:adabelief} and Algo.~\ref{algo:bgrad} are element-wise, and we omit the bias-correction step of $m_t$ and $s_t$ for simplicity. $\Pi_\mathcal{F}$ represents the projection onto feasible set $\mathcal{F}$. 

We first introduce the notion of ``sync (async)'' and ``center (uncenter)''. \textit{(a) Sync vs Async} The update on parameter $x_t$ can be generally split into a numerator (e.g. $m_t, g_t$) and a denominator (e.g. $\sqrt{s_t}, \sqrt{v_t}$). We call it ``sync'' if the denominator depends on $g_t$, such as in Adam and RMSProp; and call it ``async'' if the denominator is independent of $g_t$, for example, denominator uses information up to step $t-1$ for the $t$-th step. \textit{(b) Center vs Uncenter} The ``uncentered'' update uses $v_t$, the exponential moving average (EMA) of $g_t^2$; while the ``centered'' update uses $s_t$, the EMA of $(g_t-m_t)^2$.

\textbf{Adam (Sync-Uncenter)}\ \ The Adam optimizer \cite{kingma2014adam} stores the EMA of the gradient in $m_t$, and stores the EMA of $g_t^2$ in $v_t$. For each step of the update, Adam performs element-wise division between $m_t$ and $\sqrt{v_t}$. Therefore, the term $\alpha_t \frac{1}{\sqrt{v_t}}$ can be viewed as the element-wise learning rate. Note that $\beta_1$ and $\beta_2$ are two scalars controlling the smoothness of the EMA for the first and second moment, respectively. When $\beta_1=0$, Adam reduces to RMSProp \cite{hintonrmsprop}. 

\textbf{AdaBelief (Sync-Center)}\ \ AdaBelief optimizer \cite{zhuang2020adabelief} is summarized in Algo.~\ref{algo:adabelief}. Compared with Adam, the key difference is that it replaces the uncentered second moment $v_t$ (EMA of $g_t^2$) by an estimate of the centered second moment $s_t$ (EMA of $(g_t-m_t)^2$). The intuition is to view $m_t$ as an estimate of the expected gradient: if the observation $g_t$ deviates much from the prediction $m_t$, then it takes a small step; if the observation $g_t$ is close to the  prediction $m_t$, then it takes a large step. 

\textbf{AdaShift (Async-Uncenter)}
AdaShift \cite{zhou2018adashift} performs temporal decorrelation between numerator and denominator. It uses information of $\{g_{t-n},...g_t\}$ for the numerator, and uses $\{g_{0},...g_{t-n-1}\}$ for the denominator, where $n$ is the ``delay step'' controlling where to split sequence $\{g_i\}_{i=0}^t$. The numerator is independent of denominator because each $g_i$ is only used in either numerator or denominator. 

\textbf{ACProp (Async-Center)}\ \ Our proposed ACProp is the asynchronous version of AdaBelief and is summarized in Algo.~\ref{algo:bgrad}. Compared to AdaBelief, the key difference is that ACProp uses $s_{t-1}$ in the denominator for step $t$, while AdaBelief uses $s_t$. Note that $s_t$ depends on $g_t$, while $s_{t-1}$ uses history up to step $t-1$. This modification is important to ensure that $\mathbb{E}(g_t / \sqrt{s_{t-1}} \vert g_0, ... g_{t-1}) = (\mathbb{E} g_t) /\sqrt{s_{t-1}}$. It's also possible to use a delay step larger than 1 similar to AdaShift, for example, use $EMA(\{g_i\}_{i=t-n}^t)$ as numerator, and $EMA(\{(g_i-m_i)^2\}_{i=0}^{t-n-1})$ for denominator.

\begin{figure}
    \centering
    \includegraphics[width=0.85\textwidth]{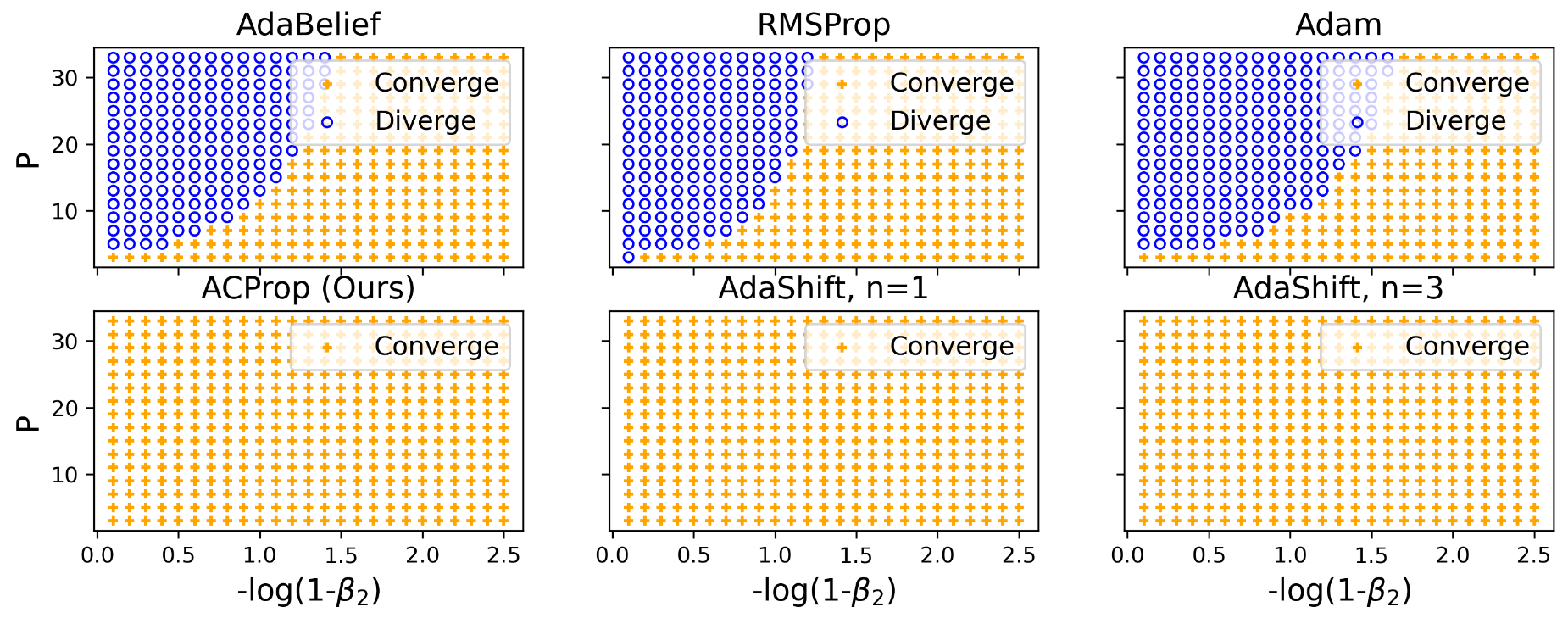}
    \vspace{-1mm}
    \caption{\small Numerical results for the example defined by Eq.~\eqref{eq:counter}. We set the initial value as $x_0=0$, and run each optimizer for $10^4$ steps trying different initial learning rates in $\{10^{-5},10^{-4},10^{-3},10^{-2},10^{-1},1.0\}$, and set the learning rate decays with $1/\sqrt{t}$. If there's a proper initial learning rate, such that the average distance between the parameter and its optimal value $x^*=-1$ for the last 1000 steps is below 0.01, then it's marked as ``converge" (orange plus symbol), otherwise as ``diverge'' (blue circle). For each optimizer, we sweep through different $\beta_2$ values in a log grid ($x$-axis), and sweep through different values of $P$ in the definition of problem ($y$-axis). We plot the result for $\beta_1=0.9$ here; for results with different $\beta_1$ values, please refer to appendix. Our results indicate that in the $(P,\beta_2)$ plane, there's a threshold curve beyond which sync-optimizers (Adam, RMSProp, AdaBelief) will diverge; however, async-optimizers (ACProp, AdaShift) always converge for any point in the $(P,\beta_2)$ plane. Note that for AdaShift, a larger delay step $n$ is possible to cause divergence (see example in Fig.~\ref{fig:toy1_sanity} with $n=10$). To validate that the ``divergence'' is not due to numerical issues and sync-optimizers are drifting away from optimal, we plot trajectories in Fig.~\ref{fig:toy1_sanity}
    }
    \label{fig:converge_area}
    \vspace{-3mm}
\end{figure}
\section{Analyze the conditions for convergence}
\label{sec:converge_area}
We analyze the convergence conditions for different methods in this section. We first analyze the counter example by Reddi et al. (2018) and show that async-optimizers (AdaShift, ACProp) always converge $\forall \beta_1, \beta_2 \in (0,1)$, while sync-optimizers (Adam, AdaBelief, RMSProp et al.) would diverge if $(\beta_1, \beta_2)$ are not carefully chosen; hence, async-optimizers have weaker convergence conditions than sync-optimizers. Next, we compare async-uncenter (AdaShift) with async-center (ACProp) and show that momentum centering further weakens the convergence condition for sparse-gradient problems. Therefore, ACProp has weaker convergence conditions than AdaShift and other sync-optimizers.
\subsection{\textit{Sync vs Async}}
We show that for the example in \cite{reddi2019convergence}, async-optimizers (ACProp, AdaShift) have weaker convergence conditions than sync-optimizers (Adam, RMSProp, AdaBelief).
\begin{lemma}[Thm.1 in \cite{reddi2019convergence}]
\label{thm:counter1}
There exists an online convex optimization problem where sync-optimizers (e.g. Adam, RMSProp) have non-zero average regret, and one example is 
\begin{equation}
    f_t(x) = 
    \begin{cases}
    Px,&\ \textit{if \ \ } t \% P = 1 \\
    -x, &\ \textit{Otherwise} \\
    \end{cases}
    x \in [-1,1], P \in \mathbb{N}, P \geq 3
    \label{eq:counter}
\end{equation}
\end{lemma}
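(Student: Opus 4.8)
The plan is to name the hindsight comparator and then show that the synchronous iterates are driven toward the \emph{wrong} endpoint of $[-1,1]$, so that the regret grows linearly in $T$.

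\textbf{Step 1 (the hindsight optimum).} Over any block of $P$ consecutive rounds the subgradients of $f_t$ sum to $P+(P-1)(-1)=1>0$, so for $T$ a multiple of $P$ one has $\sum_{t=1}^T f_t(x)=(T/P)\,x$ on $x\in[-1,1]$, which is minimized at $x^\star=-1$ with value $-T/P$. Hence $R_T=\sum_{t=1}^T f_t(x_t)+T/P$, and it suffices to show the iterates \emph{do not} converge to $-1$; quantitatively, that $\tfrac1T\sum_{t=1}^T f_t(x_t)\ge -1/P+c$ for some constant $c>0$, i.e.\ that $x_t$ stays bounded away from $-1$ on a positive fraction of rounds.

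\textbf{Step 2 (tracking the denominator over one period).} I would first take $\beta_1=0$ (RMSProp), since momentum only smooths the spike and is dealt with afterwards by additionally carrying $m_t$. On the single round with subgradient $P$, the update $v_t\leftarrow\beta_2 v_{t-1}+(1-\beta_2)P^2$ jumps to order $(1-\beta_2)P^2$ plus decayed history, so the ``correct'' downward step has size only $\Theta(\alpha_t/\sqrt{1-\beta_2})$, which is \emph{independent of} $P$. On each of the following $P-1$ rounds the subgradient is $-1$ and $v_t$ relaxes geometrically toward $1$, so all but a bounded number of those rounds produce an \emph{upward} step of size $\Theta(\alpha_t)$. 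Summing the signed per-round displacements over one period therefore gives, up to lower-order terms, $\alpha_t\bigl(-P/\sqrt{v_{\mathrm{spike}}}+\sum(\text{upward terms})\bigr)$, and this is \emph{positive} once $P$ is large enough relative to $\beta_2$ — precisely the threshold phenomenon visible in Fig.~\ref{fig:converge_area} — because the $\Theta(P)$ cheap upward steps outweigh the single $\Theta(1)$ corrective step.

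\textbf{Step 3 (from drift to linear regret).} The persistent per-period upward drift, together with the projection onto $[-1,1]$, pins $x_t$ in an $O(\alpha_t)$-neighborhood of $+1$ after finitely many periods. On the spike rounds $f_t(x_t)=Px_t\approx P$ and on the remaining rounds $f_t(x_t)=-x_t\approx-1$, so $\sum_{t=1}^T f_t(x_t)\approx T-(P-1)T/P=T/P$, whence $R_T\approx 2T/P$ and $R_T/T\not\to 0$.

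\textbf{Main obstacle.} The delicate part is making Steps 2--3 rigorous, since $v_t$ (and, for $\beta_1>0$, $m_t$) are not constant but decay geometrically, and one must absorb the initialization/bias-correction transient and control the boundary projection. The standard devices are to choose $\beta_2$ as a convenient function of $P$ so that decayed history is negligible (or to carry explicit geometric-series bounds), and to verify the net upward displacement per period dominates uniformly in $t$ along the schedule $\alpha_t\propto 1/\sqrt{t}$. For $\beta_1>0$ the extra work is checking that the averaged momentum $m_t$ still carries a net upward bias over each period, which holds because the $P$-spike is diluted across the subsequent $-1$ rounds; this is exactly the regime $\beta_1<\sqrt{\beta_2}$ in which the result of Reddi et al.\ (2018) applies.
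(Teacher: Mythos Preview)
The paper does not actually prove this lemma: its ``proof'' in the appendix consists solely of the sentence ``See \cite{reddi2019convergence} Thm.~1 for proof.'' So there is no in-paper argument to compare against; you have supplied substantially more than the paper itself.

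Your sketch is the right one and is essentially the strategy of Reddi et al.\ (2018): identify that the single corrective step on the spike round has magnitude $\alpha_t P/\sqrt{v_{\mathrm{spike}}}=\Theta(\alpha_t/\sqrt{1-\beta_2})$ because the synchronous denominator absorbs the spike, while the $P-1$ wrong-direction steps are each $\Theta(\alpha_t)$, so the net per-period displacement is toward $+1$ once $P$ is large enough for fixed $(\beta_1,\beta_2)$; then convert persistent drift into linear regret. Your identification of the ``main obstacle'' is also accurate: the rigorous version requires explicit geometric-series bookkeeping for $v_t$ (and $m_t$ when $\beta_1>0$), control of the projection, and the restriction $\beta_1<\sqrt{\beta_2}$. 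All of this is carried out in full in \cite{reddi2019convergence}, which is why the present paper simply cites it.
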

\begin{lemma}[\cite{shi2021rmsprop}]
\label{thm:counter2}
For problem (1) with any fixed $P$, there's a threshold of $\beta_2$ above which RMSProp converges.
\end{lemma}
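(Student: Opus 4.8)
The plan is to exploit the $P$-periodic structure of problem~\eqref{eq:counter} together with the elementary fact that RMSProp degenerates into projected SGD as $\beta_2\to 1^-$, and that SGD succeeds here because the average gradient points toward the minimizer. Write $f_t(x)=c_t x$, so $g_t=c_t$, where $c_t$ is $P$-periodic with $c_t=P$ when $t\equiv 1\pmod P$ and $c_t=-1$ otherwise; over one full period the coefficients sum to $P+(P-1)(-1)=1>0$, hence $\sum_{t=1}^{T}f_t$ is, up to an $O(1)$ boundary term, minimized at $x^\ast=-1$ with value $-\lfloor T/P\rfloor$. With $\alpha_t=\alpha/\sqrt t$, the RMSProp iteration reads $x_t=\Pi_{[-1,1]}\big(x_{t-1}-\eta_t g_t\big)$, $\eta_t=\alpha_t/(\sqrt{v_t}+\epsilon)$, $v_t=\beta_2 v_{t-1}+(1-\beta_2)g_t^2$.

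\emph{Steady state of the denominator, and the per-period drift.} Since $g_t^2$ is $P$-periodic (equal to $P^2$ once and to $1$ the other $P-1$ times per period), $v_t$ is an exponential average of numbers in $[1,P^2]$, so $v_t\in[1,P^2]$ and $\eta_t=\Theta(\alpha_t)$ uniformly; moreover the map carrying $v$ at the start of one period to $v$ at the start of the next is a contraction with factor $\beta_2^{\,P}$, so $v_t$ converges geometrically to the unique $P$-periodic orbit $\tilde v_1,\dots,\tilde v_P$. Solving the linear recursion gives $\tilde v_1=\big[(1-\beta_2)P^2+\beta_2(1-\beta_2^{\,P-1})\big]/(1-\beta_2^{\,P})$ and $\tilde v_j=\beta_2^{\,j-1}\tilde v_1+(1-\beta_2^{\,j-1})$ for $2\le j\le P$, and expanding in $\delta:=1-\beta_2$ shows $\tilde v_j\to\bar v:=(P^2+P-1)/P$ for every $j$ as $\delta\to 0^+$. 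Now set $S(\beta_2):=\sum_{j=1}^{P}g_j/\sqrt{\tilde v_j}=P/\sqrt{\tilde v_1}-\sum_{j=2}^{P}1/\sqrt{\tilde v_j}$; then $S(\beta_2)\to (P-(P-1))/\sqrt{\bar v}=1/\sqrt{\bar v}>0$ as $\beta_2\to1^-$, and $S$ is continuous on $(0,1)$, so there exist $\beta_2^\ast(P)<1$ and $c(P)>0$ with $S(\beta_2)\ge c(P)$ for $\beta_2>\beta_2^\ast(P)$. The meaning is that for large $\beta_2$ the denominator is nearly constant over a period, so RMSProp displaces $x$ by $\approx-\alpha_t S(\beta_2)$ per period, toward $x^\ast=-1$; for small $\beta_2$ the denominator is freshly inflated exactly at the large-gradient step, which kills that term and flips the sign of the drift — this is the Reddi et al. divergence (Lemma~\ref{thm:counter1}).

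\emph{From drift to convergence.} Fix $\beta_2>\beta_2^\ast(P)$ and pick $T_0$ beyond which $v_t$ is within a small tolerance of the periodic orbit and $\alpha_t$ varies by a factor $1+o(1)$ over any single period. Summing the updates across the period starting at $t\ge T_0$, the net change in $x$ is at most $-\tfrac12 c(P)\,\alpha_t<0$ whenever $x$ is not clamped at $-1$, while the largest upward excursion inside a period is $\sum_{j=2}^{P}\alpha_t/\sqrt{\tilde v_j}=O(\alpha_t)$; and $S(\beta_2)\ge0$ is precisely the condition under which the large-gradient step re-clamps $x$ to $-1$ in every period once $x$ lies in that $O(\alpha_t)$ band above $-1$. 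Because $\sum_t\alpha_t=\infty$, the accumulated drift drives $x_t$ into the band and it never leaves, so $|x_t+1|=O(1/\sqrt t)$. Finally $f_t(x_t)-f_t(x^\ast)=g_t(x_t+1)$ with $|g_t|\le P$, hence $R_T\le O(1)+P\sum_{t\le T}O(1/\sqrt t)=O(P\sqrt T)$, i.e. $R_T/T\to0$ (and $x_t\to-1$): RMSProp converges whenever $\beta_2>\beta_2^\ast(P)$.

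\emph{Main obstacle.} The delicate step is the last one: upgrading ``the average drift points the right way'' to genuine convergence of the iterates requires controlling, at the same time, the geometric relaxation of $v_t$ to its periodic orbit, the $1/\sqrt t$ decay of the step size (whose sum must still diverge, which is what makes the drift bite), and the projection at $x=-1$; in particular one must show the within-period oscillation is $o(1)$ so that $x_t$ really sticks near $-1$ rather than merely being pushed toward it. Extracting an explicit $\beta_2^\ast(P)$ and verifying $\beta_2^\ast(P)\to1$ as $P\to\infty$ (matching Fig.~\ref{fig:converge_area}) also takes care in the first-order expansions above, but the existence statement needs only the limit of $S$ and its continuity on $(0,1)$.
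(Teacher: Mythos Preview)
The paper does not actually prove this lemma: in both the main text and the appendix it simply defers to the cited reference \cite{shi2021rmsprop}. Your write-up therefore supplies far more than the paper itself, and the argument you sketch is essentially correct.

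Your route---compute the unique $P$-periodic orbit $(\tilde v_1,\dots,\tilde v_P)$ of the denominator recursion, form the signed per-period drift $S(\beta_2)=P/\sqrt{\tilde v_1}-\sum_{j\ge 2}1/\sqrt{\tilde v_j}$, observe that $S(\beta_2)\to 1/\sqrt{\bar v}>0$ as $\beta_2\to 1^-$, and invoke continuity on $(0,1)$ to obtain a threshold $\beta_2^\ast(P)$---is the natural one, and the explicit formula you derive for $\tilde v_1$ together with the limit $\bar v=(P^2+P-1)/P$ check out. The second half, upgrading ``positive per-period drift plus $\sum_t\alpha_t=\infty$'' into $x_t\to -1$ and $R_T/T\to 0$, is indeed the place where care is needed, and you have the right ingredients: the geometric relaxation of $v_t$ to its periodic orbit contributes a summable transient, the within-period variation of $\alpha_t=\alpha/\sqrt t$ is $O(\alpha_t/t)$ and hence also summable, and once $x_t$ is near $-1$ the projection together with $S(\beta_2)>0$ traps it in an $O(\alpha_t)$ band, yielding $R_T=O(\sqrt T)$. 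One small point worth tightening: the claim $v_t\in[1,P^2]$ can fail for the first few iterates when $v_0=0$, but the additive $\epsilon$ in the denominator absorbs this and the bound holds once $t$ is moderate, so nothing in the argument is affected. What your self-contained treatment buys over the paper's bare citation is an explicit mechanism---the denominator flattening out to a constant as $\beta_2\to1$, so that RMSProp degenerates to projected SGD, which succeeds because the period-averaged gradient is strictly positive---and this is exactly the intuition the paper tries to convey with Fig.~\ref{fig:converge_area}.
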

\begin{wrapfigure}{l}{0.5\linewidth}
\includegraphics[width=0.9\linewidth]{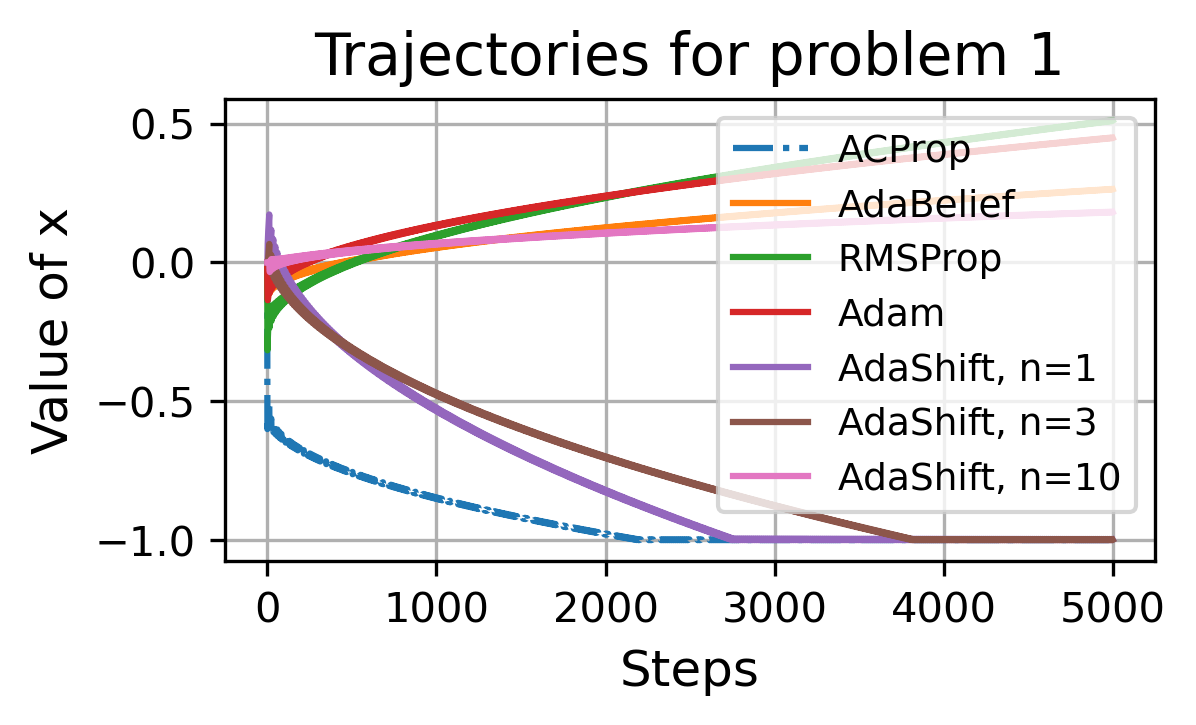}
\caption{\small Trajectories of $x$ for different optimizers in Problem by Eq.~\ref{eq:counter}. Initial point is $x_0=0$, the optimal is $x^*=-1$, the trajectories show that sync-optimizers (Adam, AdaBelief, RMSProp) diverge from the optimal, validating the divergent area in Fig.~\ref{fig:converge_area} is correct rather than artifacts of numerical issues. Async-optimizers (ACProp, AdaShift) converge to optimal value, but large delay step $n$ in AdaShift could cause non-convergence.}
\label{fig:toy1_sanity}
\vspace{-5mm}
\end{wrapfigure}
In order to better explain the two lemmas above, we conduct numerical experiments on the problem by Eq.~\eqref{eq:counter}, and show results in Fig.~\ref{fig:converge_area}. Note that $\sum_{t=k}^{k+P}f_t(x)=x$, hence the optimal point is $x^*=-1$ since $x \in [-1,1]$. Starting from initial value $x_0=0$, 
we sweep through the plane of $(P,\beta_2)$ and plot results of convergence in Fig.~\ref{fig:converge_area}, and plot example trajectories in Fig.~\ref{fig:toy1_sanity}.

Lemma.~\ref{thm:counter1} tells half of the story: looking at each vertical line in the subfigure of Fig.~\ref{fig:converge_area}, that is, for each fixed hyper-parameter $\beta_2$, there exists sufficiently large $P$ such that Adam (and RMSProp) would diverge. Lemma.~\ref{thm:counter2} tells the other half of the story: looking at each horizontal line in the subfigure of Fig.~\ref{fig:converge_area},  for each problem with a fixed period $P$, there exists sufficiently large $\beta_2$s beyond which Adam can converge. 

The complete story is to look at the $(P,\beta_2)$ plane in Fig.~\ref{fig:converge_area}. There is a boundary between convergence and divergence area for sync-optimizers (Adam, RMSProp, AdaBelief), while async-optimizers (ACProp, AdaShift) always converge. 

\begin{lemma}
\label{thm:beliefgrad_always_converge}
For the problem defined by Eq.~\eqref{eq:counter}, using learning rate schedule of $\alpha_t = \frac{\alpha_0}{\sqrt{t}}$, async-optimizers (ACProp and AdaShift with $n=1$) always converge $\forall \beta_1, \beta_2 \in (0,1), \forall P \in \mathbb{N},  P \geq 3$.
\end{lemma}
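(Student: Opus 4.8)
Proof proposal. The plan is to use that, for the problem in \eqref{eq:counter}, each $f_t$ is linear, so the gradient $g_t=\nabla f_t(x_{t-1})$ equals $P$ whenever $t\%P=1$ and $-1$ otherwise, \emph{independently of the iterate}: the driving sequence is deterministic and $P$-periodic. The projection keeps $x_t\in[-1,1]$ and $\alpha_t=\alpha_0/\sqrt t\to 0$, so it suffices to show that, after a finite transient, over every window of $P$ consecutive steps (one big-gradient step followed by $P-1$ small-gradient steps) the net displacement of $x$ is strictly toward $x^*=-1$ with a margin of order $\alpha_t$; convergence $x_t\to -1$ (hence vanishing average regret) then follows because $\sum_t\alpha_t=\infty$ and the lower boundary absorbs the residual.

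First I would analyze the denominator in the periodic steady state. For ACProp (Algo.~\ref{algo:bgrad}) the stride at time $t$ is $\alpha_t/\sqrt{s_{t-1}+\epsilon}$ with $s_t=\beta_2 s_{t-1}+(1-\beta_2)(g_t-m_t)^2$, and since the gradients are $P$-periodic, $m_t$ and $s_t$ converge geometrically to $P$-periodic limits. Let $D_1,\dots,D_P$ be the steady-state values of $s$ at the $P$ phases of a period, $D_1$ being the value just after the big gradient; one checks $1+m_p=\beta_1^{\,p-1}(1+m_1)$ along the $P-1$ small-gradient phases, with $1+m_1=(1-\beta_1)(P+1)/(1-\beta_1^{P})\in(1,P+1]$, so the injected terms $(1+m_p)^2$ decay geometrically, and (as I verify below) the denominator \emph{used at the big step} --- the lagged value $D_P$, sitting $P-1$ decay steps after the previous spike --- is the smallest of $D_1,\dots,D_P$. (For AdaShift with $n{=}1$ the injected term is the constant $1$, and this reduces to the one-line bound $D_p>1$ for all $p$, equivalent to $(1-\beta_2)P^2>1-\beta_2$, i.e.\ $P>1$.) Consequently, for $t$ large (so $\alpha_t$ is essentially constant over a window), the net displacement over one period is
\[
\Delta x=-\alpha_t(1+o(1))\Big(\tfrac{P}{\sqrt{D_P+\epsilon}}-\sum_{p=1}^{P-1}\tfrac{1}{\sqrt{D_p+\epsilon}}\Big)\le -\alpha_t(1+o(1))\,\tfrac{P-(P-1)}{\sqrt{D_P+\epsilon}}\le -c\,\alpha_t ,
\]
with $c=(P+1+\sqrt\epsilon)^{-1}>0$ because the $s$-recursion keeps $s_t\le(P+1)^2$; thus the big (downward) step outweighs the $P-1$ small (upward) steps by a fixed margin.

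Then I would assemble the pieces. While $x$ stays in the interior it moves toward $-1$ by at least $c\alpha_t(1+o(1))$ per period, and since $\sum_t\alpha_t=\infty$ it cannot remain bounded away from $-1$; so $x$ enters an $O(\alpha_t)$-neighborhood of the boundary. Once there, within a period the $P-1$ small steps raise $x$ by at most $\sum_{p=1}^{P-1}\alpha_t/\sqrt{D_p+\epsilon}=O(\alpha_t)$, while the next big step lowers $x$ by $\alpha_t P/\sqrt{D_P+\epsilon}$ and is clipped at $-1$; by the strict inequality above the big step cancels the accumulated rise, re-pinning $x$ to within $O(\alpha_t)$ of $-1$. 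Since $\alpha_t\to 0$ this yields $x_t\to x^*=-1$ and hence vanishing (in fact $O(1/\sqrt T)$) average regret; the same argument applies verbatim to AdaShift with $n=1$.

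The main obstacle is the verification of the ordering of the denominators in the centered case: unlike AdaShift, where the injected term is constant and monotonicity of $D_1,\dots,D_P$ follows by induction, for ACProp $s_t$ is coupled to $m_t$ through the full periodic linear recursion, and one must show that the lagged denominator $D_P$ is the smallest of $D_1,\dots,D_P$ --- equivalently, that the smallness of the tiny injected term $(1+m_P)^2$ is not outweighed by its being the most recent one. The clean route is to close the $P{\times}P$ steady-state system for $(D_1,\dots,D_P)$ in terms of $D_1$ and the explicit geometric sequence $1+m_p=\beta_1^{\,p-1}(1+m_1)$, and verify $D_P\le D_p$ on the resulting closed form; I expect this to require splitting into a few cases according to the relative size of $\beta_1^2$ and $\beta_2$. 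The remaining bookkeeping --- the finite transient before the steady state is reached (where the first stride $\alpha_0/\sqrt\epsilon$ may be large, but is harmless since the projection bounds $x$ and the transient is finite) and the slowly varying $\alpha_t$ within a window --- is routine.
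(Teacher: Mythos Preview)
Your proposal is correct and follows essentially the same route as the paper's proof: both exploit the $P$-periodicity of the gradient sequence to pass to a steady-state analysis, identify that the (lagged) denominator used at the large-gradient step is the minimum over the period (so the big step dominates the $P-1$ small steps by a margin of order $\alpha_t/\sqrt{S_{kP}}$), and then conclude via $\sum_t \alpha_t=\infty$ together with the projection onto $[-1,1]$. The paper's appendix likewise derives the closed-form limits of $m_{kP}$ and $S_{kP}$ and then asserts---with the same intuitive ``farthest from the last spike'' justification you sketch---that $S_{kP}$ is the period minimum, so the step you correctly flag as the main obstacle is precisely the point where the paper's own argument is least explicit.
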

The proof is in the appendix. Note that for AdaShift, proof for the always-convergence property only holds when $n=1$; larger $n$ could cause divergence (e.g. $n=10$ causes divergence as in Fig.~\ref{fig:toy1_sanity}). The always-convergence property of ACProp and AdaShift comes from the un-biased stepsize, while the stepsize for sync-optimizers are biased due to correlation between numerator and denominator. Taking RMSProp as example of sync-optimizer, the update is $-\alpha_t \frac{g_t}{\sqrt{v_t}}=-\alpha_t \frac{g_t}{ \sqrt{ \beta_2^{t} g_0^2 + ... + \beta_2 g_{t-1}^2 +g_t^2}}$. Note that $g_t$ is used both in the numerator and denominator, hence a large $g_t$ does not necessarily generate a large stepsize. For the example in Eq.~\eqref{eq:counter}, the optimizer observes a gradient of $-1$ for $P-1$ times and a gradient of $P$ once; due to the biased stepsize in sync-optimizers, the gradient of $P$ does not generate a sufficiently large stepsize to compensate for the effect of wrong gradients $-1$, hence cause non-convergence. For async-optimizers, $g_t$ is not used in the denominator, therefore, the stepsize is not biased and async-optimizers has the always-convergence property.

\textbf{Remark} Reddi et al. (2018) proposed AMSGrad to track the element-wise maximum of $v_t$ in order to achieve the always-convergence property. However, tracking the maximum in the denominator will in general generate a small stepsize, which often harms empirical performance. We demonstrate this through experiments in later sections in Fig.~\ref{fig:mnist}.
\begin{figure}
    \centering
    \includegraphics[width=0.9\linewidth]{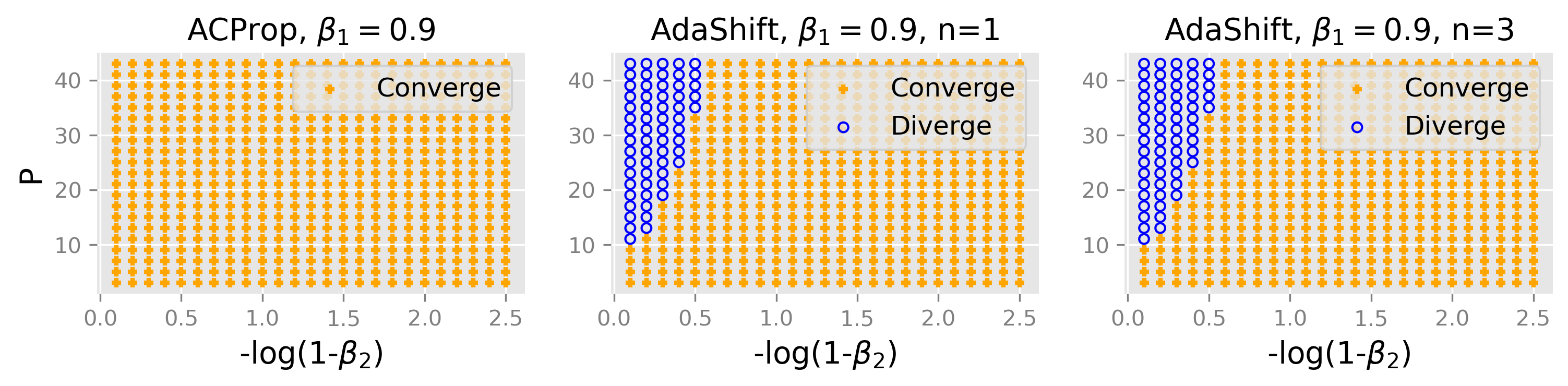}
    \vspace{-2mm}
    \caption{\small Area of convergence for the problem in Eq.~\eqref{eq:counter_sparse}. The numerical experiment is performed under the same setting as in Fig.~\ref{fig:converge_area}.Our results experimentally validated the claim that compared with async-uncenter (AdaShift), async-center (ACProp) has a larger convergence area in the hyper-parameter space.}
    \label{fig:toy2_area}
    \vspace{-4mm}
\end{figure}

\subsection{\textit{Async-Uncenter vs Async-Center}}
In the last section, we demonstrated that async-optimizers have weaker convergence conditions than sync-optimizers. In this section, within the async-optimizer family, we analyze the effect of centering second momentum. We show that compared with async-uncenter (AdaShift), async-center (ACProp) has weaker convergence conditions.
We consider the following online convex problem:
\begin{equation}
\scalebox{0.9}{$
    f_t(x) = 
    \begin{cases}
    P/2 \times x, \ \ \ t \% P ==1 \\
    -x, \ \ \ \ \ \ \ \ \ \ \  t \% P ==P-2 \\
    0, \ \ \ \ \ \ \ \ \ \ \ \ \ \ \textit{otherwise} \\
    \end{cases}
     P>3, P \in \mathbb{N}, x \in [0,1].
     $}
     \label{eq:counter_sparse}
\end{equation}
Initial point is $x_0=0.5$. Optimal point is $x^*=0$. We have the following results:
\begin{lemma}
\label{lemma:converge_sparse}
For the problem defined by Eq.~\eqref{eq:counter_sparse}, consider the hyper-parameter tuple $(\beta_1, \beta_2, P)$, there exists cases where ACProp converges but AdaShift with $n=1$ diverges, but not vice versa.
\end{lemma}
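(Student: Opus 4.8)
The plan is to analyze the dynamics of ACProp and AdaShift ($n=1$) on the three-phase sparse-gradient problem in Eq.~\eqref{eq:counter_sparse} and exhibit an explicit hyper-parameter tuple $(\beta_1,\beta_2,P)$ that separates the two. First I would set up the per-period accounting: within each period of length $P$ the optimizer sees a single large positive gradient $g=P/2$ at step $t\%P=1$, a single negative gradient $g=-1$ at step $t\%P=P-2$, and zeros everywhere else. Since $x^*=0$ and $x\in[0,1]$, convergence requires that the net drift over a period be negative (push $x$ toward $0$), so the positive-gradient step must not be over-weighted relative to the negative-gradient step. The key quantity is the effective stepsize applied to each nonzero gradient, i.e. $g_t/\sqrt{s_{t-1}+\epsilon}$, and the whole argument reduces to comparing the denominators $\sqrt{s_{t-1}}$ that ACProp uses against those AdaShift uses at the two critical steps.

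Next I would compute these denominators. For ACProp, $s_t$ is the EMA of $(g_t-m_t)^2$; on the long run of zeros between a $-1$ and the next $P/2$, both $g_t$ and $m_t$ decay, so $(g_t-m_t)^2$ is small and $s_{t-1}$ at the big-gradient step $t\%P=1$ is dominated by how recently the $P/2$ and $-1$ spikes occurred, decayed by powers of $\beta_2$. The centering matters because right after the $P/2$ spike, $m_t$ is large and positive, so $(g_t-m_t)^2$ at the following zero steps is initially $\approx m_t^2$ — centering actually \emph{inflates} $s$ in the steps following a spike but \emph{deflates} it once $m_t$ has decayed, whereas AdaShift's uncentered $v$ simply decays $(P/2)^2\beta_2^k$ and $1\cdot\beta_2^k$ monotonically. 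I would make this precise by writing closed forms (geometric sums) for $s_{t-1}$ and $v_{t-1}$ at $t\%P=1$ and at $t\%P=P-2$ as functions of $\beta_1,\beta_2,P$, using the fact that after one full period the system reaches a periodic steady state (a fixed point of the period map on $(m,s)$ or $(m,v)$), which exists and is easy to characterize because the updates are linear in the moments between spikes.

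The decisive step is then to write the steady-state net displacement over one period, $\Delta x_{\text{period}} = -\alpha\big(\tfrac{P/2}{\sqrt{s^{(1)}}} - \tfrac{1}{\sqrt{s^{(P-2)}}}\big) + o(\cdot)$ (suppressing $\epsilon$ and learning-rate-decay bookkeeping, which only contributes a vanishing correction under $\alpha_t=\alpha_0/\sqrt t$ exactly as in the proof of Lemma~\ref{thm:beliefgrad_always_converge}), and likewise for AdaShift with $s$ replaced by $v$. Convergence to $x^*=0$ holds iff this quantity is negative in steady state (so that $x$ is driven to the boundary $0$ and pinned there). I would then pick, say, a moderate $\beta_1$ and a $\beta_2$ just below the AdaShift threshold together with a large $P$, and show the centered denominators make $\tfrac{P/2}{\sqrt{s^{(1)}}} - \tfrac{1}{\sqrt{s^{(P-2)}}}<0$ while the uncentered ones give $\tfrac{P/2}{\sqrt{v^{(1)}}} - \tfrac{1}{\sqrt{v^{(P-2)}}}>0$; intuitively, centering enlarges the denominator at the $P/2$ step (because $m_t$ is still inflated by the recent spike, so $(g_t-m_t)^2$ stays large on the intervening zeros) relative to the $-1$ step, shrinking the harmful large step. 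The reverse direction — that there is no tuple where AdaShift converges but ACProp diverges — follows from the same comparison once I show the centering correction has a definite sign at the two critical steps for all $(\beta_1,\beta_2,P)$ in the relevant range, i.e. the centered denominator ratio $s^{(1)}/s^{(P-2)}$ always dominates the uncentered ratio $v^{(1)}/v^{(P-2)}$.

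The main obstacle I anticipate is controlling the centered quantity $s_{t-1}$ cleanly: unlike $v$, it couples $m_t$ and $g_t$ nonlinearly through $(g_t-m_t)^2$, and $m_t$ itself carries a long tail from the $P/2$ spike, so the geometric-sum bookkeeping is messier and one must be careful that the periodic steady state is actually reached (and is stable) and that the $\epsilon$ and $1/\sqrt t$ corrections are genuinely negligible rather than sign-changing. Making the ``not vice versa'' half fully rigorous — a sign statement uniform over an open region of $(\beta_1,\beta_2)$ and all large $P$ — is the part most likely to require a careful monotonicity lemma rather than a one-line estimate; I would handle it by bounding $m_t$ above and below on the zero-runs and showing the induced bounds on $s^{(1)}/s^{(P-2)}$ bracket $v^{(1)}/v^{(P-2)}$ from above.
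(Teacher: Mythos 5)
Your overall skeleton is the same as the paper's: pass to the periodic steady state, write the net per-period displacement as $-\alpha\big(\tfrac{P/2}{\sqrt{s^{+}}}-\tfrac{1}{\sqrt{s^{-}}}\big)$, and reduce convergence to the denominator-ratio condition $s^{+}/s^{-}<P^2/4$ (resp. $v^{+}/v^{-}<P^2/4$ for AdaShift). However, the decisive comparison in your plan is reversed, and this is not a cosmetic slip. In problem~\eqref{eq:counter_sparse} the large gradient $P/2$ at $t\,\%\,P=1$ is the \emph{helpful} event (it pushes $x$ down toward $x^*=0$), and the $-1$ gradient at $t\,\%\,P=P-2$ is the \emph{harmful} one. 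By your own stated criterion, convergence requires $\tfrac{P/2}{\sqrt{s^{(1)}}}-\tfrac{1}{\sqrt{s^{(P-2)}}}>0$, yet you propose to exhibit a tuple where the centered denominators make this quantity \emph{negative} and the uncentered ones make it \emph{positive} --- which, if the computation worked out as you describe, would prove that ACProp diverges while AdaShift converges, i.e.\ the negation of the lemma. Your supporting intuition is aligned with the same wrong direction: you argue that centering inflates $s$ at the $P/2$ step so as to ``shrink the harmful large step,'' but the large step is not the harmful one.

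The actual mechanism (and what the paper verifies) is the opposite inequality, $s^{+}/s^{-}\leq v^{+}/v^{-}$, driven by the denominator at the $-1$ step: the long run of $P-4$ zero gradients sits \emph{between} the $P/2$ spike and the $-1$ spike (not between the $-1$ and the next $P/2$, as you state), so AdaShift's $v^{-}$ decays geometrically toward $0$ and the harmful step $1/\sqrt{v^{-}}$ blows up, while ACProp's $s^{-}$ receives contributions $(0-m_t)^2=m_t^2\neq 0$ on those zero steps and stays bounded away from $0$. That is what lowers $s^{+}/s^{-}$ below $v^{+}/v^{-}$ and makes the threshold $P^2/4$ easier to meet for ACProp. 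Your ``not vice versa'' half likewise asserts domination of $s^{(1)}/s^{(P-2)}$ over $v^{(1)}/v^{(P-2)}$, which is again the wrong direction. To repair the argument you would keep your steady-state bookkeeping but flip the target inequality, relocate the long zero-run, and establish (as the paper does via the closed-form limits and a sign check of $s^{+}/s^{-}-v^{+}/v^{-}$) that centering protects the $-1$-step denominator rather than inflating the $P/2$-step one.
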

We provide the proof in the appendix. Lemma.~\ref{lemma:converge_sparse} implies that ACProp has a larger area of convergence than AdaShift, hence the centering of second momentum further weakens the convergence conditions. We 
first validate this claim with numerical experiments in Fig.~\ref{fig:toy2_area}; for sanity check, we plot the trajectories of different optimizers in Fig.~\ref{fig:toy2_sanity}. We observe that the convergence of AdaShift is influenced by delay step $n$, and there's no good criterion to select a good value of $n$, since Fig.~\ref{fig:toy1_sanity} requires a small $n$ for convergence in problem~\eqref{eq:counter}, while Fig.~\ref{fig:toy2_sanity} requires a large $n$ for convergence in problem~\eqref{eq:counter_sparse}. ACProp has a larger area of convergence, indicating that both async update and second momentum centering helps weaken the convergence conditions.

We provide an intuitive explanation on why momentum centering helps convergence. Due to the periodicity of the problem, the optimizer behaves almost periodically as $t \to \infty$. Within each period, the optimizer observes one positive gradient $P/2$ and one negative gradient -1. As in Fig.~\ref{fig:toy2_explain}, between observing non-zero gradients, the gradient is always 0. Within each period, ACprop will perform a positive update $P/(2\sqrt{s^{+}})$ and a negative update $-1/\sqrt{s^{-}}$, where $s^{+}$ ($s^{-}$) is the value of denominator before observing positive (negative) gradient. Similar notations for $v^{+}$ and $v^-$ in AdaShift. A net update in the correct direction requires $\frac{P}{2\sqrt{s^{+}}} > \frac{1}{\sqrt{s^-}}$, (or $ s^+/s^-< P^2/4$).

When observing 0 gradient, for AdaShift, $v_t = \beta_2 v_{t-1} + (1-\beta_2)0^2$; for ACProp, $s_{t}=\beta_2 s_{t-1}+ (1-\beta_2)(0 - m_t)^2$ where $m_t \neq 0$. 
Therefore,
$v^-$ decays exponentially to 0, but $s^-$ decays to a non-zero constant, hence $\frac{s^+}{s^-} < \frac{v^+}{v^-}$, hence ACProp is easier to satisfy $s^+/s^-<P^2/4$ and converge.


\begin{figure*}
\begin{minipage}{0.48\textwidth}
\centering
    \includegraphics[width=0.8\linewidth]{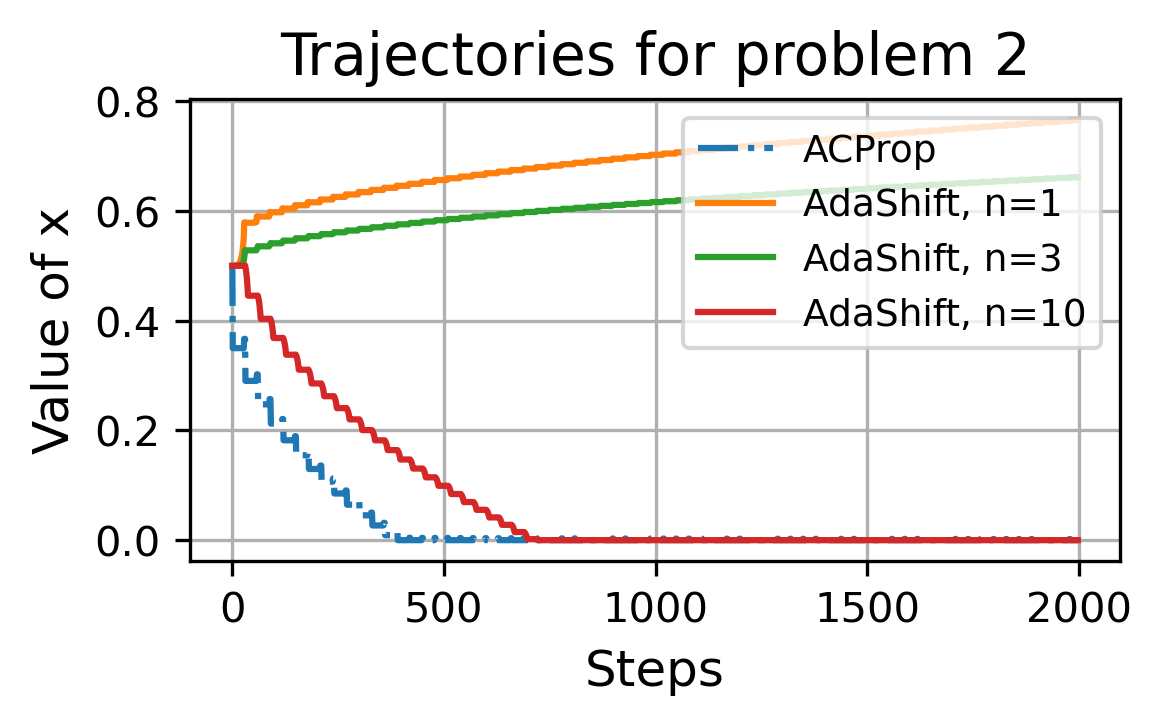}
    \vspace{-2mm}
    \captionof{figure}{\small Trajectories for problem defined by Eq.~\eqref{eq:counter_sparse}. Note that the optimal point is $x^*=0$.}
    \label{fig:toy2_sanity}
\end{minipage}
\hfill
\begin{minipage}{0.48\textwidth}
\centering
    \includegraphics[width=0.79\linewidth]{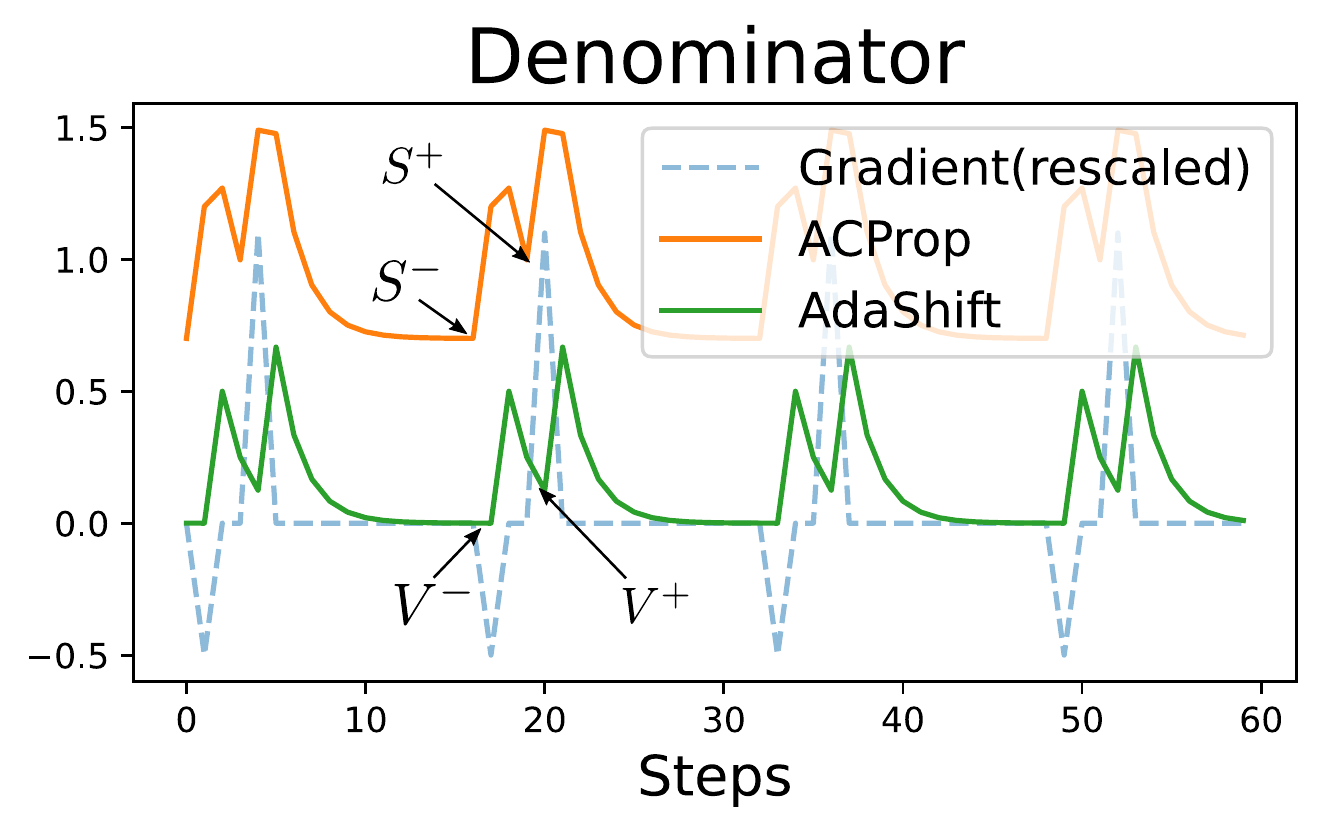}
    \vspace{-2mm}
    \captionof{figure}{\small Value of uncentered second momentum $v_t$ and centered momentum $s_t$ for problem~\eqref{eq:counter_sparse}. }
    \label{fig:toy2_explain}
\end{minipage}
\end{figure*}

\section{Analysis on convergence rate}
In this section, we show that ACProp converges at a rate of $O(1/\sqrt{T})$ in the stochastic nonconvex case, which matches the oracle \cite{arjevani2019lower} for first-order optimizers and outperforms the $O(logT/\sqrt{T})$ rate for sync-optimizers (Adam, RMSProp and AdaBelief) \cite{chen2018convergence, shi2021rmsprop, zhuang2020adabelief}. We further show that the upper bound on regret of async-center (ACProp) outperforms async-uncenter (AdaShift) by a constant.

For the ease of analysis, we denote the update as:
$
    x_t = x_{t-1}-\alpha_t A_t g_t
$, where $A_t$ is the diagonal preconditioner. For SGD, $A_t=I$; for sync-optimizers (RMSProp), $A_t=\frac{1}{\sqrt{v_t}+\epsilon}$; for AdaShift with $n=1$, $A_t=\frac{1}{\sqrt{v_{t-1}}+\epsilon}$; for ACProp, $A_t=\frac{1}{\sqrt{s_{t-1}}+\epsilon}$. For async optimizers, $\mathbb{E}[A_t g_t \vert g_0,...g_{t-1}]=A_t \mathbb{E}g_t$; for sync-optimizers, this does not hold  because $g_t$ is used in $A_t$

\begin{theorem}[convergence for stochastic non-convex case]
\label{thm:nonconvex_rate}
Under the following assumptions:
\begin{itemize}
\item $f$ is continuously differentiable, $f$ is lower-bounded by $f^*$ and upper bounded by $M_f$. $\nabla f(x)$ is globally Lipschitz continuous with constant $L$:
\begin{equation}
\label{eq:lip}
\vert \vert \nabla f(x) - \nabla f(y) \vert  \vert \leq L \vert  \vert x -y \vert  \vert
\end{equation}
\item For any iteration $t$, $g_t$ is an unbiased estimator of $\nabla f(x_t)$ with variance bounded by $\sigma^2$. Assume norm of $g_t$ is bounded by $M_g$.
\begin{equation}
\mathbb{E}\big[g_t \big] = \nabla f(x_t) \ \ \ \
\mathbb{E} \big[ \vert  \vert g_t - \nabla f(x_t) \vert  \vert^2 \big] \leq \sigma^2
\end{equation}
\end{itemize}
then for $\beta_1, \beta_2 \in [0,1)$, with learning rate schedule as:
$\label{eq:lr_schedule}
\alpha_t = \alpha_0 t^{-\eta},\ \ \alpha_0 \leq \frac{C_l}{LC_u^2},\ \  \eta \in [0.5, 1)
$ \\
for the sequence $\{x_t\}$ generated by ACProp, we have
\begin{equation}
    \frac{1}{T} \sum_{t=1}^T\Big \vert \Big \vert \nabla f(x_t)\Big \vert \Big \vert^2 \leq \frac{2}{C_l} \Big[ (M_f - f^*)\alpha_0 T^{\eta -1 } + \frac{L C_u^2 \sigma^2 \alpha_0}{2 (1-\eta)} 
    T^{-\eta}
    \Big]
\end{equation}
where $C_l$ and $C_u$ are scalars representing the lower and upper bound for $A_t$, e.g. $C_l I \preceq A_t \preceq C_u I$, where $A \preceq B$ represents $B-A$ is semi-positive-definite.
\end{theorem}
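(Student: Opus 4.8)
The plan is to run the classical ``descent lemma $+$ telescoping'' argument for adaptive methods, where the asynchronous structure is precisely what makes the one-step inequality clean (and is exactly what fails for the synchronous optimizers, as the earlier discussion around Lemma~\ref{thm:beliefgrad_always_converge} already emphasizes). Throughout I take $\mathcal F = \mathbb R^d$ so that the update reads $x_t = x_{t-1} - \alpha_t A_t g_t$ exactly with $A_t = \mathrm{diag}\big(1/(\sqrt{s_{t-1}}+\epsilon)\big)$; the projected case is handled by non-expansiveness of the $\sqrt{s_{t-1}}$-weighted projection. First I would record that $A_t$ is sandwiched: $A_t \preceq \tfrac1\epsilon I$, and since $\|g_t\|\le M_g$ forces $\|m_t\|_\infty \le M_g$ through the EMA recursion for $m_t$, we get $s_{t-1}\le 4M_g^2$ componentwise, hence $A_t \succeq \tfrac{1}{2M_g+\epsilon} I$; so one may take $C_u = 1/\epsilon$ and $C_l = 1/(2M_g+\epsilon)$ (or simply treat $C_l,C_u$ as given, as the statement does). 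This is the only place the bounded-gradient assumption $M_g$ is used.

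Next, apply $L$-smoothness to consecutive iterates,
\[ f(x_t) \le f(x_{t-1}) - \alpha_t \langle \nabla f(x_{t-1}), A_t g_t\rangle + \tfrac{L\alpha_t^2}{2}\|A_t g_t\|^2 ,\]
and condition on $\mathcal F_{t-1} = \sigma(g_0,\dots,g_{t-1})$. The crucial point is that $A_t$ depends only on $s_{t-1}$, hence is $\mathcal F_{t-1}$-measurable, while $\mathbb{E}[g_t\mid\mathcal F_{t-1}]=\nabla f(x_{t-1})$; therefore $\mathbb{E}[\langle\nabla f(x_{t-1}),A_tg_t\rangle\mid\mathcal F_{t-1}] = \langle\nabla f(x_{t-1}),A_t\nabla f(x_{t-1})\rangle \ge C_l\|\nabla f(x_{t-1})\|^2$, and by bias--variance $\mathbb{E}[\|A_tg_t\|^2\mid\mathcal F_{t-1}] \le C_u^2\big(\|\nabla f(x_{t-1})\|^2+\sigma^2\big)$. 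Taking $\mathbb{E}[\cdot\mid\mathcal F_{t-1}]$ and using $\alpha_t\le\alpha_0\le C_l/(LC_u^2)$, which forces $\tfrac{L\alpha_t^2 C_u^2}{2}\le\tfrac{C_l\alpha_t}{2}$, the $\|\nabla f(x_{t-1})\|^2$ contributed by the quadratic term cancels half of the first-order descent, leaving
\[ \mathbb{E}\big[f(x_t)\mid\mathcal F_{t-1}\big] \le f(x_{t-1}) - \tfrac{C_l\alpha_t}{2}\,\|\nabla f(x_{t-1})\|^2 + \tfrac{L C_u^2\sigma^2}{2}\,\alpha_t^2 .\]

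Finally, take total expectation, rearrange, and sum over $t=1,\dots,T$; the function values telescope to $f(x_0)-\mathbb{E}f(x_T)\le M_f-f^*$, giving $\tfrac{C_l}{2}\sum_{t=1}^T\alpha_t\,\mathbb{E}\|\nabla f(x_{t-1})\|^2 \le (M_f-f^*) + \tfrac{L C_u^2\sigma^2}{2}\sum_{t=1}^T\alpha_t^2$. Now substitute $\alpha_t=\alpha_0t^{-\eta}$: lower-bound the left side by $\tfrac{C_l\alpha_0}{2}T^{-\eta}\sum_{t}\mathbb{E}\|\nabla f(x_{t-1})\|^2$, and upper-bound $\sum_{t=1}^T\alpha_t^2\le\alpha_0\sum_{t=1}^T\alpha_t\le \tfrac{\alpha_0^2}{1-\eta}T^{1-\eta}$ by comparing with $\int_1^T x^{-\eta}\,dx$ (valid for $\eta\in[0.5,1)$). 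Dividing through by $\tfrac{C_l\alpha_0}{2}T^{1-\eta}$ yields a bound of the stated shape, $O(T^{\eta-1}) + O(T^{-\eta})$, with the optimization term governed by $M_f-f^*$ and the variance term equal to $\tfrac{2}{C_l}\cdot\tfrac{LC_u^2\sigma^2\alpha_0}{2(1-\eta)}T^{-\eta}$; at $\eta=1/2$ this is $O(1/\sqrt T)$, matching the oracle rate.

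The main obstacle is the conditioning argument: one must set up the filtration so that ``$\mathbb{E}[A_tg_t\mid\mathcal F_{t-1}] = A_t\nabla f(x_{t-1})$'' is genuinely legitimate (here $g_t$ denotes $\nabla f_t(x_{t-1})$, and $A_t = 1/(\sqrt{s_{t-1}}+\epsilon)$ must be measurable w.r.t.\ the same $\sigma$-algebra on which $g_t$ is conditionally unbiased) — this is the formal content of ``asynchronous $=$ unbiased stepsize,'' and it is exactly the step that has no analogue for Adam/RMSProp/AdaBelief. The remaining work is bookkeeping, the one delicate point being to route the quadratic-term variance through $\sum_t\alpha_t$ (via $\alpha_t^2\le\alpha_0\alpha_t$) rather than bounding $\sum_t\alpha_t^2$ directly, so that no spurious $\log T$ appears even in the borderline case $\eta=1/2$; this is precisely where ACProp improves on the $O(\log T/\sqrt T)$ rate of the synchronous methods.
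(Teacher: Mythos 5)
Your setup — the descent lemma, the observation that $A_t$ is measurable with respect to the history while $g_t$ is conditionally unbiased (the formal content of asynchrony), the bias–variance bound $\mathbb{E}\|g_t\|^2\le\|\nabla f\|^2+\sigma^2$, and the use of $\alpha_0\le C_l/(LC_u^2)$ to absorb the quadratic gradient term — all coincide with the paper's argument and are correct. The gap is in the final summation/normalization. After telescoping you arrive at $\tfrac{C_l}{2}\sum_{t=1}^T\alpha_t\,\mathbb{E}\|\nabla f(x_{t-1})\|^2 \le (M_f-f^*)+\tfrac{LC_u^2\sigma^2}{2}\sum_{t=1}^T\alpha_t^2$, lower-bound the left side by $\tfrac{C_l\alpha_0}{2}T^{-\eta}\sum_t\mathbb{E}\|\nabla f\|^2$, and bound $\sum_t\alpha_t^2\le\alpha_0\sum_t\alpha_t\le\tfrac{\alpha_0^2}{1-\eta}T^{1-\eta}$. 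Dividing by $\tfrac{C_l\alpha_0}{2}T^{1-\eta}$ then gives
\begin{equation*}
\frac{1}{T}\sum_{t=1}^T\mathbb{E}\|\nabla f(x_{t-1})\|^2 \;\le\; \frac{2(M_f-f^*)}{C_l\alpha_0}\,T^{\eta-1} \;+\; \frac{LC_u^2\sigma^2\alpha_0}{C_l(1-\eta)},
\end{equation*}
whose second term is a constant in $T$: the bound does not vanish, so this chain of inequalities does not establish the claimed rate. The substitute of bounding $\sum_t\alpha_t^2$ directly instead (by $\zeta(2\eta)$ for $\eta>1/2$, by $1+\log T$ at $\eta=1/2$) recovers a vanishing bound but reintroduces exactly the $O(\log T/\sqrt T)$ factor at the borderline $\eta=1/2$ that the theorem is supposed to beat. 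In short, the trick $\alpha_t^2\le\alpha_0\alpha_t$ does not do the work you attribute to it under your normalization.

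The paper's proof resolves this differently, and the difference is essential: it divides the one-step inequality by $\alpha_t$ \emph{before} summing, so the left side becomes $\tfrac{C_l}{2}\|\nabla f(x_t)\|^2$ with no step-size weight and the variance term becomes $\tfrac{LC_u^2\sigma^2}{2}\alpha_t$ (first power). The price is that the telescoping sum $\sum_t \alpha_t^{-1}\big(\mathbb{E}f(x_t)-\mathbb{E}f(x_{t+1})\big)$ no longer collapses trivially; it is handled by Abel summation, using monotonicity of $\alpha_t$ and the \emph{upper} bound $f\le M_f$ to control $\sum_{t\ge 2}(\alpha_t^{-1}-\alpha_{t-1}^{-1})\mathbb{E}f(x_t)$, yielding $(M_f-f^*)/\alpha_T = O(T^\eta)$. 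Dividing the whole inequality by $T$ then gives $O(T^{\eta-1})+O(T^{-\eta})$, which is $O(1/\sqrt T)$ at $\eta=1/2$ with no logarithm. Note that this is precisely where the hypothesis that $f$ is bounded above by $M_f$ earns its place in the theorem statement — a hypothesis your argument never genuinely uses, which is itself a signal that the normalization has gone astray. To repair your proof, replace your last step with this divide-by-$\alpha_t$-then-Abel-sum argument.
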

Note that there's a natural bound for $C_l$ and $C_u$: $C_u \leq \frac{1}{\epsilon}$ and $C_l \geq \frac{1}{2M_g}$ because $\epsilon$ is added to denominator to avoid division by 0, and $g_t$ is bounded by $M_g$. Thm.~\ref{thm:nonconvex_rate} implies that ACProp has a convergence rate of $O(1/\sqrt{T})$ when $\eta=0.5$; equivalently, in order to have $\vert \vert \nabla f(x) \vert \vert^2 \leq \delta^2$, ACProp requires at most $O(\delta^{-4})$ steps.

\begin{theorem}[Oracle complexity \cite{arjevani2019lower}]
\label{thm:oracle}
For a stochastic non-convex problem satisfying assumptions in Theorem.~\ref{thm:nonconvex_rate}, using only up to first-order gradient information, in the worst case any algorithm requires at least $O(\delta^{-4})$ queries to find a $\delta$-stationary point $x$ such that $\vert \vert \nabla f(x) \vert \vert^2 \leq \delta^2$.
\end{theorem}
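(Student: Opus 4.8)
Since this is exactly the stochastic first-order oracle lower bound of \cite{arjevani2019lower}, the plan is not to improve on it but to reconstruct its adversarial argument. The goal is a minimax statement: for any (possibly randomized and adaptive) algorithm $\mathcal{A}$ issuing at most $T$ queries to a stochastic gradient oracle, I must exhibit an instance---a function $f$ obeying the assumptions of Theorem~\ref{thm:nonconvex_rate} (lower bounded by $f^*$, $L$-smooth, bounded gradient) together with an unbiased oracle whose error has variance at most $\sigma^2$---on which the output $\hat x$ of $\mathcal{A}$ satisfies $\mathbb{E}\|\nabla f(\hat x)\|^2 > \delta^2$ whenever $T = o(\delta^{-4})$. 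Stating the conclusion as ``at least $O(\delta^{-4})$ queries'' is the contrapositive of this, so I would track the free constants $\Delta = M_f - f^*$, $L$, and $\sigma$ throughout to land on the sharper count $\Omega(\Delta L \sigma^2 \delta^{-4})$.

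The deterministic core is a \emph{zero-chain} potential $\bar F$ on $\mathbb{R}^{T}$ built from two smooth scalar templates, whose defining property is that the $i$-th partial derivative vanishes until coordinate $i-1$ has crossed a fixed activation threshold. Thus from the origin the only informative coordinate is the first, and progress must propagate sequentially down the chain. The two facts I would establish are (i) after rescaling, $\bar F$ is $L$-smooth with range $O(\Delta)$ and bounded gradient, and (ii) any point with $\|\nabla \bar F\| \le \delta$ must have \emph{all} $T$ coordinates activated, hence requires at least $T$ rounds of sequential progress. This alone yields the deterministic $\Omega(\Delta L \delta^{-2})$ part and pins the dimension at $T \asymp \delta^{-2}$.

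To obtain the additional $\sigma^2/\delta^2$ factor I would randomize the oracle so that the active coordinate is revealed only with a small probability $p$ and returns a rescaled gradient otherwise, calibrating $p \asymp \delta^2/\sigma^2$ so the per-query error variance is exactly $\Theta(\sigma^2)$ while unbiasedness is preserved. Since a coordinate can be activated only after an \emph{informative} sample is drawn, and such samples arrive at rate $p$, activating all $T$ coordinates takes in expectation $\Omega(T/p) = \Omega(\delta^{-2}\cdot \sigma^2\delta^{-2}) = \Omega(\sigma^2\delta^{-4})$ queries. Making ``activation requires an informative sample'' rigorous for adaptive algorithms is the step I would handle information-theoretically: conditioning on the history, the oracle's law on not-yet-activated coordinates is independent of their identity, so a Le Cam two-point (or Fano multi-point) reduction shows the algorithm cannot beat guessing until it observes a rare informative draw.

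The main obstacle is the simultaneous bookkeeping of the three oracle constraints---unbiasedness, the hard variance cap $\sigma^2$, and the gradient bound $M_g$---against the requirement that information be scarce enough to force $p \asymp \delta^2/\sigma^2$. Shrinking $p$ slows progress but inflates variance, because an unbiased estimator that is nonzero only with probability $p$ has variance proportional to $1/p$ times its squared magnitude; hence the active-coordinate gradient magnitude, the activation threshold, and $p$ must be chosen in a tightly coupled way so that all three constraints saturate at once. This coupling is precisely where the exponent $4$ (rather than $2$ or $3$) emerges, and where the delicate constant-tracking of \cite{arjevani2019lower} resides.
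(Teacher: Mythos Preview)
The paper does not prove this theorem at all; it is quoted verbatim as the known lower bound of \cite{arjevani2019lower} and used only to certify that the $O(1/\sqrt{T})$ rate of Theorem~\ref{thm:nonconvex_rate} is unimprovable. Your sketch correctly reconstructs the probabilistic zero-chain argument of that reference, so there is nothing to compare against in the present paper.
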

\textbf{Optimal rate in big O} Thm.~\ref{thm:nonconvex_rate} and Thm.~\ref{thm:oracle} imply that async-optimizers achieves a convergence rate of $O(1/\sqrt{T})$ for the stochastic non-convex problem, which matches the oracle complexity and outperforms the $O(logT/\sqrt{T})$ rate of sync-optimizers (Adam \cite{reddi2019convergence}, RMSProp\cite{shi2021rmsprop}, AdaBelief \cite{zhuang2020adabelief}). Adam and RMSProp are shown to achieve $O(1/\sqrt{T})$ rate under the stricter condition that $\beta_{2,t}\to 1$ \cite{zou2019sufficient}. A similar rate has been achieved in AVAGrad \cite{savarese2019domain}, and AdaGrad is shown to achieve a similar rate \cite{li2019convergence}. Despite the same convergence rate, we show that ACProp has better empirical performance.

\textbf{Constants in the upper bound of regret} Though both async-center and async-uncenter optimizers have the same convergence rate with matching upper and lower bound in big O notion, the constants of the upper bound on regret is different. Thm.~\ref{thm:nonconvex_rate} implies that the upper bound on regret is an increasing function of $1/C_l$ and $C_u$, and \\ \centerline{$
    1/C_l = \sqrt{K_u}+\epsilon,\ \ C_u = 1/ (\sqrt{K_l}+\epsilon)
    \label{eq:constants_influence}
$} \\
where $K_l$ and $K_u$ are the lower and upper bound of second momentum, respectively. 

We analyze the constants in regret by analyzing $K_l$ and $K_u$. If we assume the observed gradient $g_t$ follows some independent stationary distribution, with mean $\mu$ and variance $\sigma^2$, then  approximately
\begin{align}
&\textit{Uncentered second momentum:     }1/C_l^v= \sqrt{K_u^v}+\epsilon \approx \sqrt{\mu^2 +\sigma^2}+\epsilon \\
&\textit{Centered second momentum: }1/C_l^s= \sqrt{K_u^s}+\epsilon \approx \sqrt{\sigma^2}+\epsilon
\label{eq:center_moment}
\end{align} 
During early phase of training, in general $\vert\mu \vert \gg \sigma$, hence $1/C_l^s \ll 1/C_l^v$, and the centered version (ACProp) can converge faster than uncentered type (AdaShift) by a constant factor of around $\frac{\sqrt{\mu^2 + \sigma^2} + \epsilon}{\sqrt{ \sigma^2}+\epsilon}$. During the late phase, $g_t$ is centered around 0, and $\vert \mu \vert \ll \sigma$, hence $K_l^v$ (for uncentered version) and $K_l^s$ (for centered version) are both close to 0, hence $C_u$ term is close for both types.

\textbf{Remark} We emphasize that ACProp rarely encounters numerical issues caused by a small $s_t$ as denominator, even though Eq.~\eqref{eq:center_moment} implies a lower bound for $s_t$ around $\sigma^2$ which could be small in extreme cases.
Note that $s_t$ is an estimate of mixture of two aspects: the change in true gradient $\vert \vert \nabla f_t(x)-\nabla f_{t-1}(x) \vert \vert^2$, and the noise in $g_t$ as an observation of $\nabla f(x)$. Therefore, two conditions are essential to achieve $s_t=0$: the true gradient $\nabla f_t(x)$ remains constant, and $g_t$ is a noise-free observation of $\nabla f_t(x)$. Eq.~\eqref{eq:center_moment} is based on assumption that $\vert \vert \nabla f_t(x)-\nabla f_{t-1}(x) \vert \vert^2=0$, if we further assume $\sigma=0$, then the problem reduces to a trivial ideal case: a linear loss surface with clean observations of gradient, which is rarely satisfied in practice. More discussions are in appendix.
\begin{figure}
\begin{minipage}{0.49\textwidth}
\centering
\includegraphics[width=\linewidth]{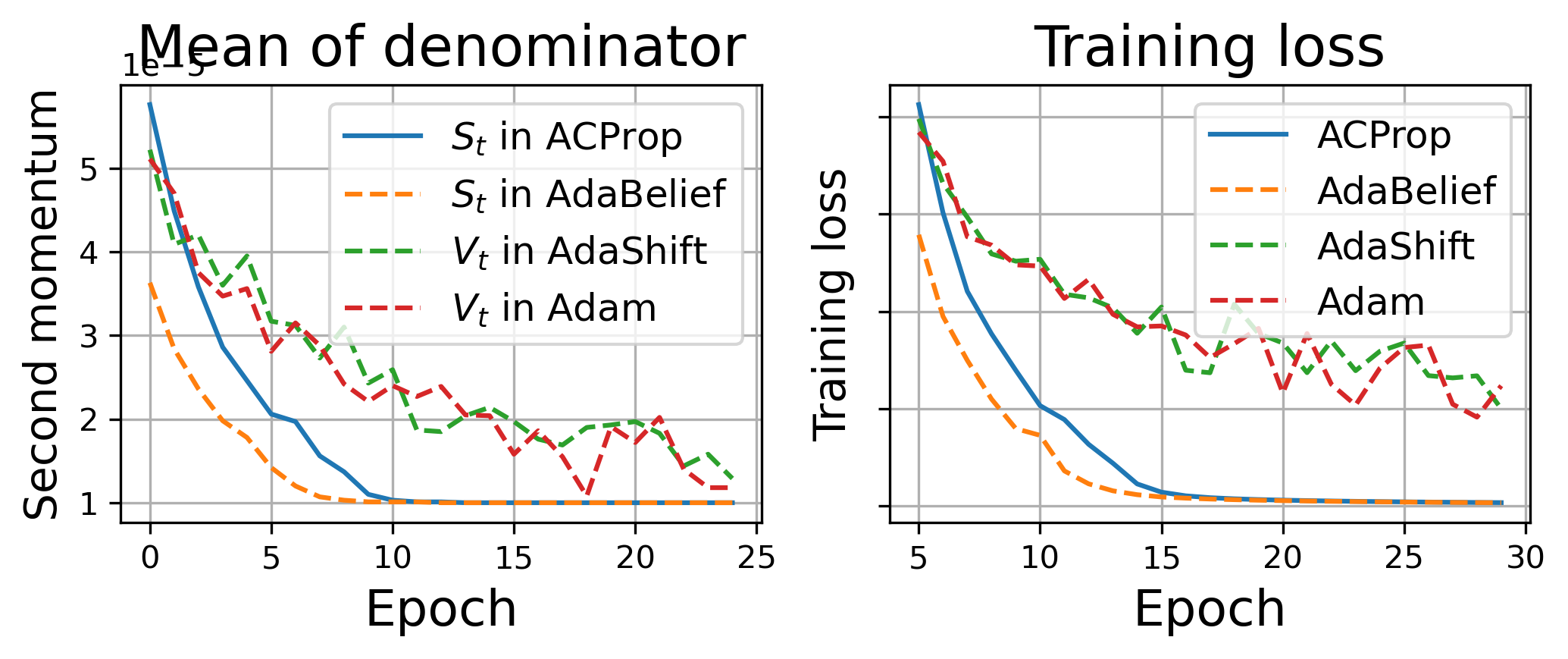}
\end{minipage}
\begin{minipage}{0.25\textwidth}
\centering
\includegraphics[width=\linewidth]{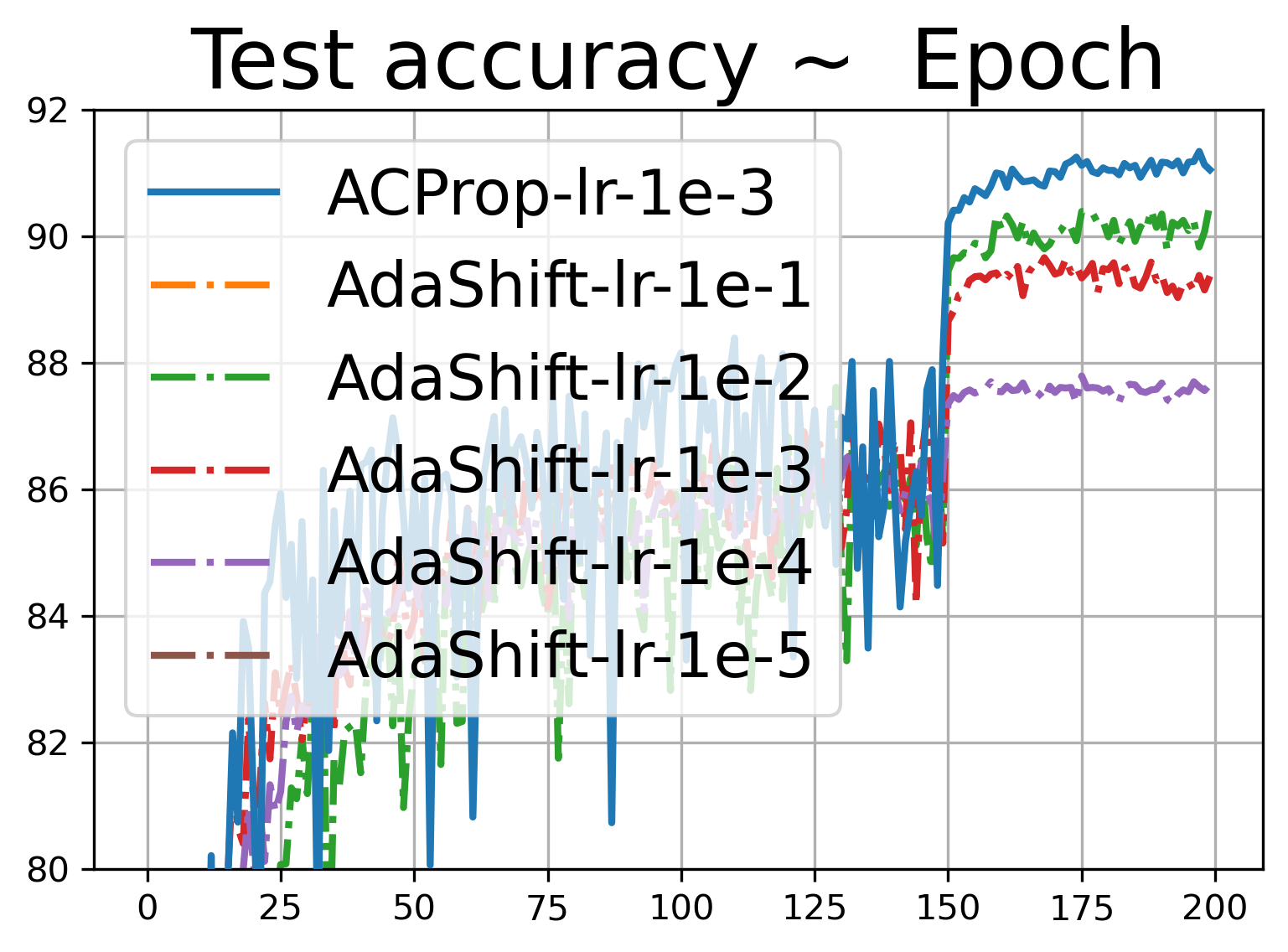}
\end{minipage}
\begin{minipage}{0.25\textwidth}
\centering
\includegraphics[width=\linewidth]{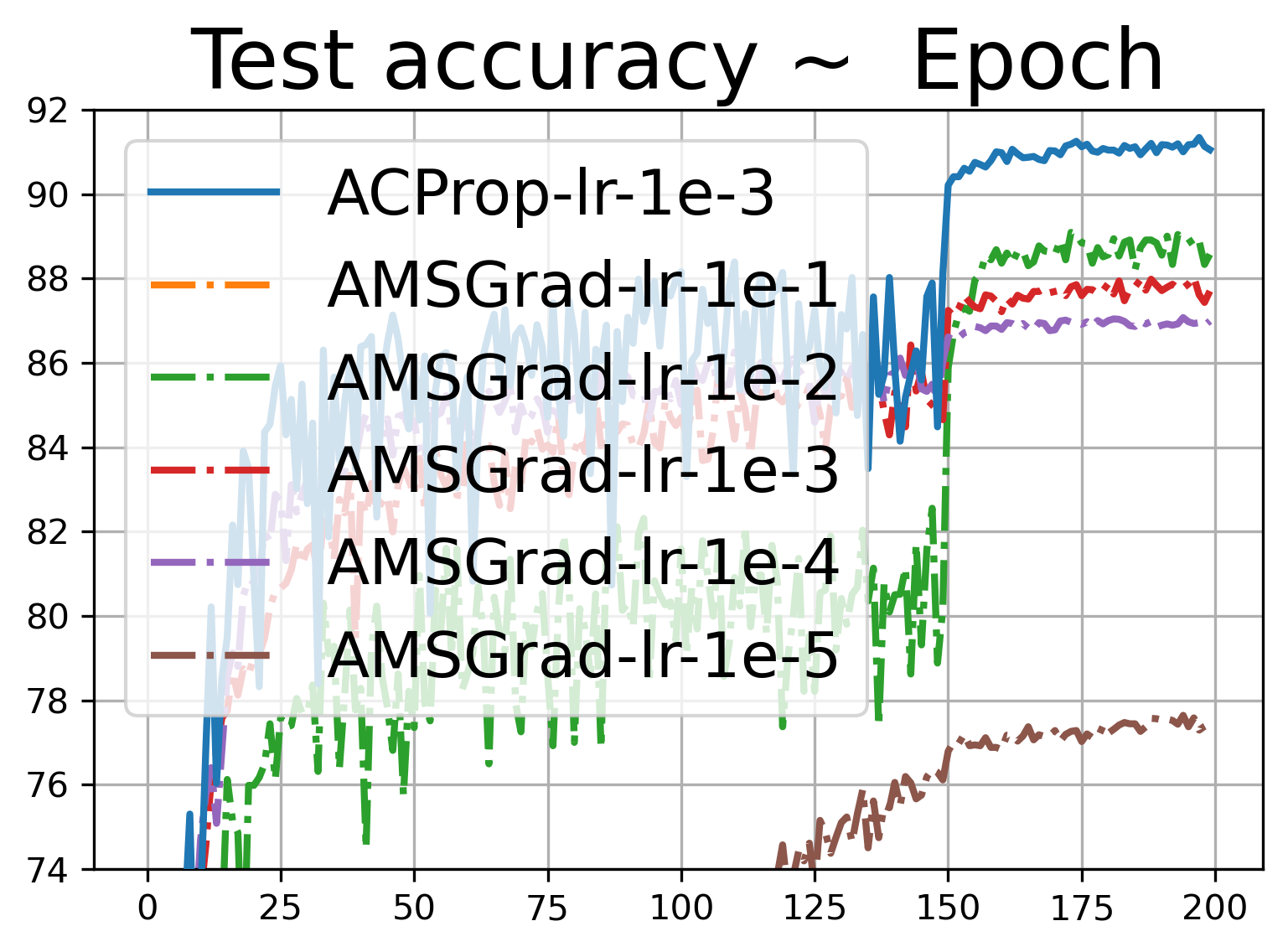}
\end{minipage}
\captionof{figure}{\small From left to right: (a) Mean value of denominator for a 2-layer MLP on MNIST dataset. (b) Training loss of different optimizers for the 2-layer MLP model. (c) Performance of AdaShift for VGG-11 on CIFAR10 varying with learning rate ranging from 1e-1 to 1e-5, we plot the performance of ACProp with learning rate 1e-3 as reference. Missing lines are because their accuracy are below display threshold. All methods decay learning rate by a factor of 10 at 150th epoch. (d) Performance of AMSGrad for VGG-11 on CIFAR10 varying with learning rate under the same setting in (c).}
\label{fig:mnist}
\vspace{-4mm}
\end{figure}


\textbf{Empirical validations}
We conducted experiments on the MNIST dataset using a 2-layer MLP. We plot the average value of $v_t$ for uncentered-type and $s_t$ for centered-type optimizers; as Fig.~\ref{fig:mnist}(a,b) shows, we observe $s_t \leq v_t$ and the centered-type (ACProp, AdaBelief) converges faster, validating our analysis for early phases.
For epochs $>10$, we observe that $\operatorname{min}s_t \approx \operatorname{min}v_t$, validating our analysis for late phases.

As in Fig.~\ref{fig:mnist}(a,b), the ratio $v_t/s_t$ decays with training, and in fact it depends on model structure and dataset noise. Therefore, empirically it's hard to compensate for the constants in regret by applying a larger learning rate for async-uncenter optimizers. As shown in Fig.~\ref{fig:mnist}(c,d), for VGG network on CIFAR10 classification task, we tried different initial learning rates for AdaShift (async-uncenter) and AMSGrad ranging from 1e-1 to 1e-5, and their performances are all inferior to ACProp with a learning rate 1e-3. Please see Fig.\ref{fig:hyper-tune} for a complete table varying with hyper-parameters. 

\section{Experiments}
We validate the performance of ACProp in various experiments, including image classification with convolutional neural networks (CNN), reinforcement learning with deep Q-network (DQN), machine translation with transformer and generative adversarial networks (GANs). We aim to test both the generalization performance and training stability: SGD family optimizers typically are the default for CNN models such as in image recognition \cite{he2016deep} and object detection \cite{ren2015faster} due to their better generalization performance than Adam; and Adam is typically the default for GANs \cite{goodfellow2014generative}, reinforcement learning \cite{mnih2013playing} and transformers \cite{vaswani2017attention}, mainly due to its better numerical stability and faster convergence than SGD. We aim to validate that ACProp can perform well for both cases.

\begin{figure*}[t]
\centering
\begin{minipage}{0.31\textwidth}
\includegraphics[width=\linewidth]{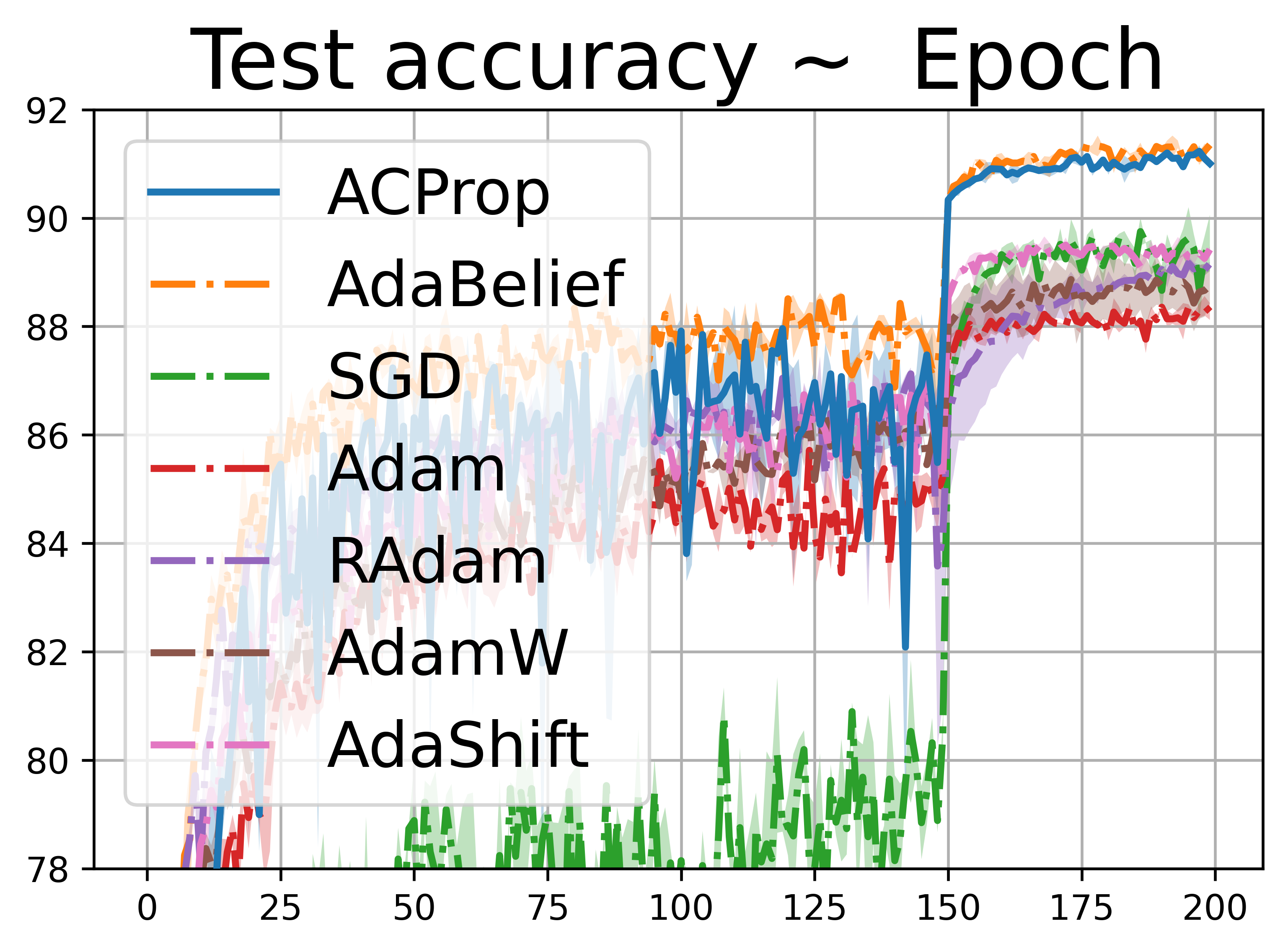}
\end{minipage}
\begin{minipage}{0.31\textwidth}
\includegraphics[width=\linewidth]{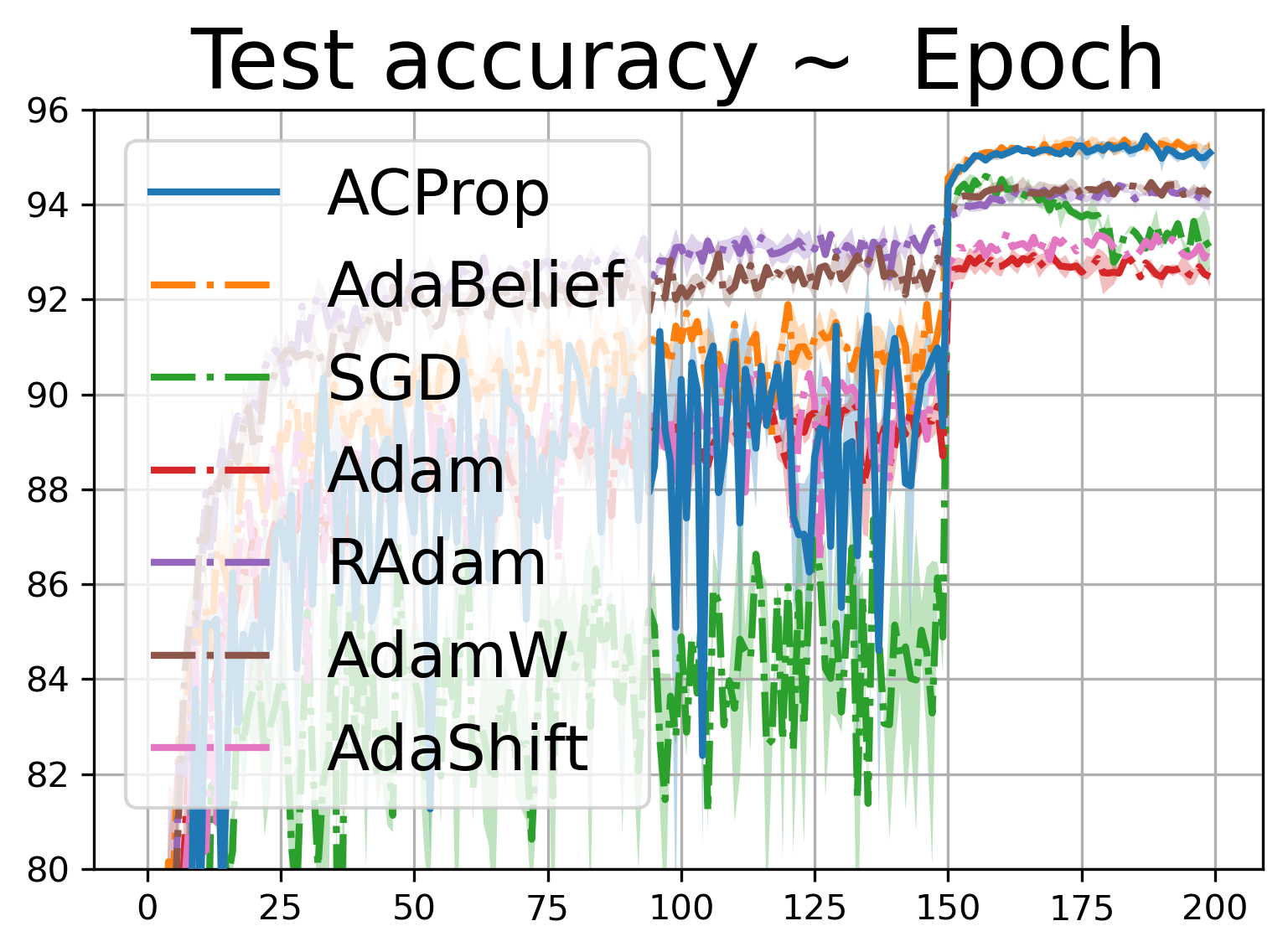}
\end{minipage}
\begin{minipage}{0.31\textwidth}
\includegraphics[width=\linewidth]{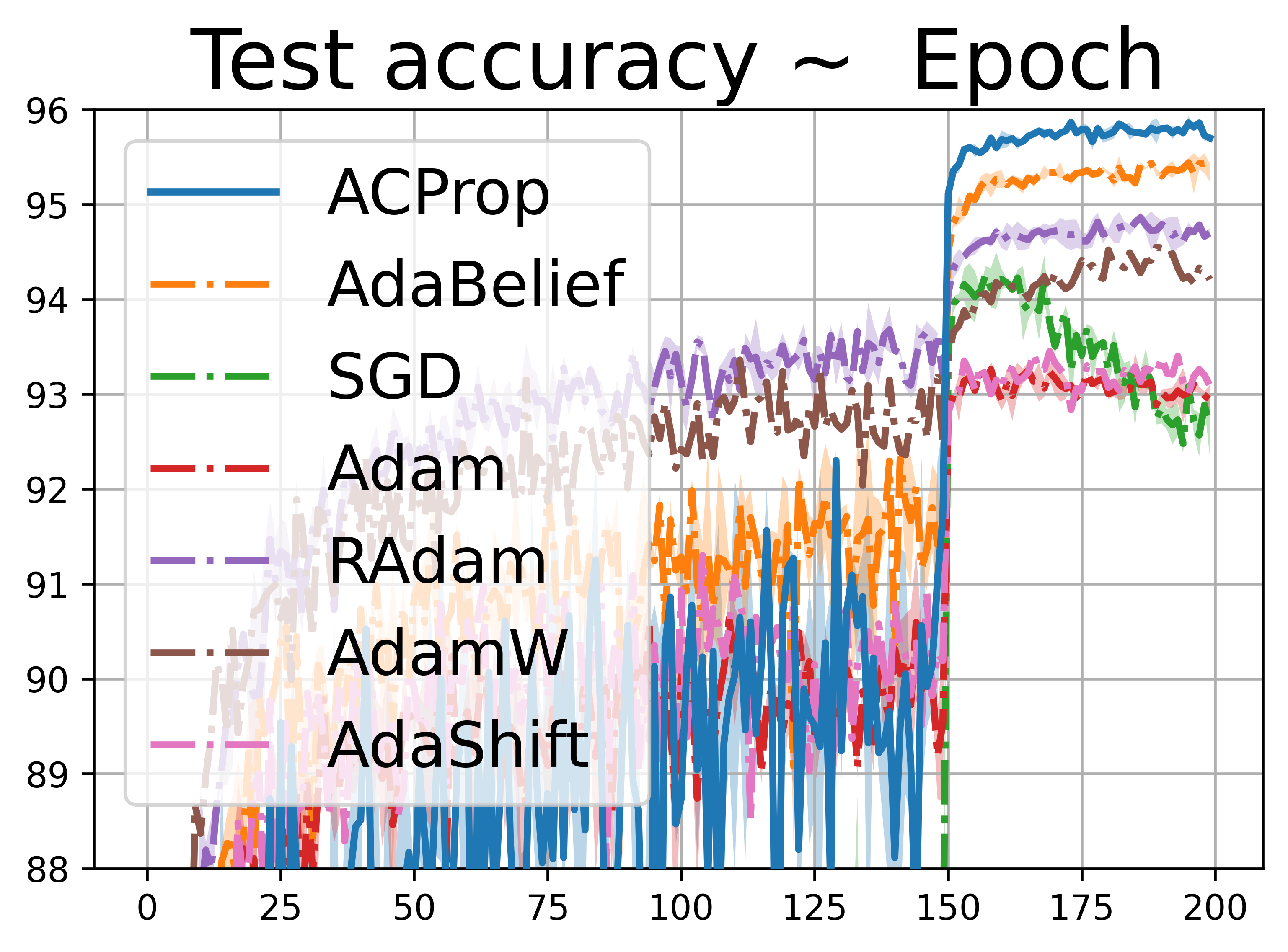}
\end{minipage}
\caption{\small Test accuracy ($mean \pm std$) on CIFAR10 datset. Left to right: VGG-11, ResNet-34, DenseNet-121.}
\label{fig:cifar10}
\vspace{-4mm}
\end{figure*}

\begin{figure*}
\begin{minipage}{0.45\textwidth}
\includegraphics[width=\linewidth, height=3.5cm]{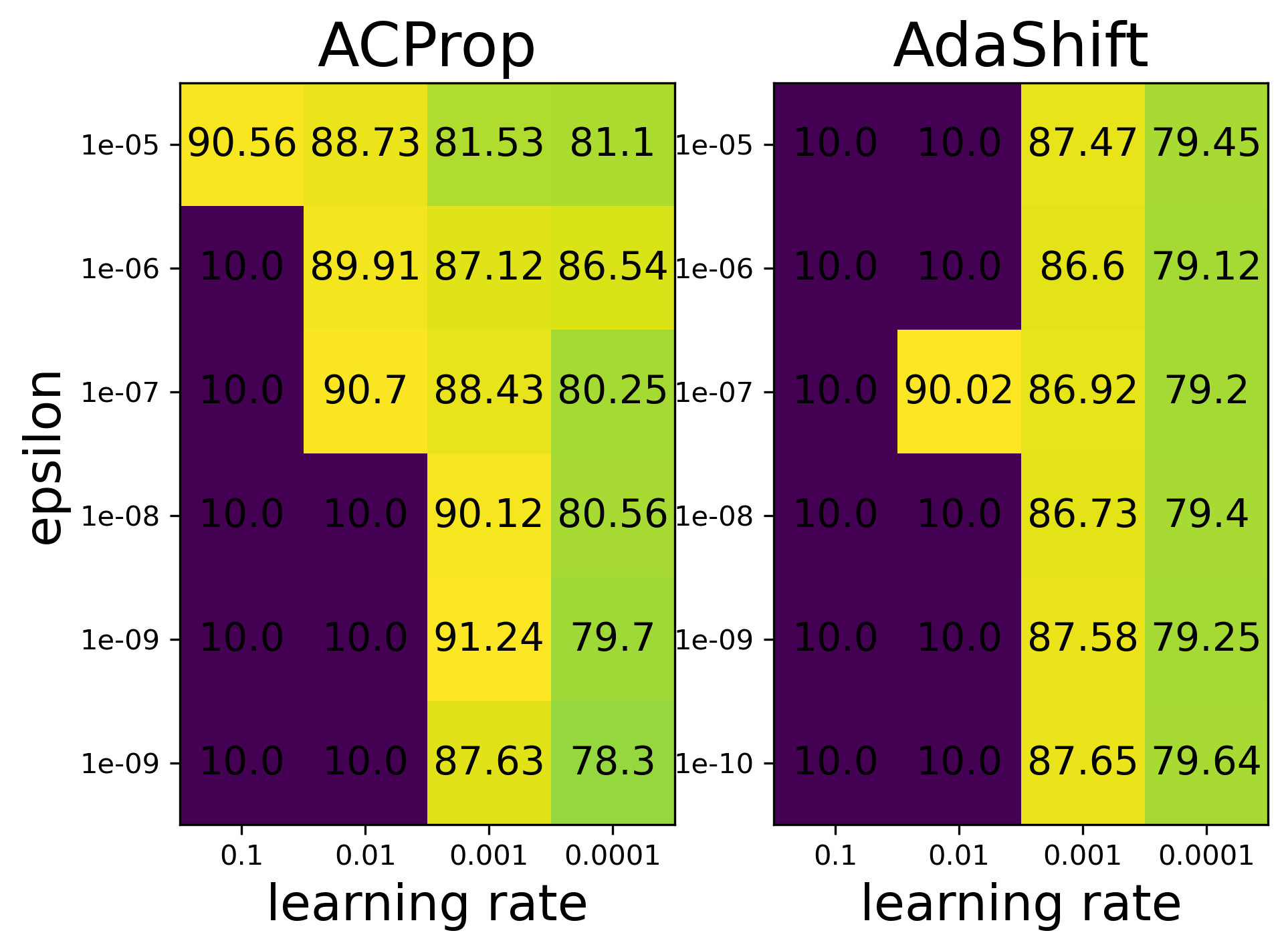}
\caption{\small Test accuracy (\%) of VGG network on CIFAR10 under different hyper-parameters. We tested learning rate in $\{10^{-1},10^{-2},10^{-3},10^{-4}\}$ and $\epsilon \in \{10^{-5},...,10^{-9}\}$.}
\label{fig:hyper-tune}
\vspace{-4mm}
\end{minipage}
\hfill
\begin{minipage}{0.44\textwidth}
\includegraphics[width=\linewidth, height=3.6cm]{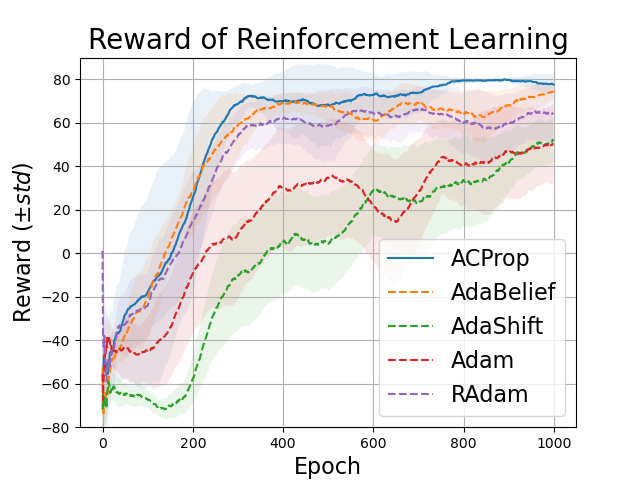}
\caption{\small The reward (higher is better) curve of a DQN-network on the four-rooms problem. We report the mean and standard deviation across 10 independent runs.}
\label{fig:reinforce}
\vspace{-4mm}
\end{minipage}
\vspace{-2mm}
\end{figure*}
\begin{table}[!tbp]
\centering
\caption{Top-1 accuracy of ResNet18 on ImageNet. $^\diamond$ is reported in PyTorch Documentation, $\dag$ is reported in \cite{chen2020closing}, $\ast$ is reported in $\cite{liu2019variance}$, $^\ddag$ is reported in \cite{zhuang2020adabelief}}
\label{table:imagenet}
\scalebox{0.9}{
\begin{tabular}{ccccccc|c}
\hline
SGD  & Padam  & Adam    & AdamW  & RAdam & AdaShift & AdaBelief & ACProp         \\ \hline
69.76$^\diamond$ (70.23$^\dag$) & 70.07$^\dag$  & 66.54$^\ast$ & 67.93$^\dag$ & 67.62$^\ast$ & 65.28    & 70.08$\ddag$     & \textbf{70.46} \\ \hline
\end{tabular}
}
\vspace{-4mm}
\end{table}
\textbf{Image classification with CNN} We first conducted experiments on CIFAR10 image classification task with a VGG-11 \cite{simonyan2014very}, ResNet34 \cite{he2016deep} and DenseNet-121 \cite{huang2017densely}. We performed extensive hyper-parameter tuning in order to better compare the performance of different optimizers: for SGD we set the momentum as 0.9 which is the default for many cases \cite{he2016deep,huang2017densely}, and search the learning rate between 0.1 and $10^{-5}$ in the log-grid; for other adaptive optimizers, including AdaBelief, Adam, RAdam, AdamW and AdaShift, we search the learning rate between 0.01 and $10^{-5}$ in the log-grid, and search $\epsilon$ between $10^{-5}$ and $10^{-10}$ in the log-grid. We use a weight decay of 5e-2 for AdamW, and use 5e-4 for other optimizers. We report the $mean \pm std$ for the best of each optimizer in Fig.~\ref{fig:cifar10}: for VGG and ResNet, ACProp achieves comparable results with AdaBelief and outperforms other optimizers; for DenseNet, ACProp achieves the highest accuracy and even outperforms AdaBelief by 0.5\%. As in Table~\ref{table:imagenet}, for ResNet18 on ImageNet, ACProp outperforms other methods and achieves comparable accuracy to the best of SGD in the literature, validating its generalization performance.

To evaluate the robustness to hyper-parameters, we test the performance of various optimizers under different hyper-parameters with VGG network. We plot the results for ACProp and AdaShift as an example in Fig.~\ref{fig:hyper-tune} and find that ACProp is more robust to hyper-parameters and typically achieves higher accuracy than AdaShift. 

\begin{table}
\centering
\caption{ BLEU score (higher is better) on machine translation with Transformer}
\label{table:transformer}
\scalebox{0.9}{
\begin{tabular}{c|cccc|c}
\hline
      & Adam         & RAdam        & AdaShift & AdaBelief             & ACProp                \\ \hline
DE-EN & 34.66$\pm$0.014 & 34.76$\pm$0.003 &       30.18$\pm$0.020   & 35.17$\pm$0.015          & \textbf{35.35$\pm$0.012} \\
EN-VI & 21.83$\pm$0.015 & 22.54$\pm$0.005 &    20.18$\pm$0.231      & 22.45$\pm$0.003          & \textbf{22.62$\pm$0.008} \\ 
JA-EN & 33.33$\pm$0.008 &     32.23$\pm$0.015         &    25.24$\pm$0.151      & \textbf{34.38$\pm$0.009} & 33.70$\pm$0.021          \\

RO-EN & 29.78$\pm$ 0.003 & 30.26 $\pm$ 0.011 & 27.86$\pm$0.024 & 30.03$\pm$0.012 & \textbf{30.27$\pm$0.007} \\
\hline
\end{tabular}}
\vspace{-4mm}
\end{table}


\begin{table}
\centering
\caption{FID (lower is better) for GANs}
\label{table:gan}
\scalebox{0.9}{
\begin{tabular}{c|cccc|c}
\hline
      & Adam        & RAdam       & AdaShift & AdaBelief            & ACProp               \\ \hline
DCGAN & 49.29$\pm$0.25 & 48.24$\pm$1.38 &    99.32$\pm$3.82      & 47.25$\pm$0.79          & \textbf{43.43$\pm$4.38} \\
RLGAN & 38.18$\pm$0.01 & 40.61$\pm$0.01 &  56.18 $\pm$0.23        & \textbf{36.58$\pm$0.12} & 37.15$\pm$0.13          \\
SNGAN & 13.14$\pm$0.10 & 13.00$\pm$0.04 &   26.62$\pm$0.21       & 12.70$\pm$0.17          & \textbf{12.44$\pm$0.02} \\
SAGAN & 13.98$\pm$0.02 & 14.25$\pm$0.01 &    22.11$\pm$0.25      & 14.17$\pm$0.14          & \textbf{13.54$\pm$0.15} \\
 \hline
\end{tabular}
}
\vspace{-4mm}
\end{table}

\begin{table}[!htbp]
\centering
\caption{Performance comparison between AVAGrad and ACProp. $\uparrow$ ($\downarrow$) represents metrics that upper (lower) is better. $^\star$ are reported in the AVAGrad paper \cite{savarese2019domain}}
\label{table:compare_avagrad}
\scalebox{0.85}{
\begin{tabular}{c|cc|cc|cc}
\hline
                            & \multicolumn{2}{c|}{WideResNet Test Error ($\downarrow$)}                            & \multicolumn{2}{c|}{Transformer BLEU ($\uparrow$)}                              & \multicolumn{2}{c}{GAN FID ($\downarrow$)}                                   \\ \cline{2-7} 
                            & CIFAR10                        & CIFAR100                         & DE-EN                            & RO-EN                             & DCGAN                           & SNGAN                           \\ \hline
AVAGrad                     & 3.80$^\star$$\pm$0.02                     & 18.76$^\star$$\pm$0.20                      & 30.23$\pm$0.024                     & 27.73$\pm$0.134                      & 59.32$\pm$3.28                     & 21.02$\pm$0.14                     \\
\multicolumn{1}{c|}{ACProp} & \multicolumn{1}{c}{\textbf{3.67$\pm$0.04}} & \multicolumn{1}{c|}{\textbf{18.72$\pm$0.01}} & \multicolumn{1}{c}{\textbf{35.35$\pm$0.012} }& \multicolumn{1}{c|}{\textbf{30.27$\pm$0.007}} & \multicolumn{1}{c}{\textbf{43.34$\pm$4.38} }& \multicolumn{1}{c}{\textbf{12.44$\pm$0.02}} \\ \hline
\end{tabular}
}
\vspace{-4mm}
\end{table}


\textbf{Reinforcement learning with DQN} We evaluated different optimizers on reinforcement learning with a deep Q-network (DQN) \cite{mnih2013playing} on the four-rooms task \cite{sutton1999between}. We tune the hyper-parameters in the same setting as previous section. We report the mean and standard deviation of reward (higher is better) across 10 runs in Fig.~\ref{fig:reinforce}. ACProp achieves the highest mean reward, validating its numerical stability and good generalization.

\textbf{Neural machine translation with Transformer} We evaluated the performance of ACProp on neural machine translation tasks with a transformer model \cite{vaswani2017attention}. For all optimizers, we set learning rate as 0.0002, and search for $\beta_1 \in \{0.9, 0.99, 0.999\}$, $\beta_2 \in \{0.98, 0.99, 0.999\}$ and $\epsilon \in \{10^{-5},10^{-6},...10^{-16}\}$. As shown in Table.~\ref{table:transformer}, ACProp achieves the highest BLEU score in 3 out 4 tasks, and consistently outperforms a well-tuned Adam.

\textbf{Generative Adversarial Networks (GAN)} The training of GANs easily suffers from mode collapse and numerical instability \cite{salimans2016improved}, hence is a good test for the stability of optimizers. We conducted experiments with Deep Convolutional GAN (DCGAN) \cite{radford2015unsupervised}, Spectral-Norm GAN (SNGAN) \cite{miyato2018spectral}, Self-Attention GAN (SAGAN) \cite{zhang2019self} and Relativistic-GAN (RLGAN) \cite{jolicoeur2018relativistic}. We set $\beta_1=0.5$, and search for $\beta_2$ and $\epsilon$ with the same schedule as previous section. We report the FID \cite{heusel2017gans} on CIFAR10 dataset in Table.~\ref{table:gan}, where a lower FID represents better quality of generated images. ACProp achieves the best overall FID score and outperforms well-tuned Adam.

\textbf{Remark} Besides AdaShift, we found another async-optimizer named AVAGrad in \cite{savarese2019domain}. Unlike other adaptive optimizers, AVAGrad is not scale-invariant hence the default hyper-parameters are very different from Adam-type ($lr=0.1, \epsilon=0.1$). We searched for hyper-parameters for AVAGrad for a much larger range, with $\epsilon$ between 1e-8 and 100 in the log-grid, and $lr$ between 1e-6 and 100 in the log-grid. For experiments with a WideResNet, we replace the optimizer in the official implementation for AVAGrad by ACProp, and cite results in the AVAGrad paper. As in Table~\ref{table:compare_avagrad}, ACProp consistently outperforms AVAGrad in CNN, Transformer, and GAN training.

\section{Related Works}
\vspace{-2mm}
Besides the aforementioned, other variants of Adam include NosAdam \cite{huang2018nostalgic}, Sadam \cite{wang2019sadam}, Adax \cite{li2020adax}), AdaBound \cite{luo2019adaptive} and Yogi \cite{zaheer2018adaptive}. ACProp could be combined with other techniques such as SWATS \cite{izmailov2018averaging}, LookAhead \cite{zhang2019lookahead} and norm regularization similar to AdamP \cite{heo2021adamp}. Regarding the theoretical analysis, recent research has provided more fine-grained frameworks \cite{alacaoglu2020new, li2020exponential}. Besides first-order methods, recent research approximate second-order methods in deep learning \cite{martens2010deep, yao2020adahessian,ma2020apollo}.
\section{Conclusion}
We propose ACProp, a novel first-order gradient optimizer which combines the asynchronous update and centering of second momentum. We demonstrate that ACProp has good theoretical properties: ACProp has a ``always-convergence" property for the counter example by Reddi et al. (2018), while sync-optimizers (Adam, RMSProp) could diverge with uncarefully chosen hyper-parameter; for problems with sparse gradient, async-centering (ACProp) has a weaker convergence condition than async-uncentering (AdaShift); ACProp achieves the optimal convergence rate $O(1/\sqrt{T})$, outperforming the $O(logT/\sqrt{T})$ rate of RMSProp (Adam), and achieves a tighter upper bound on risk than AdaShift. In experiments, we validate that ACProp has good empirical performance: it achieves good generalization like SGD, fast convergence and training stability like Adam, and often outperforms Adam and AdaBelief.
\section*{Acknowledgments and Disclosure of Funding}
This research is supported by NIH grant R01NS035193.

\FloatBarrier
\bibliography{refs.bib}
\bibliographystyle{IEEE}

\appendix
\tableofcontents
\newpage
\allowdisplaybreaks

\section{Analysis on convergence conditions}
\subsection{Convergence analysis for Problem 1 in the main paper}

\begin{lemma}
There exists an online convex optimization problem where Adam (and RMSprop) has non-zero average regret, and one of the problem is in the form 
\begin{equation}
    f_t(x) = 
    \begin{cases}
    Px,&\ \ \textit{if } t \mathrm{\ mod\ } P = 1 \\
    -x, &\ \ \textit{Otherwise} \\
    \end{cases} \ \ 
    x \in [-1,1], \exists P \in \mathbb{N},  P \geq 3
    \label{append_eq:counter}
\end{equation}
\end{lemma}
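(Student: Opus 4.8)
The plan is to adapt the divergence argument of Reddi et al.\ from their period-$3$ example to this period-$P$ family. The mechanism is that the single large gradient $P$ occurring once per period inflates the denominator $\sqrt{v_t}$ (which contains $g_t^2=P^2$) so much that it induces only an $O(\alpha_t)$ corrective step, whereas each of the $P-1$ gradients equal to $-1$ induces an ascending step of size $\approx\alpha_t$; consequently the iterates are driven onto the \emph{wrong} endpoint $x=+1$ rather than converging to the true optimum $x^\ast=-1$, which forces linear regret.

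First I would fix the comparator. Over any block of $P$ consecutive rounds exactly one round uses $f_t(x)=Px$ and the remaining $P-1$ use $f_t(x)=-x$, so (taking $T$ a multiple of $P$) $\sum_{t=1}^{T}f_t(x)=(T/P)\,x$, minimized on $[-1,1]$ at $x^\ast=-1$; hence $R_T=\sum_{t=1}^{T}(f_t(x_t)-f_t(-1))$. On a round with gradient $-1$ this summand equals $-x_t-1\in[-2,0]$, and on the round with gradient $P$ it equals $P(x_t+1)\ge0$, both using $x_t\in[-1,1]$, which the projection guarantees. So it suffices to show that past some round the iterate fed to each $Px$-round is $1$ (up to a vanishing amount): that round then contributes $\approx2P$ to the regret while each of the $P-1$ subsequent rounds contributes at least $-2$, giving a per-period regret bounded away from $0$, whence $R_T=\Omega(T)$ and the average regret does not vanish.

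Next I would prove this ``trap at $+1$,'' first for RMSProp ($\beta_1=0$, $\alpha_t=\alpha/\sqrt t$). Two estimates do the work. (i) On a $Px$-round the step is $\alpha_t g_t/\sqrt{v_t}$ with $v_t\ge(1-\beta_2)g_t^2=(1-\beta_2)P^2$, so starting from $x=1$ the iterate drops by at most $\alpha_t/\sqrt{1-\beta_2}=O(\alpha_t)$. (ii) Over the ensuing $P-1$ rounds with $g=-1$ the denominator decays, $v_{t+j}\le\beta_2^{\,j}P^2+1$, so for the $\Theta(P)$ indices $j$ with $\beta_2^{\,j}P^2\le1$ the upward step is at least $\alpha_{t+P}/\sqrt2$, making the cumulative ascent over a period $\gtrsim P\alpha_t$. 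Choosing $P$ large relative to $1/(1-\beta_2)$ makes the ascent exceed the drop, so by the projection the iterate returns to the boundary $1$ within each period; a short induction — the net per-period motion is positive until the boundary is first reached and $\sum_t\alpha_t=\infty$, so it is reached after finitely many periods and then stays — establishes the claim used above. For general $\beta_1$ (the Adam case), the large gradient is carried forward by momentum for a few rounds, but the extra descent this causes on the following $-1$ rounds is bounded by $\alpha_t\sum_{j\ge0}(\beta_1/\sqrt{\beta_2})^{\,j}=O(\alpha_t)$ provided $\beta_1<\sqrt{\beta_2}$, which is absorbed into the $O(\alpha_t)$ budget of~(i).

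The hard part is the trap — estimate (ii) together with the induction. One must track the EMA $v_t$ (and $m_t$ when $\beta_1>0$) across a whole period, show the drop off the boundary is $O(\alpha_t)$ \emph{uniformly in $t$}, show the $P-1$ ascending rounds more than compensate, and then propagate this through the iteration while the learning rate vanishes. The projection onto $[-1,1]$ is what actually pins the iterates at $+1$, and the subtle point is verifying that the ascent genuinely reattains the boundary every period; this is also exactly where the hypothesis ``$\exists P\ge3$'' enters, in the stronger form that $P$ must be large depending on $\beta_1$ and $\beta_2$.
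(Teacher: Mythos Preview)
The paper does not prove this lemma itself; it simply cites Theorem~1 of Reddi et al.\ and defers entirely to that reference. Your proposal is a correct sketch of precisely that argument --- the ``trap at $+1$'' mechanism, the $O(\alpha_t)$ drop on the $P$-round versus the $\Theta(P\alpha_t)$ ascent over the remaining $P-1$ rounds, the $\beta_1<\sqrt{\beta_2}$ condition to control momentum carryover, and the need for $P$ large relative to the hyperparameters --- so it is aligned with the proof the paper invokes.
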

\begin{proof}
See \cite{reddi2019convergence} Thm.1 for proof.
\end{proof}

\begin{lemma}
For the problem defined above, there's a threshold of $\beta_2$ above which RMSprop converge.
\end{lemma}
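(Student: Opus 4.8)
The plan is to analyze RMSProp (take $\beta_1=0$; non-zero momentum is handled identically since $m_t$ also tracks the periodic average) directly on this period-$P$ problem, exploiting the fact that as $\beta_2\to 1^-$ the adaptive denominator $\sqrt{v_t}$ effectively freezes, after which RMSProp behaves like SGD with a slowly decaying step that is pushed toward the true average gradient. First I would record the per-period structure: in any block of $P$ consecutive steps the optimizer sees $g_t=+P$ exactly once (when $t\bmod P=1$) and $g_t=-1$ exactly $P-1$ times, so $\sum_{\mathrm{period}}f_t(x)=x$ and the minimizer over $[-1,1]$ is $x^{\star}=-1$; moreover the per-period average gradient is $\tfrac1P\bigl(P-(P-1)\bigr)=\tfrac1P>0$ (pointing toward $x^{\star}$) and the per-period average of $g_t^2$ is $\bar v:=\tfrac{P^2+(P-1)}{P}$.

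The key lemma I would establish is a uniform-in-time control on $v_t$. Since $v_t$ is an EMA of $g_t^2\in\{1,P^2\}$ driven by a fixed periodic pattern, the period-to-period map $v\mapsto\beta_2^{P}v+(\mathrm{const})$ is an affine contraction, so $v_t$ converges to a unique periodic orbit; the orbit's mean equals $\tfrac{1-\beta_2}{1-\beta_2^{P}}$ times a fixed $\beta_2$-weighted combination of the $g_j^2$ over one period, which tends to $\bar v$ as $\beta_2\to 1^-$, while the within-period increments of $v_t$ are $O\bigl((1-\beta_2)P^2\bigr)$. Consequently, for every $\rho>0$ there are $\beta_2^{\star}(P,\rho)<1$ and $T_0$ such that $v_t\in[\bar v-\rho,\,\bar v+\rho]$ for all $\beta_2>\beta_2^{\star}$ and all $t\ge T_0$. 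This is the step that converts the adaptive update back into an SGD-like one.

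Next I would lower-bound the (unprojected) net displacement of a length-$P$ block starting at a large step $t$,
\[
 x_{t-1}-x_{t+P-1}=\sum_{j=0}^{P-1}\frac{\alpha_{t+j}}{\sqrt{v_{t+j}}+\epsilon}\,g_{t+j}.
\]
Using $\alpha_{t+j}/\alpha_t\to 1$ and $v_{t+j}\in[\bar v-\rho,\bar v+\rho]$, the single positive term ($g=P$) is at least $\tfrac{(1-o(1))\alpha_t P}{\sqrt{\bar v+\rho}+\epsilon}$ and the $P-1$ negative terms contribute in total at most $\tfrac{(1+o(1))(P-1)\alpha_t}{\sqrt{\bar v-\rho}+\epsilon}$; since $\tfrac{P}{\sqrt{\bar v+\rho}+\epsilon}-\tfrac{P-1}{\sqrt{\bar v-\rho}+\epsilon}\to\tfrac{1}{\sqrt{\bar v}+\epsilon}>0$ as $\rho\to 0$, picking $\rho$ small (hence $\beta_2$ close to $1$) and $t$ large makes the whole sum at least $\tfrac{\alpha_t}{2(\sqrt{\bar v}+\epsilon)}>0$: each period moves $x$ toward $x^{\star}=-1$ by $\Theta(\alpha_t)$. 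The projection onto $[-1,1]$ only helps, because clipping can only move $x$ nearer to, or pin it at, the boundary that contains $x^{\star}$; a symmetric estimate bounds the within-period oscillation of $x$ by $C_P\alpha_t\to 0$.

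Finally I would close the argument: since $\alpha_t=\alpha_0/\sqrt t$ is not summable while $x$ stays in the bounded set $[-1,1]$, $x$ cannot keep decreasing by $\Theta(\alpha_t)$ every period forever, so it eventually enters the shrinking band $[-1,\,-1+C_P\alpha_t]$, and once inside it the ``drift toward, or clip at, $-1$'' property keeps it there while the band's width tends to $0$; hence $x_t\to x^{\star}=-1$ and the average regret vanishes. I expect the main obstacle to be making the $v_t$-freezing step quantitative and uniform in $t$: pinning down how close $\beta_2$ must be to $1$ as a function of $P$ (one recovers the familiar scale $1-\beta_2\lesssim 1/P^2$), controlling the $O\bigl(1/(1-\beta_2)\bigr)$-step transient before the periodic orbit is reached, and doing the bookkeeping that couples the projection, the block decomposition, and the slowly varying learning rate; the remaining estimates are routine.
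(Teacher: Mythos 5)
The paper does not actually prove this lemma itself: the appendix ``proof'' is a one-line citation to Shi et al.\ (2021), which establishes convergence of RMSProp for sufficiently large $\beta_2$ in a far more general (smooth non-convex) setting. Your argument is therefore a genuinely different, self-contained route specialized to the periodic counterexample, and its core is sound: the period map on $v_t$ is an affine contraction whose periodic orbit collapses onto the time-average $\bar v=(P^2+P-1)/P$ with within-period fluctuations of order $(1-\beta_2)P^2$, so for $\beta_2$ close enough to $1$ every denominator lies in $[\bar v-\rho,\bar v+\rho]$, the per-period margin $P/(\sqrt{\bar v+\rho}+\epsilon)-(P-1)/(\sqrt{\bar v-\rho}+\epsilon)$ is positive, and each period drifts $x$ toward $x^{\star}=-1$ by $\Theta(\alpha_t)$; non-summability of $\alpha_t=\alpha_0/\sqrt t$ plus the harmlessness of the projection then closes the argument. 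The $1-\beta_2\lesssim 1/P^2$ scale you identify is also consistent with the positivity condition $\rho/\bar v\lesssim 1/P$ with $\rho\sim(1-\beta_2)P^2$. What the citation buys the paper is rigor and generality for free; what your argument buys is an explicit elementary mechanism and a quantitative handle on the threshold, which meshes nicely with the paper's Figure 1 phase diagram. Two small caveats: the opening claim that nonzero $\beta_1$ is ``handled identically'' is unnecessary for this lemma (RMSProp has $\beta_1=0$) and is not immediate, since momentum shrinks the numerator at the single large-gradient step from $P$ to $\beta_1 m+(1-\beta_1)P$ and the per-period balance would have to be redone; and the transient before the periodic orbit is reached is only gestured at, though it is indeed routine because the contraction is geometric and the steps are uniformly bounded.
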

\begin{proof}
See \cite{naichenrmsprop} for details.
\end{proof}

\begin{lemma}[Lemma.3.3 in the main paper]
\label{lemma:converge_condition1}
For the problem defined by Eq.~\eqref{append_eq:counter}, ACProp algorithm converges $\forall \beta_1, \beta_2 \in (0,1), \forall P \in \mathbb{N},  P \geq 3$.
\end{lemma}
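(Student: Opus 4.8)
The plan is to exploit that on this deterministic problem the entire gradient sequence is explicit and $P$-periodic, and then to track the Lyapunov function $V_t = (x_t-x^*)^2 = (x_t+1)^2$ across one period. Since each $f_t$ is affine, $g_t = \nabla f_t$ does not depend on $x$: $g_t = P$ on the ``spike'' steps $t \bmod P = 1$ and $g_t = -1$ on the remaining $P-1$ steps of each period. The projection keeps $x_t \in [-1,1]$ and $x^* = -1$ is the optimum, so it suffices to show $x_t \to -1$. Writing one ACProp step as $x_t = \Pi_{[-1,1]}\!\big(x_{t-1} - u_t g_t\big)$ with effective stepsize $u_t = \alpha_t/\sqrt{s_{t-1}+\epsilon}$, non-expansiveness of $\Pi_{[-1,1]}$ (and feasibility of $-1$) gives $V_{t+1} \le V_t - 2 u_t g_t (x_t+1) + u_t^2 g_t^2$.

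First I would pin down the denominators. Because $|g_t|\le P$ and $m_t$ is an exponential average of such values, $m_t$ stays in a fixed bounded interval; moreover $m_t$ is bounded away from $P$ (the value $P$ is seen only once per $P$ steps, so the running average cannot approach it), which forces $\Delta g_t = g_t - m_t$ to be bounded away from $0$ at every spike. Hence $0 < s_{\min} \le s_t \le s_{\max} < \infty$ for all $t$ past the first few periods, with constants depending only on $(\beta_1,\beta_2,P)$ (the lower bound matters only when $\epsilon=0$; for $\epsilon>0$ it is automatic). Consequently $c_1 t^{-1/2} \le u_t \le c_2 t^{-1/2}$ with $0 < c_1 \le c_2$, so intra-period displacements of $x$ are $O(k^{-1/2})$ on period $k$.

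The core of the argument is the per-period drift. Summing the $V$-recursion over a period $[kP+1,(k+1)P]$ and replacing $x_t+1$ by $x_{kP+1}+1$ up to an $O(k^{-1})$ error, the total change is at most $-2(x_{kP+1}+1)\big[P u_{kP+1} - \sum_{j=1}^{P-1} u_{kP+1+j}\big] + O(k^{-1})$, the last term also absorbing $\sum u_t^2 g_t^2 = O(k^{-1})$. Everything hinges on showing the bracket is positive and of order $k^{-1/2}$; dropping the slowly varying $\sqrt t$ factors, positivity is equivalent to $\sqrt{s_{kP}+\epsilon}\,\sum_{j=1}^{P-1}\big(s_{kP+j}+\epsilon\big)^{-1/2} < P$. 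This is where the asynchronous update is decisive: the denominator the spike step $kP+1$ divides by is $s_{kP}$, which is built only from the $(\Delta g)^2$ of the preceding run of identical $-1$ gradients, during which $m_t$ relaxes toward $-1$ and $(\Delta g_t)^2$ decays; thus $s_{kP}$ is essentially the smallest denominator in the periodic orbit of $s_t$, each ratio $\sqrt{(s_{kP}+\epsilon)/(s_{kP+j}+\epsilon)}$ is $\le 1$, and the sum is $\le P-1 < P$. (When $\beta_2$ is near $1$ all within-period denominators are comparable, so the same bound holds for a different reason; the two regimes together cover all $\beta_1,\beta_2\in(0,1)$.) With $\gamma_k := P u_{kP+1} - \sum_j u_{kP+1+j} = \Theta(k^{-1/2})$, a routine telescoping on $V_{kP+1}$ finishes: if $V_{kP+1}$ stayed bounded away from $0$, then $\sum_k\big(2\gamma_k \sqrt{V_{kP+1}} - O(k^{-1})\big)$ would diverge (as $k^{-1/2}$ dominates $k^{-1}$), contradicting $V \ge 0$; hence $V_{kP+1} \to 0$, and since within-period increments vanish, $x_t \to -1$.

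The step I expect to be the real obstacle is this uniform lower bound on $\gamma_k$, i.e. controlling the shape of the periodic orbit of $s_t$ simultaneously for every $(\beta_1,\beta_2)\in(0,1)^2$ and every $P \ge 3$. The awkward cases are small $P$ (e.g.\ $P=3$, only two steps between spikes, with $m_t$ barely relaxing when $\beta_1 \approx 1$) and the interplay of the decay rates $\beta_1^2$ and $\beta_2$ that governs how fast $s_t$ shrinks during a run; a clean treatment probably splits on whether $\beta_2$ is bounded away from $1$ (the pre-spike denominator is genuinely the smallest) or close to $1$ (all per-period denominators are within a constant factor). Everything else — the denominator bounds, the intra-period Lipschitz estimate, the projection, and the $\epsilon=0$ degeneracy — is routine once this structure is in place.
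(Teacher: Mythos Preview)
Your plan coincides with the paper's at the decisive point: both reduce convergence to the structural fact that, in the limiting $P$-cycle, $s_{kP}$ --- the denominator used by the spike step $kP+1$ --- is the \emph{smallest} $s$-value in the period, so that $P/\sqrt{s_{kP}+\epsilon}$ outweighs the sum of the other $P-1$ reciprocal square roots and each period contributes a net push of order $k^{-1/2}$ toward $x^*=-1$. The paper gets there by deriving the closed-form limits $\lim_k m_{kP}$ and $\lim_k s_{kP}$ explicitly and then reading off $s_{kP}\le s_{kP+j}$ from those formulas; you reach the same inequality by the qualitative observation that $(\Delta g_t)^2$ decays during the run of $-1$'s, and you correctly flag this minimality as the one non-routine step.

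Where your write-up differs is the endgame, and there is a small but real gap. Your Lyapunov telescoping only gives $\liminf_k V_{kP}=0$: the argument ``$V_{kP}$ bounded away from $0$ $\Rightarrow$ divergent negative drift'' rules out $\inf_k V_{kP}>0$, but the per-period residual $O(k^{-1})$ (from $\sum u_t^2 g_t^2$ alone) is \emph{not} summable, so nothing yet prevents $V$ from oscillating back up after each visit near $0$. The paper avoids this by tracking the signed displacement directly rather than $V$: once $s_{kP}=\min_j s_{kP+j}$ is known, one has $x_{(k+1)P}\le \max\{-1+a_k,\ x_{kP}-\gamma_k\}$ with $a_k=\sum_{j\ge 2}u_{kP+j}\to 0$ and $\gamma_k\ge c/\sqrt{k}$, which together with $\sum_k \gamma_k=\infty$ and the clipping at $-1$ forces $x_{kP}\to -1$ without any squared remainder. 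Replacing your $V$-recursion by this one-line displacement bound patches the gap and leaves the rest of your outline intact.
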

\begin{proof}
We analyze the limit behavior of ACProp algorithm. Since the observed gradient is periodic with an integer period $P$, we analyze one period from with indices from $kP$ to $kP+P$, where $k$ is an integer going to $+\infty$.

From the update of ACProp, we observe that:
\begin{align}
    m_{kP}&= (1-\beta_1) \sum_{i=1}^{kP} \beta_1^{kP-i} \times (-1) + (1-\beta_1) \sum_{j=0}^{k-1} \beta_1^{kP-(jP+1)}(P+1) \\
    &\Big( \textit{For each observation with gradient } P, \textit{we break it into } P = -1 + (P+1) \Big) \nonumber \\
    &= -(1-\beta_1) \sum_{i=1}^{kP} \beta_1^{kP-i} + (1-\beta_1)(P+1) \beta_1^{-1} \sum_{j=0}^{k-1} \beta_1^{P(k-j)} \\
    &= -(1-\beta_1^{kP}) + (1-\beta_1) (P+1) \beta_1^{P-1} \frac{1-\beta_1^{(k-1)P}}{1-\beta_1^P}
\end{align}
\begin{align}
    \lim_{k \to \infty} m_{kP} &= -1 + (P+1)(1-\beta_1)\beta_1^{P-1}\frac{1}{1-\beta_1^P} = \frac{(P+1)\beta_1^{P-1}-P\beta_1^P-1}{1-\beta_1^P} \\
    &\Big( \textit{Since } \beta_1 \in [0,1) \Big) \nonumber \label{eq:m_lim}
\end{align}
Next, we derive $\lim_{k\to \infty}S_{kP}$. Note that the observed gradient is periodic, and $\lim_{k\to \infty} m_{kP} = \lim_{k \to \infty} m_{kP+P}$, hence $\lim_{k \to \infty} S_{kP} = \lim_{k \to \infty} S_{kP+P}$. Start from index $kP$, we derive variables up to $kP+P$ with ACProp algorithm.
\begin{align}
index = kP, \nonumber \\
m_{kP}&, S_{kP} \\
index = kP+1, \nonumber \\ 
m_{kP+1}&= \beta_1 m_0 + (1-\beta_1)P \\
S_{kP+1}&= \beta_2 S_{kP} + (1 - \beta_2) (P - m_{kP})^2 \\
index = kP+2,\nonumber \\
m_{kP+2}&= \beta_1 m_{kP+1} + (1-\beta_1)\times(-1) \\
&= \beta_1^2 m_{kP}+(1-\beta_1)\beta_1P+(1-\beta_1)\times(-1) \\
S_{kP+2}&= \beta_2 S_{kP+1} + (1-\beta_2)(-1-m_{kP+1})^2 \\
&=\beta_2^2 S_{kP}+(1-\beta_2)\beta_2 (P-m_{kP})^2 + (1-\beta_2)\big[\beta_1(P-m_{kP})-(P+1) \big]^2  \\
index = kP+3,\nonumber \\ 
m_{kP+3} &= \beta_1 m_{kP+2} + (1-\beta_1)\times (-1) \\
&= \beta_1^3 m_{kP}+(1-\beta_1)\beta_1^2P+(1-\beta_1)\beta_1 \times(-1)+(1-\beta_1)\times(-1) \\
S_{kP+3}&= \beta_2S_2+(1-\beta_2)(-1-m_{kP+2})^2 \\
&= \beta_2^3 S_{kP}+(1-\beta_2)\beta_2^2 (P-m_{kP})^2 \nonumber \\
&+(1-\beta_2)\beta_2 \big[ \beta_1 (P-m_{kP})-(P+1) \big]^2 (\beta_2 + \beta_1^2) \\
index = kP+4,\nonumber \\
m_{kP+4}&= \beta_1^4 m_{kP}+(1-\beta_1) \beta_1^3 P + (-1)(1-\beta_1)(\beta_1^2+\beta_1+1) \\
S_{kP+4}&= \beta_2 S_{kP+3} + (1-\beta_2) (-1 - m_{kP+3})^2 \\
&= \beta_2^4 S_{kP} + (1-\beta_2) \beta_2^3 (P - m_{kP})^2 \nonumber \\
&+ (1-\beta_2)\beta_2 \big[ \beta_1 (P-m_{kP})-(P+1) \big]^2 (\beta_2^2 + \beta_2 \beta_1 ^2 + \beta_1^4) \\
\cdot \cdot \cdot \nonumber \\
index = kP+P, \nonumber \\
m_{kP+P} &= \beta_1^P m_{kP} + (1-\beta_1) \beta_1^{P-1} P + (-1)(1-\beta_1) \big[ \beta_1^{P-2}+\beta_1^{P-3}+...+1 \big] \\
&= \beta_1^P m_{kP} + (1-\beta_1)\beta_1^{P-1}P + (\beta_1-1)\frac{1-\beta_1^{P-1}}{1-\beta1} \label{eq:m_KP+P} \\
S_{kP+P}&=\beta_2^P S_{kP} + (1-\beta_2) \beta_2^{P-1}(P-m_{kP})^2 \nonumber \\
&+ (1-\beta_2)\big[ \beta_1 (P-m_{kP})-(P+1) \big]^2 \big( \beta_2^{P-2}+\beta_2^{P-3} \beta_1^2 + ... + \beta_2^0 \beta_1^{2P-4} \big) \\
&= \beta_2^P S_{kP} + (1-\beta_2) \beta_2^{P-1}(P-m_{kP})^2 \nonumber \\
&+ (1-\beta_2)\big[ \beta_1 (P-m_{kP})-(P+1) \big]^2 \beta_2^{P-2} \frac{1 - (\beta_1^2 / \beta_2)^{P-1}}{1- (\beta_1^2 / \beta_2)} \label{eq:S_kP+P}
\end{align}
As $k$ goes to $+\infty$, we have
\begin{align}
    \lim_{k\to \infty}m_{kP+P} &= \lim_{k \to \infty}m_{kP} \\
    \lim_{k\to \infty}S_{kP+P} &= \lim_{k \to \infty}S_{kP} \label{eq:s_limit}
\end{align}
From Eq.~\eqref{eq:m_KP+P} we have:
\begin{equation}
    m_{kP+P} = \frac{(P+1)\beta_1^{P-1}-P\beta_1^{P}-1}{1-\beta_1^P}
\end{equation}
which matches our result in Eq.~\eqref{eq:m_lim}.
Similarly, from Eq.~\eqref{eq:S_kP+P}, take limit of $k \to \infty$, and combine with Eq.~\eqref{eq:s_limit}, we have
\begin{align}
    \lim_{k \to \infty}S_{kP}=\frac{1-\beta_2}{1-\beta_2^P} \Bigg[ \beta_2^{P-1}(P-\lim_{k \to \infty}m_{kP})^2 + \big[ \beta_1 (P- \lim_{k \to \infty}m_{kP}) - (P+1) \big]^2 \beta_2^{P-2} \frac{1-(\beta_1^2/\beta_2)^{P-1}}{1-(\beta_1^2/\beta_2)} \Bigg] \label{eq:S_lim}
\end{align}


Since we have the exact expression for the limit, it's trivial to check that
\begin{equation}
    S_{i} \geq S_{kP},\ \ \forall i \in [kP+1, kP+P], i \in \mathbb{N}, k \to \infty
\end{equation}
Intuitively, suppose for some time period, we only observe a constant gradient -1 without observing the outlier gradient ($P$); the longer the length of this period, the smaller is the corresponding $S$ value, because $S$ records the difference between observations. Note that since last time that outlier gradient ($P$) is observed (at index $kP+1-P$), index $kP$ has the longest distance from index $kP+1-P$ without observing the outlier gradient ($P$). Therefore, $S_{kP}$ has the smallest value within a period of $P$ as $k$ goes to infinity.

For step $kP+1$ to $kP+P$, the update on parameter is:
\begin{align}
    index = kP+1, -\Delta_x^{kP+1} &= \frac{\alpha_0}{\sqrt{kP+1}} \frac{P}{\sqrt{S_{kP}}+\epsilon} \\
    index = kP+2, -\Delta_x^{kP+2} &= \frac{\alpha_0}{\sqrt{kP+2}} \frac{-1}{\sqrt{S_{kP+1}}+\epsilon} \\
    ... \nonumber \\
    index = kP+P, -\Delta_x^{kP+P} &= \frac{\alpha_0}{\sqrt{kP+P}} \frac{-1}{\sqrt{S_{kP+P-1}}+\epsilon}
\end{align}
So the negative total update within this period is:
\begin{align}
    \frac{\alpha_0}{\sqrt{kP+1}}\frac{P}{\sqrt{S_{kP}}+\epsilon}&- \underbrace{\Bigg[  \frac{\alpha_0}{\sqrt{kP+2}}\frac{1}{\sqrt{S_{kP+1}}+\epsilon}+...+\frac{\alpha_0}{\sqrt{kP+P}}\frac{1}{\sqrt{S_{kP+P}}+\epsilon}
    \Bigg]
    }_{P-1 \textit{ terms}} \\
    &\geq \frac{\alpha_0}{\sqrt{kP+1}}\frac{P}{\sqrt{S_{kP}}+\epsilon}-
    \underbrace{
    \Bigg[
    \frac{\alpha_0}{\sqrt{kP+1}}\frac{1}{\sqrt{S_{kP}}+\epsilon}+...+\frac{\alpha_0}{\sqrt{kP+1}}\frac{1}{\sqrt{S_{kP}}+\epsilon}
    \Bigg]
    }_{P-1 \textit{ terms}}
    \\
    &\Big( \textit{Since } S_{kP} \textit{ is the minimum within the period} \Big) \nonumber \\
    &= \frac{\alpha_0}{\sqrt{S_{kP}}+\epsilon} \frac{1}{\sqrt{kP+1}}
\end{align}
where $\alpha_0$ is the initial learning rate. Note that the above result hold for every period of length $P$ as $k$ gets larger. Therefore, for some $K$ such that for every $k>K$, $m_{kP}$ and $S_{kP}$ are close enough to their limits, the total update after $K$ is:
\begin{equation}
    \sum_{k=K}^{\infty} \frac{\alpha_0}{\sqrt{S_{kP}}+\epsilon} \frac{1}{\sqrt{kP+1}} \approx \frac{\alpha_0}{\sqrt{\lim_{k\to \infty}S_{kP}}+\epsilon} \frac{1}{\sqrt{P}} \sum_{k=K}^{\infty} \frac{1}{\sqrt{k}} \textit{\ \  If $K$ is sufficiently large}
\end{equation}
where $\lim_{k \to \infty}S_{kP}$ is a constant determined by Eq.~\eqref{eq:S_lim}. Note that this is the negative update; hence ACProp goes to the negative direction, which is what we expected for this problem.
Also considering that $\sum_{k=K}^\infty \frac{1}{\sqrt{k}} \to \infty$, hence ACProp can go arbitrarily far in the correct direction if the algorithm runs for infinitely long, therefore the bias caused by first $K$ steps will vanish with running time. Furthermore, since $x$ lies in the bounded region of $[-1,1]$, if the updated result falls out of this region, it can always be clipped. Therefore, for this problem, ACProp always converge to $x=-1, \forall \beta_1, \beta_2 \in (0,1)$. When $\beta_2=1$, the denominator won't update, and ACProp reduces to SGD (with momentum), and it's shown to converge.
\end{proof}

\begin{lemma}
For any constant $\beta_1, \beta_2 \in [0,1) $ such that $\beta_1 < \sqrt{\beta_2}$, there is a stochastic convex optimization problem for which Adam does not converge to the optimal solution. One example of such stochastic problem is:
\begin{equation}
    f_t(x) = 
    \begin{cases}
    Px &\ \ \textit{with probability } \frac{1+\delta}{P+1} \\
    -x &\ \ \textit{with probability } \frac{P-\delta}{P+1}
    \end{cases} \ \ x \in [-1,1]
    \label{eq:stochastic_counter}
\end{equation}
\end{lemma}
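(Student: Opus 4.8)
The plan is to show that, on this problem, Adam's \emph{expected} update points away from the minimizer, so $x_t$ cannot converge to it and the average regret stays bounded below by a positive constant. First I would locate the optimum: since $\mathbb{E}\big[\nabla f_t(x)\big] = P\cdot\tfrac{1+\delta}{P+1} + (-1)\cdot\tfrac{P-\delta}{P+1} = \delta > 0$, the expected loss $\mathbb{E}[f_t(x)] = \delta x$ on $x\in[-1,1]$ is minimized at $x^\ast = -1$. Hence ``converging to the optimal solution'' means $x_t\to -1$, and it suffices to exhibit a constant $c>0$ with $\mathbb{E}[x_t] \ge -1 + c$ for all large $t$, because then $\tfrac{1}{T}\,\mathbb{E}[R_T] = \tfrac{\delta}{T}\sum_{t=1}^T\big(\mathbb{E}[x_t]+1\big) \ge \delta c > 0$.

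Next I would perform a per-block analysis in expectation, analogous to the limiting-period computation used for Lemma~\ref{lemma:converge_condition1}: condition on the state $(x_t, m_t, v_t)$ at the start of a block of $P+1$ consecutive steps, in which the outlier gradient $P$ is seen $1+\delta$ times in expectation and the gradient $-1$ is seen $P-\delta$ times. The decisive point is the asymmetry in the preconditioned step $-\alpha_s\, m_s/(\sqrt{v_s}+\epsilon)$. Whenever $g_s = P$ is observed we have $v_s \ge (1-\beta_2)P^2$, so that ``good'' step toward $-1$ has magnitude at most $O\big(\alpha_s/\sqrt{1-\beta_2}\big)$, independent of $P$; moreover one occurrence of $P$ enters $m_s$ with weight $\propto \beta_1^k$ but enters $v_s$ with weight $\propto \beta_2^k$, so its cumulative contribution to $\sum_k m_{s+k}/(\sqrt{v_{s+k}}+\epsilon)$ is controlled by $\sum_k (\beta_1/\sqrt{\beta_2})^k < \infty$ --- this is exactly where the hypothesis $\beta_1 < \sqrt{\beta_2}$ is used. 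On the (typical) long stretches where $g_s=-1$, $v_s$ relaxes toward $(1-\beta_2)\sum_{j\ge 0}\beta_2^j = 1$, so each ``bad'' step toward $+1$ has magnitude $\approx \alpha_s$. Summing over the block gives $\mathbb{E}[x_{t+P+1} - x_t \mid \mathcal{F}_t] \gtrsim \alpha_t\big((P-\delta) - C(1+\delta)/\sqrt{1-\beta_2}\big)$ for an absolute constant $C$, which is strictly positive once $P$ is taken large (and $\delta$ small), whenever $x_t$ is not already clamped at $+1$.

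Finally I would telescope this strictly positive expected drift across blocks: together with a decreasing learning rate such as $\alpha_t = \alpha_0/\sqrt{t}$ and the projection onto $[-1,1]$, the drift prevents $\mathbb{E}[x_t]$ from approaching $-1$ --- indeed it pushes $x_t$ toward $+1$ --- which yields the constant $c$ above and hence $\Omega(\delta)$ average regret. The main obstacle is the bookkeeping for the momentum term: with $\beta_1>0$ a single outlier gradient $P$ is smeared over many later updates, and one must prove rigorously (not merely heuristically) that its total helpful displacement is $O(1/\sqrt{1-\beta_2})$ rather than $\Omega(P)$; this is precisely the step that forces $\beta_1^2 < \beta_2$, and --- unlike the clean periodicity of problem~\eqref{append_eq:counter} --- it must be handled in the presence of stochastic fluctuations, e.g.\ by conditioning on the filtration $\mathcal{F}_t = \sigma(g_1,\dots,g_t)$ and bounding the conditional expectations of $m_t/(\sqrt{v_t}+\epsilon)$, along the lines of \cite{reddi2019convergence}.
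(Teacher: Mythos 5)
There is nothing in the paper to compare against here: the paper's entire ``proof'' of this lemma is the single line ``See Thm.3 in \cite{reddi2019convergence}'', so the result is imported wholesale rather than proved. Your sketch is a credible reconstruction of the mechanism behind that cited theorem, and it correctly isolates the two essential ingredients: the correct optimum $x^\ast=-1$ (since $\mathbb{E}[\nabla f_t]=\delta>0$), and the asymmetry of the preconditioned steps --- each observation of the gradient $P$ forces $v_s\geq(1-\beta_2)P^2$, so its helpful displacement is $O(\alpha_s/\sqrt{1-\beta_2})$ uniformly in $P$, and its momentum tail is geometric with ratio $\beta_1/\sqrt{\beta_2}$, which is exactly where the hypothesis $\beta_1<\sqrt{\beta_2}$ enters. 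Two points in your outline are still heuristic and would need to be made quantitative to turn this into a proof. First, the claim that the ``bad'' steps have magnitude $\approx\alpha_s$ is not automatic: the crude bound $v_s\leq P^2$ would only give bad steps of size $\alpha_s/P$, and then the drift estimate collapses; you need to show that after a $P$-observation, $v_s$ returns to $O(1)$ within $O(\log P/\log(1/\beta_2))$ steps, so that all but a vanishing fraction of the $P-\delta$ bad steps in a block indeed have denominator $O(1)$. Second, your per-block conditional-expectation bound treats the number and placement of $P$-observations within a block as if they were deterministic; since the step sizes depend nonlinearly on the arrival times through $v_s$ and $m_s$, the conditional expectation must be bounded by controlling these correlations explicitly (this is the bulk of the work in Reddi et al.'s Theorem 3, and is why their argument is considerably longer than this outline). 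With those two steps supplied, your approach is the standard one and would prove the lemma; as written it is an accurate road map rather than a proof, which is still more than the paper itself provides.
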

\begin{proof}
See Thm.3 in \cite{reddi2019convergence}.
\end{proof}
\begin{lemma}
For the stochastic problem defined by Eq.~\eqref{eq:stochastic_counter}, ACProp converge to the optimal solution, $\forall \beta_1, \beta_2 \in (0,1)$.
\end{lemma}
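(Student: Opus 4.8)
The plan is to exploit the one feature that distinguishes ACProp from Adam on this instance: the preconditioner $1/(\sqrt{s_{t-1}}+\epsilon)$ is measurable with respect to the history $g_1,\dots,g_{t-1}$ and hence independent of the fresh gradient $g_t$, so that the \emph{expected} one-step move always points toward the minimizer $x^\ast=-1$; a standard stochastic-approximation estimate then finishes the job. First I would observe that for the loss in Eq.~\eqref{eq:stochastic_counter} the gradient $\nabla f_t(x)=g_t$ is \emph{independent of $x$}: it is an i.i.d.\ sequence with $g_t\in\{-1,P\}$, $\mathbb{E}g_t=P\tfrac{1+\delta}{P+1}-\tfrac{P-\delta}{P+1}=\delta>0$, and $\mathbb{E}g_t^2\le P^2$. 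Since $\mathbb{E}f_t(x)=\delta x$ is minimized on $[-1,1]$ at $x^\ast=-1$, ``converging to the optimal solution'' means $x_t\to-1$ (equivalently, vanishing expected average regret, since $f_t$ is $1$-Lipschitz). Write $\mathbb{E}_{t-1}[\cdot]=\mathbb{E}[\,\cdot\mid g_1,\dots,g_{t-1}]$; then $x_{t-1},m_{t-1},s_{t-1}$ are determined by the past while $g_t$ is independent of it.

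Next I would establish a two-sided bound on the denominator. Since $m_0=0$ and $g_t\in[-1,P]$, an easy induction gives $m_t\in[-1,P]$, so $|\Delta g_t|=|g_t-m_t|\le P+1$ and hence $0\le s_t\le (P+1)^2$ for every $t$ and every $\beta_1,\beta_2\in[0,1)$; therefore $\epsilon\le\sqrt{s_{t-1}}+\epsilon\le U:=P+1+\epsilon$. (For $\beta_2=1$ the denominator is the constant $\epsilon$ and ACProp is SGD-with-momentum, already known to converge, so assume $\beta_2<1$.) Now let $y_t=x_{t-1}-\alpha_t g_t/(\sqrt{s_{t-1}}+\epsilon)$ and $x_t=\Pi_{[-1,1]}(y_t)$; in one dimension the (weighted) projection onto an interval coincides with the Euclidean one, which is nonexpansive and fixes $x^\ast=-1$, so $(x_t+1)^2\le (y_t+1)^2$. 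Expanding and applying $\mathbb{E}_{t-1}$, the cross term is $-2\alpha_t(x_{t-1}+1)\mathbb{E}_{t-1}[g_t]/(\sqrt{s_{t-1}}+\epsilon)=-2\alpha_t\delta(x_{t-1}+1)/(\sqrt{s_{t-1}}+\epsilon)\le-\tfrac{2\alpha_t\delta}{U}(x_{t-1}+1)$ (using $x_{t-1}+1\ge0$), and the quadratic term is $\le \alpha_t^2P^2/\epsilon^2$. Taking full expectations, writing $a_t=\mathbb{E}[(x_t+1)^2]$, and using $x_{t-1}+1\in[0,2]$ so that $\mathbb{E}[x_{t-1}+1]\ge\tfrac12 a_{t-1}$, I obtain
\begin{equation}
a_t\ \le\ \Bigl(1-\tfrac{\alpha_t\delta}{U}\Bigr)a_{t-1}+\tfrac{P^2}{\epsilon^2}\alpha_t^2 .
\end{equation}

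To conclude, I would solve this recursion with the schedule $\alpha_t=\alpha_0/\sqrt t$ of Lemma~\ref{thm:beliefgrad_always_converge}: since $\sum_t\alpha_t=\infty$ the homogeneous factor $\prod_{k\le t}(1-\alpha_k\delta/U)$ tends to $0$, and the standard stochastic-approximation bound shows the accumulated noise contribution is $O(\alpha_t)=O(1/\sqrt t)$, so $a_t=\mathbb{E}[(x_t-x^\ast)^2]\to0$; thus $x_t\to-1$ in $L^2$ (and in probability), uniformly in $\beta_1,\beta_2\in(0,1)$, which also yields $\tfrac1T\mathbb{E}\sum_{t\le T}\bigl(f_t(x_{t-1})-f_t(x^\ast)\bigr)\to0$.

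The hard part is really the drift inequality in the second paragraph, and the point there is conceptual rather than computational: the sign of $\mathbb{E}_{t-1}[g_t/(\sqrt{s_{t-1}}+\epsilon)]$ equals the sign of $\mathbb{E}g_t=\delta>0$ \emph{only because} $s_{t-1}$, not $s_t$, appears in the denominator; for a synchronous optimizer the rare large gradient $P$ inflates its own denominator and the analogous conditional expectation can be negative, which is exactly the mechanism behind Adam's failure here. The remaining care is (i) the uniform-in-$(\beta_1,\beta_2)$ upper bound $s_{t-1}\le (P+1)^2$, without which the contraction constant would not be bounded away from $1$, and (ii) the bookkeeping for the projection, which is routine in one dimension but must be stated explicitly since the raw update $y_t$ may leave $[-1,1]$.
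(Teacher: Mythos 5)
Your proof is correct, and it rests on the same key mechanism as the paper's: because $s_{t-1}$ is measurable with respect to the past, $\mathbb{E}_{t-1}\bigl[g_t/(\sqrt{s_{t-1}}+\epsilon)\bigr]=\delta/(\sqrt{s_{t-1}}+\epsilon)>0$, and the uniform bound $s_t\le(P+1)^2$ (which the paper also invokes) keeps this drift bounded away from zero by $\delta/(P+1+\epsilon)$. Where you genuinely diverge is in how the conclusion is extracted. The paper simply sums the per-step expected displacement, notes that $\sum_t t^{-1/2}$ diverges, and asserts that $x$ ``drifts'' to $-1$ after clipping; this is an informal argument, and it is slightly delicate because the projection $x_t=\max(y_t,-1)$ only gives $\mathbb{E}[x_t]\ge\mathbb{E}[y_t]$, so the unclipped drift bound does not transfer to the clipped iterates in the direction one wants. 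You instead run a Robbins--Monro style Lyapunov recursion on $a_t=\mathbb{E}[(x_t+1)^2]$, using nonexpansiveness of the one-dimensional projection about the fixed point $x^\ast=-1$ together with $(x_{t-1}+1)^2\le 2(x_{t-1}+1)$ to obtain $a_t\le(1-\alpha_t\delta/U)a_{t-1}+(P^2/\epsilon^2)\alpha_t^2$, from which $a_t\to 0$ follows by the standard recursion lemma since $\sum_t\alpha_t=\infty$ and $\alpha_t\to 0$. This buys you a rigorous $L^2$ convergence statement (with rate $O(1/\sqrt{t})$) that correctly accounts for the projection, at the cost of a slightly heavier setup; the paper's version buys brevity and intuition. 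Both arguments are uniform in $\beta_1,\beta_2\in(0,1)$ for the same reason, namely that the bounds $\epsilon\le\sqrt{s_{t-1}}+\epsilon\le P+1+\epsilon$ hold independently of the momentum parameters.
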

\begin{proof}
The update at step $t$ is:
\begin{align}
    \Delta_x^{t} = - \frac{\alpha_0}{\sqrt{t}} \frac{g_t}{\sqrt{S_{t-1}}+\epsilon}
\end{align}
Take expectation conditioned on observations up to step $t-1$, we have:
\begin{align}
    \mathbb{E} \Delta_{x}^t &=- \frac{\alpha_0}{\sqrt{t}} \frac{\mathbb{E}_t g_t}{\sqrt{S_{t-1}}+\epsilon} \\
    &= -\frac{\alpha_0}{\sqrt{t}\Big(\sqrt{S_{t-1}}+\epsilon\Big)} \mathbb{E}_t g_t \\
    &= -\frac{\alpha_0}{\sqrt{t}\Big(\sqrt{S_{t-1}}+\epsilon\Big)}  \Big[ P \frac{1+\delta}{P+1} - \frac{P-\delta}{P+1} \Big] \\
    &= - \frac{\alpha_0 \delta}{\sqrt{t} \Big( \sqrt{S_{t-1}} +\epsilon\Big)} \\
    &\leq - \frac{\alpha_0 \delta}{\sqrt{t} \Big( P+1 +\epsilon\Big)}
\end{align}
where the last inequality is due to $S_t \leq (P+1)^2$, because $S_t$ is a smoothed version of squared difference between gradients, and the maximum difference in gradient is $P+1$.
Therefore, for every step, ACProp is expected to move in the negative direction, also considering that $\sum_{t=1}^\infty \frac{1}{\sqrt{t}} \to \infty$, and whenever $x<-1$ we can always clip it to -1, hence ACProp will drift $x$ to -1, which is the optimal value. 
\end{proof}

\subsubsection{Numerical validations}
We validate our analysis above in numerical experiments, and plot the curve of $S_t$ and $g_t$ for multiple periods (as $k \to \infty$) in Fig.~\ref{fig:multiple_periods} and zoom in to a single period in Fig.~\ref{fig:single_period}. Note that the largest gradient $P$ (normalized as 1) appears at step $kP+1$, and $S$ takes it minimal at step $kP$ (e.g. $S_{kP}$ is the smallest number within a period). Note the update for step $kP+1$ is $g_{kP+1}/\sqrt{S_{kP}}$, it's the largest gradient divided the smallest denominator, hence the net update within a period pushes $x$ towards the optimal point.
\begin{figure}
    \centering
    \includegraphics[width=0.8\linewidth]{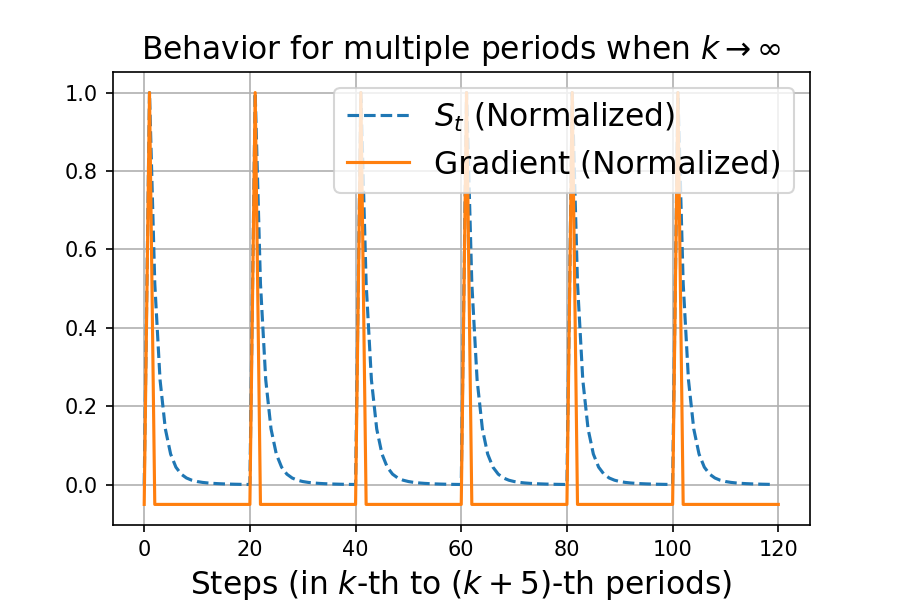}
    \caption{Behavior of $S_t$ and $g_t$ in ACProp of multiple periods for problem (1). Note that as $k \to \infty$, the behavior of ACProp is periodic.}
    \label{fig:multiple_periods}
\end{figure}
\begin{figure}
    \centering
    \includegraphics[width=0.8\linewidth]{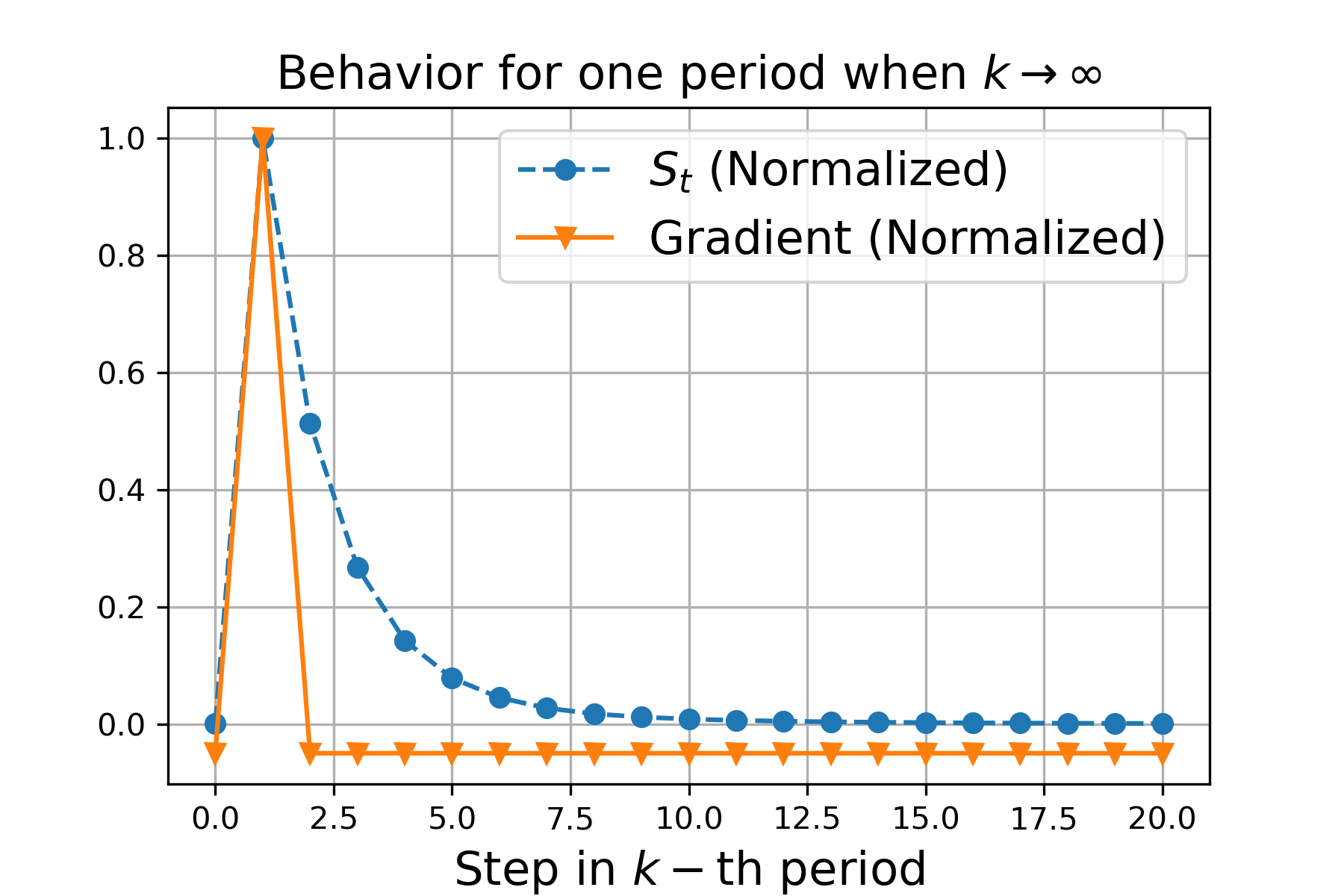}
    \caption{Behavior of $S_t$ and $g_t$ in ACProp of one period for problem (1).}
    \label{fig:single_period}
\end{figure}
\FloatBarrier
\subsection{Convergence analysis for Problem 2 in the main paper}

\begin{lemma}[Lemma 3.4 in the main paper]
\label{append_lemma:converge_sparse}
For the problem defined by Eq.~\eqref{append_eq:counter_sparse}, consider the hyper-parameter tuple $(\beta_1, \beta_2, P)$, there exists cases where ACProp converges but AdaShift with $n=1$ diverges, but not vice versa.
\begin{equation}
\scalebox{0.9}{$
    f_t(x) = 
    \begin{cases}
    P/2 \times x, \ \ \ \ \ t \% P ==1 \\
    -x, \ \ \ \ \ \ \ \ \ \ \  t \% P ==P-2 \\
    0, \ \ \ \ \ \ \ \ \ \ \ \ \ \textit{otherwise} \\
    \end{cases}
     P>3, P \in \mathbb{N}, x \in [0,1].
     $}
     \label{append_eq:counter_sparse}
\end{equation}
\end{lemma}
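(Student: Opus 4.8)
The plan is to reduce everything to the limiting periodic behaviour of the two algorithms and then to a single polynomial comparison. Since the gradients in Eq.~\eqref{append_eq:counter_sparse} equal $P/2$, $-1$ or $0$ according to $t \bmod P$ only, they are independent of $x$, so the auxiliary sequences are deterministic: $m_t$ obeys a linear recursion with $P$-periodic forcing and converges to a $P$-periodic cycle at rate $\beta_1^t$; then $s_t$ (for ACProp) and $v_t$ (for AdaShift) obey $\beta_2$-decay recursions driven by $(g_t-m_t)^2$, resp.\ $g_t^2$, which are asymptotically $P$-periodic, hence $s_t,v_t$ converge to $P$-periodic cycles as well. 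Let $s^{+}=\lim_{k\to\infty}s_{kP}$ and $s^{-}=\lim_{k\to\infty}s_{kP+P-3}$ be the denominators ACProp actually uses — the one multiplying the gradient $P/2$ at $t\equiv 1$ and the one multiplying the gradient $-1$ at $t\equiv P-2$ — and define $v^{\pm}$ analogously. Because every other gradient in a period vanishes, the net displacement over period $k$ is $\frac{\alpha_{0}}{\sqrt{kP}}\bigl(-\frac{P/2}{\sqrt{s^{+}}+\epsilon}+\frac{1}{\sqrt{s^{-}}+\epsilon}\bigr)+o(k^{-1/2})$. Using $\sum_{k}k^{-1/2}=\infty$ and the projection onto $[0,1]$, I would show ACProp reaches $x^{*}=0$ exactly when this bracket is negative, i.e.\ when $\frac{P}{2}(\sqrt{s^{-}}+\epsilon)>\sqrt{s^{+}}+\epsilon$, and otherwise $x$ is pinned near $1$ with average regret $\to \frac{P/2-1}{P}>0$; the identical dichotomy with $v^{\pm}$ governs AdaShift.

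Next I would write closed forms. Solving $m_{t}=\beta_{1}m_{t-1}+(1-\beta_{1})g_{t}$ around one period gives $m_{1}=\frac{(1-\beta_{1})(P/2-\beta_{1}^{3})}{1-\beta_{1}^{P}}$, geometric decay $m_{t}=\beta_{1}^{\,t-1}m_{1}$ on the long zero run, $m_{P-2}=\beta_{1}^{\,P-3}m_{1}-(1-\beta_{1})$, and so on. Substituting into $s_{t}=\beta_{2}s_{t-1}+(1-\beta_{2})(g_{t}-m_{t})^{2}$ and closing the cycle yields $s^{+},s^{-}$ as explicit rational functions of $\beta_{1},\beta_{2},P$; the structural point is that $s^{\pm}$ and $v^{\pm}$ have the same form except that AdaShift's forcing $g_t^2$ is supported only on the two spikes, whereas ACProp's forcing $(g_t-m_t)^2$ equals $m_t^2>0$ on every zero-gradient step when $\beta_1>0$. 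Consequently, during the long run of $P-4$ zero gradients preceding the $-1$ step, $v^{-}$ decays like $\beta_{2}^{\,P-4}$ and is exponentially small in $P$, while $s^{-}$ keeps a strictly positive ``momentum-variance floor'' (of order $(1-\beta_1)^2\beta_1^2(P/2)^2\beta_2^{\,P-4}$) — the quantitative form of the intuition in Fig.~\ref{fig:toy2_explain}.

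The crux is the comparison $\frac{P}{2}\sqrt{s^{-}}-\sqrt{s^{+}}\ \ge\ \frac{P}{2}\sqrt{v^{-}}-\sqrt{v^{+}}$ (the $\epsilon$ terms add the same amount $(P/2-1)\epsilon>0$ to both sides since $P\ge 4$, so they only help). After clearing the common positive factor $1-\beta_{2}^{P}$ and inserting the formulas above, this becomes a polynomial inequality in $\beta_{2}$ whose coefficients are the quantities $(P/2-m_{1})^{2}$, $(1+m_{P-2})^{2}$, $m_{P-2}^{2}$; factoring out the positive common factors collapses it to a manifestly true inequality of the shape $\beta_{1}^{4}\ge 0$, strict whenever $\beta_1>0$. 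This gives the ``not vice versa'' direction: AdaShift convergence, $\frac{P}{2}(\sqrt{v^{-}}+\epsilon)>\sqrt{v^{+}}+\epsilon$, always implies the ACProp condition. Strictness then produces a whole region of $(\beta_{1},\beta_{2},P)$ where $\frac{P}{2}(\sqrt{s^{-}}+\epsilon)>\sqrt{s^{+}}+\epsilon$ but $\frac{P}{2}(\sqrt{v^{-}}+\epsilon)\le\sqrt{v^{+}}+\epsilon$: fixing $\beta_{2}$ and taking $P$ large so that $v^{+}/v^{-}\sim 4\beta_{2}^{\,6-P}/P^{2}$ exceeds the threshold $P^{2}/4$, AdaShift diverges while ACProp still satisfies its weaker condition — matching Fig.~\ref{fig:toy2_area}. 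A trivial instance already makes the point: for $\beta_{1}=0$ we have $m_t=g_t$, hence $s_t\equiv 0$, so ACProp is $\epsilon^{-1}$-scaled SGD and converges (the per-period gradient sum is $P/2-1>0$), whereas AdaShift is unchanged and diverges for small $\beta_{2}$ and large $P$.

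The step I expect to be the real work is making the middle comparison rigorous uniformly: carrying out the exact polynomial algebra over \emph{all} $P\ge 4$ and \emph{all} $\beta_{1},\beta_{2}\in[0,1)$ — including the degenerate limits $\beta_{1}=0$ and $\beta_{2}\to 1$, and the regimes where $\beta_{2}$ is above or below $\beta_{1}^{2}$ (which changes the dominant term of the geometric sum behind the momentum-variance floor) — and checking that the $\epsilon$-perturbation and the projection/clipping argument do not erode the strict-inequality region. The limit-cycle existence and the drift-sign characterization in the first paragraph are routine given $\sum k^{-1/2}=\infty$ and boundedness of $x$.
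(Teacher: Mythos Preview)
Your plan follows the same skeleton as the paper's argument: pass to the limiting $P$-periodic cycle, write closed forms for the denominators at the two nonzero-gradient indices ($s^{+}=s_{kP}$, $s^{-}=s_{kP+P-3}$ for ACProp; $v^{\pm}$ analogously for AdaShift), reduce convergence to a sign-of-drift condition, and then compare the two conditions. The paper carries this out with essentially the same index bookkeeping you sketch.

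There are two substantive differences. First, the paper's comparison is the \emph{ratio} inequality $s^{+}/s^{-}\le v^{+}/v^{-}$ (equivalently $\sqrt{s^{+}}/\sqrt{s^{-}}\le\sqrt{v^{+}}/\sqrt{v^{-}}$), whereas you propose the \emph{difference} inequality $\tfrac{P}{2}\sqrt{s^{-}}-\sqrt{s^{+}}\ge\tfrac{P}{2}\sqrt{v^{-}}-\sqrt{v^{+}}$. Either implication suffices for ``not vice versa,'' but they are not equivalent; in particular your version also needs control on $\sqrt{s^{+}}-\sqrt{v^{+}}$, and your assertion that the algebra ``collapses to a manifestly true inequality of the shape $\beta_1^{4}\ge 0$'' is speculative. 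Second --- and this is worth knowing --- the paper does \emph{not} prove its ratio inequality symbolically: after deriving the closed forms it writes ``it's trivial to verify that $s^{+}/s^{-}\le v^{+}/v^{-}$'' and points to numerical plots of $\tfrac{s^{+}}{s^{-}}-\tfrac{v^{+}}{v^{-}}$ for two values of $\beta_1$. So the step you flag as ``the real work'' (a uniform algebraic comparison over all $(\beta_1,\beta_2,P)$) is exactly the step the paper does not carry out rigorously; your plan is more ambitious here, and if you pursue it you may find the ratio formulation cleaner than the difference one, since it removes the $P$-dependence from the target inequality.

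Your $\beta_1=0$ instance (ACProp degenerates to $\epsilon^{-1}$-scaled SGD and converges, while AdaShift is unaffected and diverges for large $P$) and the large-$P$ asymptotic $v^{+}/v^{-}\sim 4\beta_2^{\,6-P}/P^{2}$ are correct and are sharper than anything the paper supplies for the ``exists'' direction.
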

\begin{proof}
The proof is similar to Lemma.~\ref{lemma:converge_condition1},we derive the limit behavior of different methods.

\begin{align}
    index=kP, \nonumber \\
    m_{kP}&, v_{kP}, s_{kP}\nonumber \\
    index=kP+1, \nonumber \\
    m_{kP+1}&=m_{kP} \beta_1 + (1-\beta_1) P/2 \\
    v_{kP+1}&=v_{kP} \beta_2 + (1-\beta_2) P^2/4 \\
    s_{kP+1}&=s_{kP} \beta_2+(1-\beta_2)(P/2-m_{kP})^2 \\
    ... \nonumber \\
    index = kP+P-2, \nonumber \\
    m_{kP+P-2}&=m_{kP} \beta_1^{P-2}+(1-\beta_1) \frac{P}{2}\beta_1^{P-3} + (1-\beta_1) \times (-1) \\
    v_{kP+P-2}&= v_{kP} \beta_2^{P-2}+(1-\beta_2) \frac{P^2}{4} \beta_2^{P-3} + (1-\beta_2) \\
    s_{kP+P-2}&=s_{kP}\beta_2^{P-2}+(1-\beta_2)\beta_2^{P-3}(\frac{P}{2}-m_{kP})^2+(1-\beta_2)\beta_2^{P-4}m_{kP+1}^2+...\nonumber \\
    &+(1-\beta_2)\beta_2m_{kP+P-4}^2+(1-\beta_2)(m_{kP+P-3}+1)^2 \\
    index=kP+P-1, \nonumber \\
    m_{kP+P-1}&= m_{kP+P-1} \beta_1 \\
    v_{kP+P-1}&= v_{kP+P-2} \beta_2 \\
    s_{kP+P-1}&= s_{kP} \beta_2^{P-1}+(1-\beta_2)\beta_2^{P-1}(\frac{P}{2}-m_{kP})^2+(1-\beta_2)\beta_2^{P-3}m_{kP+1}^2+ ... \nonumber \\
    &+(1-\beta_2)\beta_2^2m_{kP+P-4}^2+(1-\beta_2)\beta_2(m_{kP+P-3}+1)^2+(1-\beta_2)m_{kP+P-2}^2 \\
    index = kP+P, \nonumber \\
    m_{kP+P}&=m_{kP}\beta_1^P+(1-\beta_1) \frac{P}{2}\beta_1^{P-1}+(1-\beta_1)(-1)\beta_1^2 \\
    v_{kP+P}&=v_{kP}\beta_2^P+(1-\beta_2)\frac{P^2}{4}\beta_2^{P-1}+(1-\beta_2)\beta_2^2\\
    s_{kP+p}&=s_{kP}\beta_2^P+(1-\beta_2)\beta_2^{P-1}(\frac{P}{2}-m_{kP})^2+(1-\beta_2)\beta_2^{P-2}m_{kP+1}^2+...\nonumber \\
    &+(1-\beta_2)\beta_2^3m_{kP+P-4}^2+(1-\beta_2)\beta_2^2(m_{kP+P-3}+1)^2\nonumber \\
    &+(1-\beta_2)m_{kP+P-2}^2\beta_2+(1-\beta_2)m_{kP+P-1}^2
\end{align}
Next, we derive the exact expression using the fact that the problem is periodic, hence $\lim_{k\to \infty} m_{kP}=\lim_{k\to \infty} m_{kP+P}, \lim_{k\to \infty} s_{kP}=\lim_{k\to \infty} s_{kP+P}, \lim_{k\to \infty} v_{kP}=\lim_{k\to \infty} v_{kP+P}$, hence we have:
\begin{align}
    \lim_{k\to\infty}m_{kP}&=\lim_{k\to\infty}m_{kP} \beta_1^P + (1-\beta_1) \frac{P}{2}\beta_1^{P-1}+(1-\beta_1)(-1)\beta_1^2 \\
    \lim_{k\to\infty}m_{kP}&=\frac{1-\beta_1}{1-\beta_1^P} \Big[\frac{P}{2}\beta_1^{P-1}-\beta_1^2\Big] \\
    \lim_{k\to\infty}m_{kP-1} &= \frac{1}{\beta_1} \lim_{k\to\infty} m_{kP} \\
    \lim_{k\to\infty}m_{kP-2}&=\frac{1}{\beta_1}\Big[ \lim_{k\to\infty}m_{kP-1}-(1-\beta_1)0 \Big] \\
    \lim_{k\to\infty}m_{kP-3}&=\frac{1}{\beta_1}\Big[ \lim_{k\to\infty} m_{kP-2} - (1-\beta_1)(-1) \Big]
\end{align}
Similarly, we can get
\begin{align}
    \lim_{k\to\infty}v_{kP}&=\frac{1-\beta_2}{1-\beta_2^{P}}\Big[ \frac{P^2}{4}\beta_2^{P-1}+\beta_2^2 \Big]  \\
    \lim_{k\to\infty}v_{kP-1}&=\frac{1}{\beta_2}\lim_{k\to\infty}v_{kP} \\
    \lim_{k\to\infty}v_{kP-2}&=\frac{1}{\beta_2}\lim_{k\to\infty}v_{kP-1} \\
    \lim_{k\to\infty}v_{kP-3}&=\frac{1}{\beta_2}\Big[ \lim_{k\to\infty}v_{kP-2} - (1-\beta_2)\times1^2 \Big]
\end{align}
For ACProp, we have the following results:
\begin{align}
    \lim_{k\to\infty}s_{kP}&=\lim_{k\to\infty}\frac{1-\beta_2}{1-\beta_2^P}\Big[ \beta_2^{P-4}(\frac{P}{2}-m_{kP})^2+\beta_2^3 \frac{\beta_2^{P-5}-\beta_1^{2(P-4)}\beta_2}{1-\beta_1^2\beta_2} +\beta_2^2(m_{kP+P-3}+1)^2 \nonumber \\
    &+\beta_2 m_{kP+P-2}^2+m_{kP+P-1}^2 \Big] \\
    \lim_{k\to\infty}s_{kP-1}&=\lim_{k\to\infty}\frac{1}{\beta_2}\Big[s_{kP}-(1-\beta_2) m_{kP}^2 \Big] \\
    \lim_{k\to\infty}s_{kP-2}&=\lim_{k\to\infty}\frac{1}{\beta_2}\Big[s_{kP-1}-(1-\beta_2) m_{kP-1}^2 \Big] \\
    \lim_{k\to\infty}s_{kP-3}&=\lim_{k\to\infty}\frac{1}{\beta_2}\Big[s_{kP-2}-(1-\beta_2) (m_{kP-2}+1)^2 \Big] \\
\end{align}
Within each period, ACprop will perform a positive update $P/(2\sqrt{s^{+}})$ and a negative update $-1/\sqrt{s^{-}}$, where $s^{+}$ ($s^{-}$) is the value of denominator before observing positive (negative) gradient. Similar notations for $v^{+}$ and $v^-$ in AdaShift, where $s^+=s_{kP},s^-=s_{kP-3},v^+=v_{kP},v^-=v_{kP-3}$. A net update in the correct direction requires $\frac{P}{2\sqrt{s^{+}}} > \frac{1}{\sqrt{s^-}}$, (or $ s^+/s^-< P^2/4$). Since we have the exact expression for these terms in the limit sense, it's trivial to verify that $s^+/s^-\leq v^+/v^-$ (e.g. the value $\frac{s^+}{s^-}-\frac{v^+}{v^-}$ is negative as in Fig.~\ref{fig:diff_beta1_0.2} and \ref{fig:diff_beta1_0.9}), hence ACProp is easier to satisfy the convergence condition.
\end{proof}
\begin{figure*}
\begin{minipage}{0.48\linewidth}
    \includegraphics[width=\linewidth]{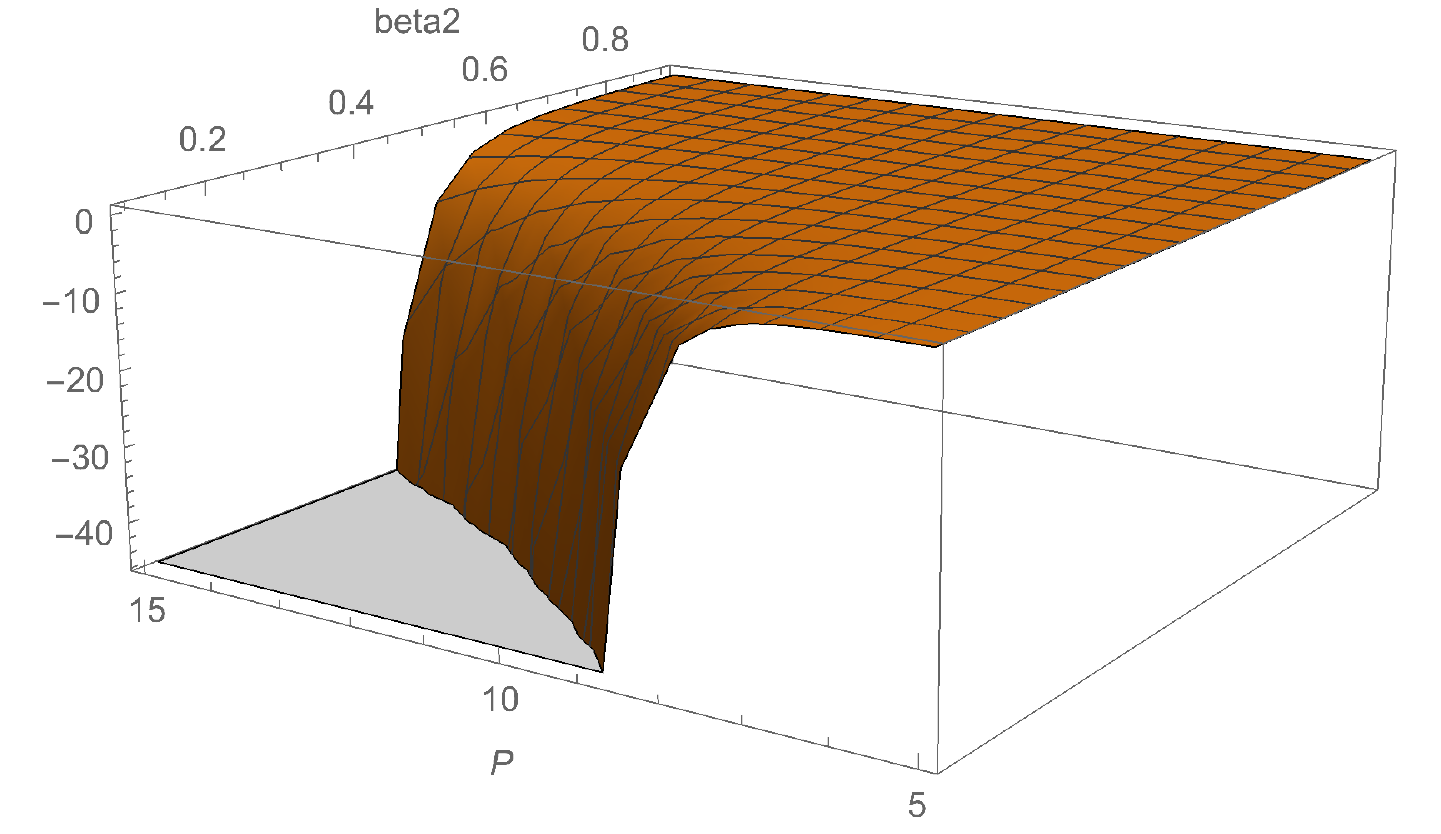}
    \caption{Value of $\frac{s^+}{s^-}-\frac{v^+}{v^-}$ when $\beta_1=0.2$}
    \label{fig:diff_beta1_0.2}
\end{minipage}
\begin{minipage}{0.48\linewidth}
    \includegraphics[width=\linewidth]{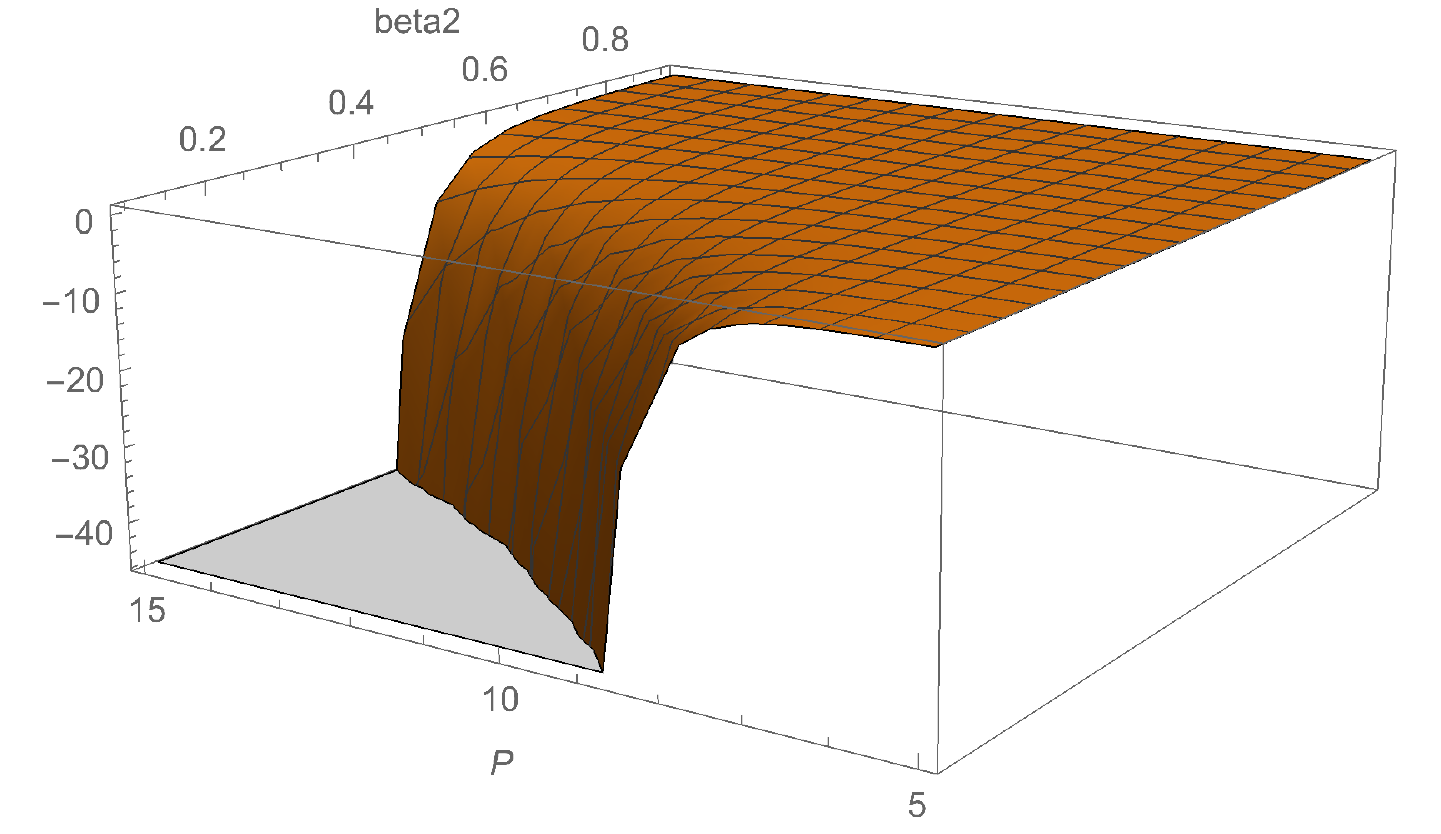}
    \caption{Value of $\frac{s^+}{s^-}-\frac{v^+}{v^-}$ when $\beta_1=0.9$}
    \label{fig:diff_beta1_0.9}
\end{minipage}
\end{figure*}
\subsection{Numerical experiments}
We conducted more experiments to validate previous claims. We plot the area of convergence for different $\beta_1$ values for problem (1) in Fig.~\ref{fig:toy1_0.5} to Fig.~\ref{fig:toy1_0.9}, and validate the always-convergence property of ACProp with different values of $\beta_1$.  We also plot the area of convergence for problem (2) defined by Eq.~\eqref{append_eq:counter_sparse}, results are shown in Fig.~\ref{fig:toy2_0.85} to Fig.~\ref{fig:toy2_0.95}. Note that for this problem the always-convergence does not hold, but ACProp has a much larger area of convergence than AdaShift.
\newpage
\begin{figure}
    \centering
    \includegraphics[width=0.9\linewidth]{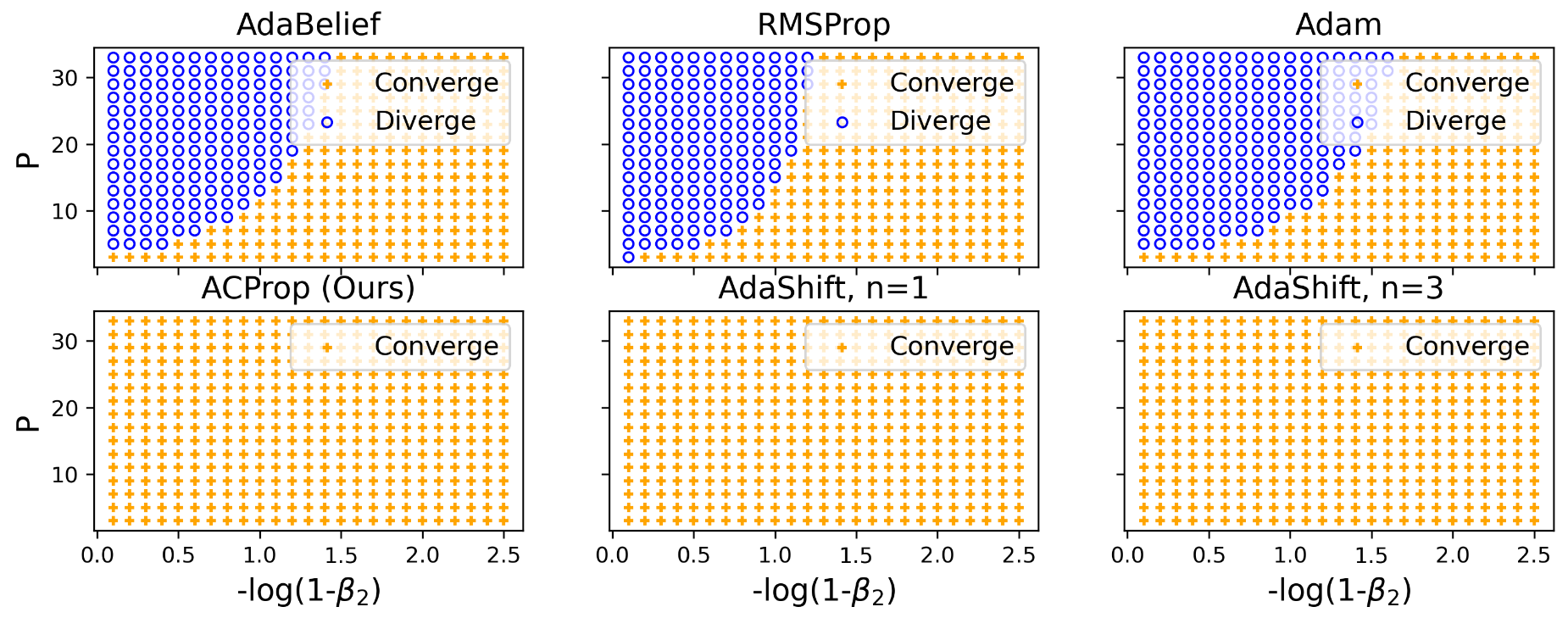}
    \caption{Numerical experiments on problem (1) with $\beta_1=0.5$}
    \label{fig:toy1_0.5}
\end{figure}

\begin{figure}
    \centering
    \includegraphics[width=0.9\linewidth]{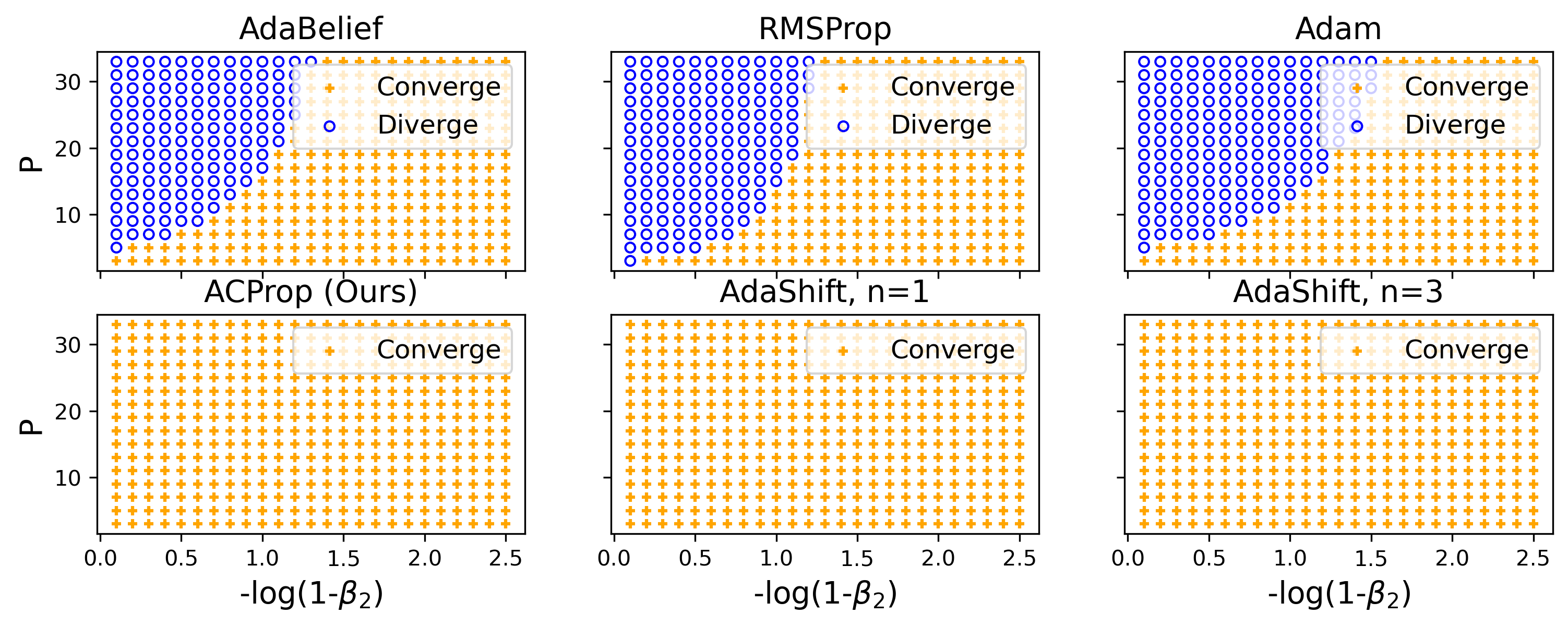}
    \caption{Numerical experiments on problem (1) with $\beta_1=0.5$}
    \label{fig:toy1_0.7}
\end{figure}

\begin{figure}
    \centering
    \includegraphics[width=0.9\linewidth]{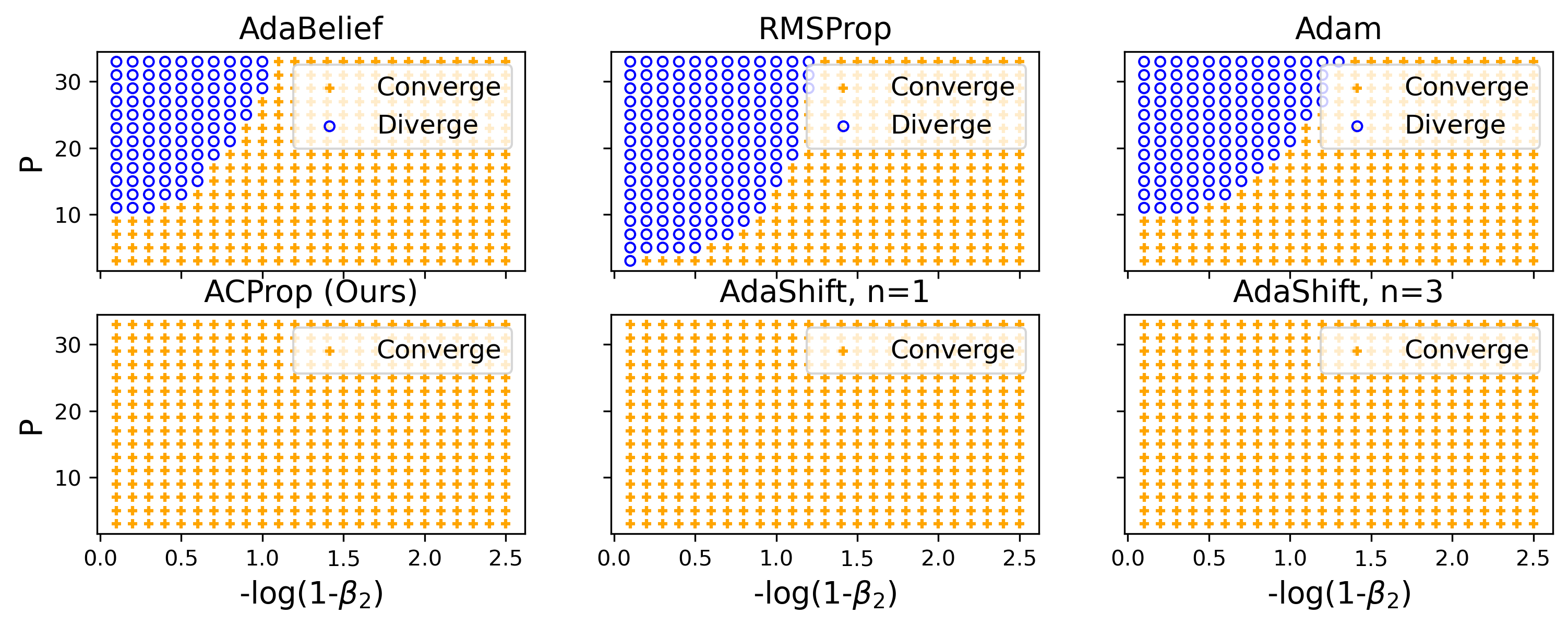}
    \caption{Numerical experiments on problem (1) with $\beta_1=0.9$}
    \label{fig:toy1_0.9}
\end{figure}

\FloatBarrier


\begin{figure}[t]
    \centering
    \includegraphics[width=\linewidth]{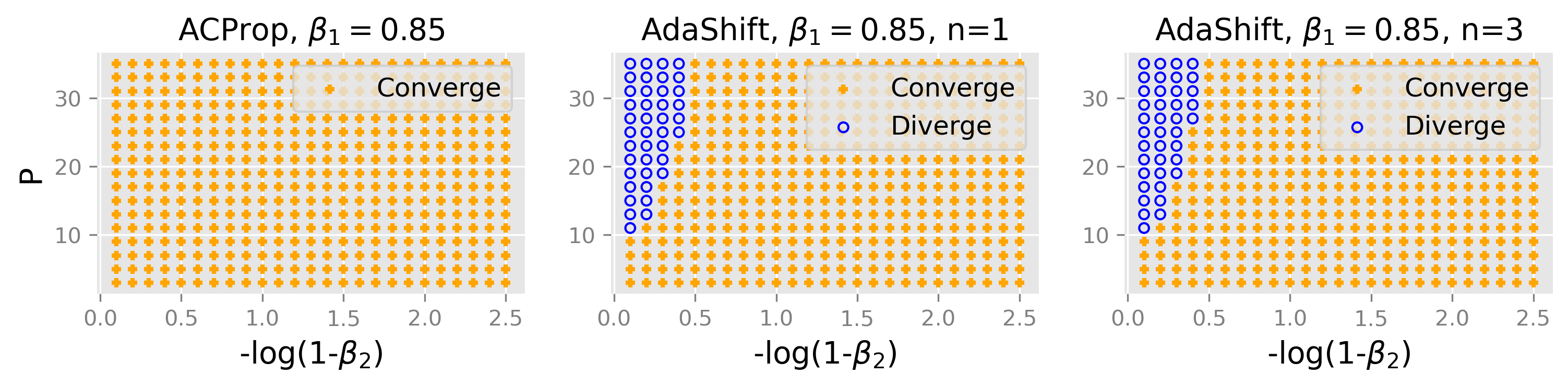}
    \caption{Numerical experiments on problem (43) with $\beta_1=0.85$}
    \label{fig:toy2_0.85}
\end{figure}

\begin{figure}
    \centering
    \includegraphics[width=\linewidth]{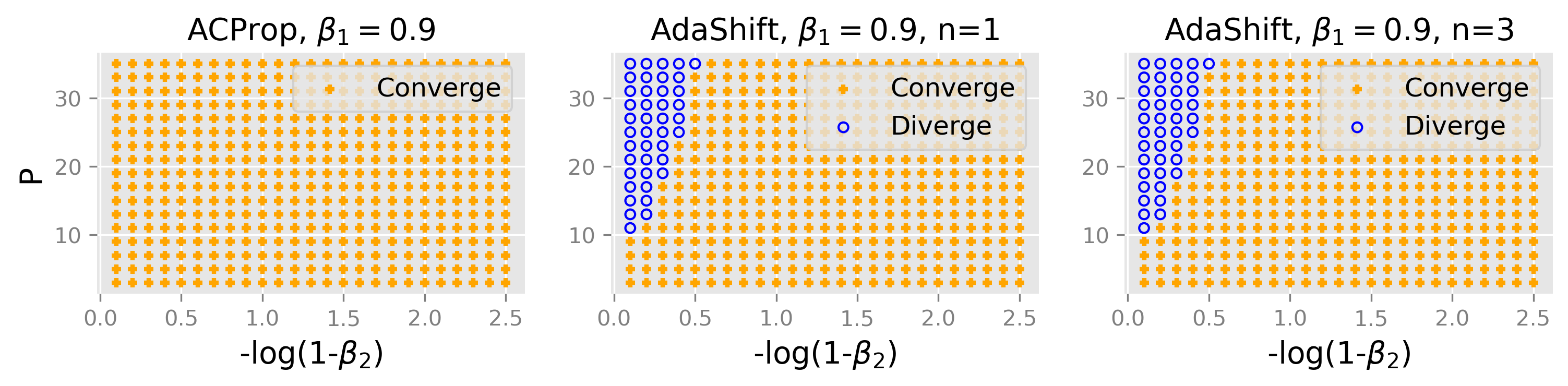}
    \caption{Numerical experiments on problem (43) with $\beta_1=0.9$}
    \label{fig:toy2_0.9}
\end{figure}

\begin{figure}
    \centering
    \includegraphics[width=\linewidth]{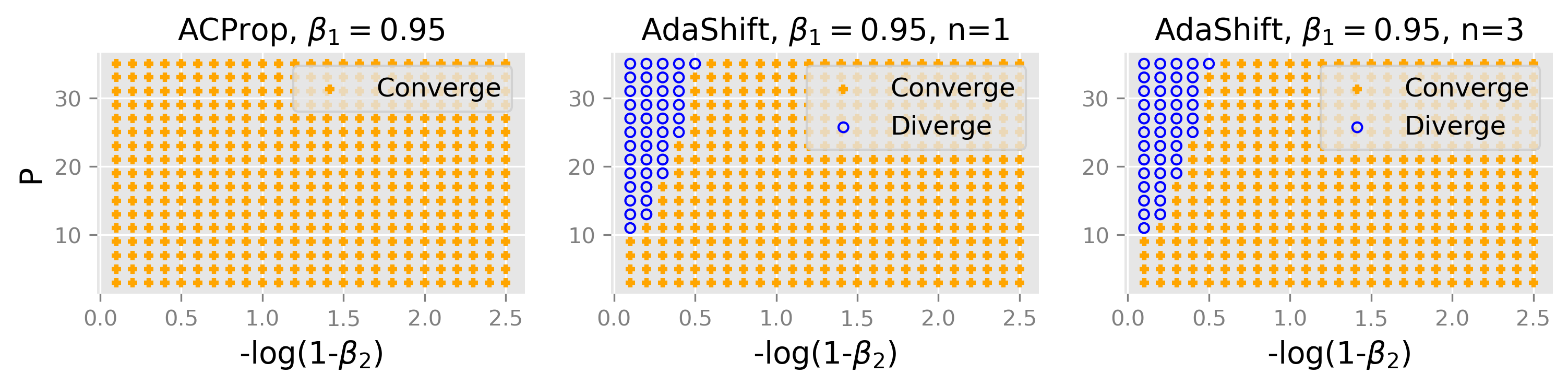}
    \caption{Numerical experiments on problem (43) with $\beta_1=0.95$}
    \label{fig:toy2_0.95}
\end{figure}

\begin{figure}
    \begin{subfigure}{0.48\linewidth}
    \includegraphics[width=\linewidth]{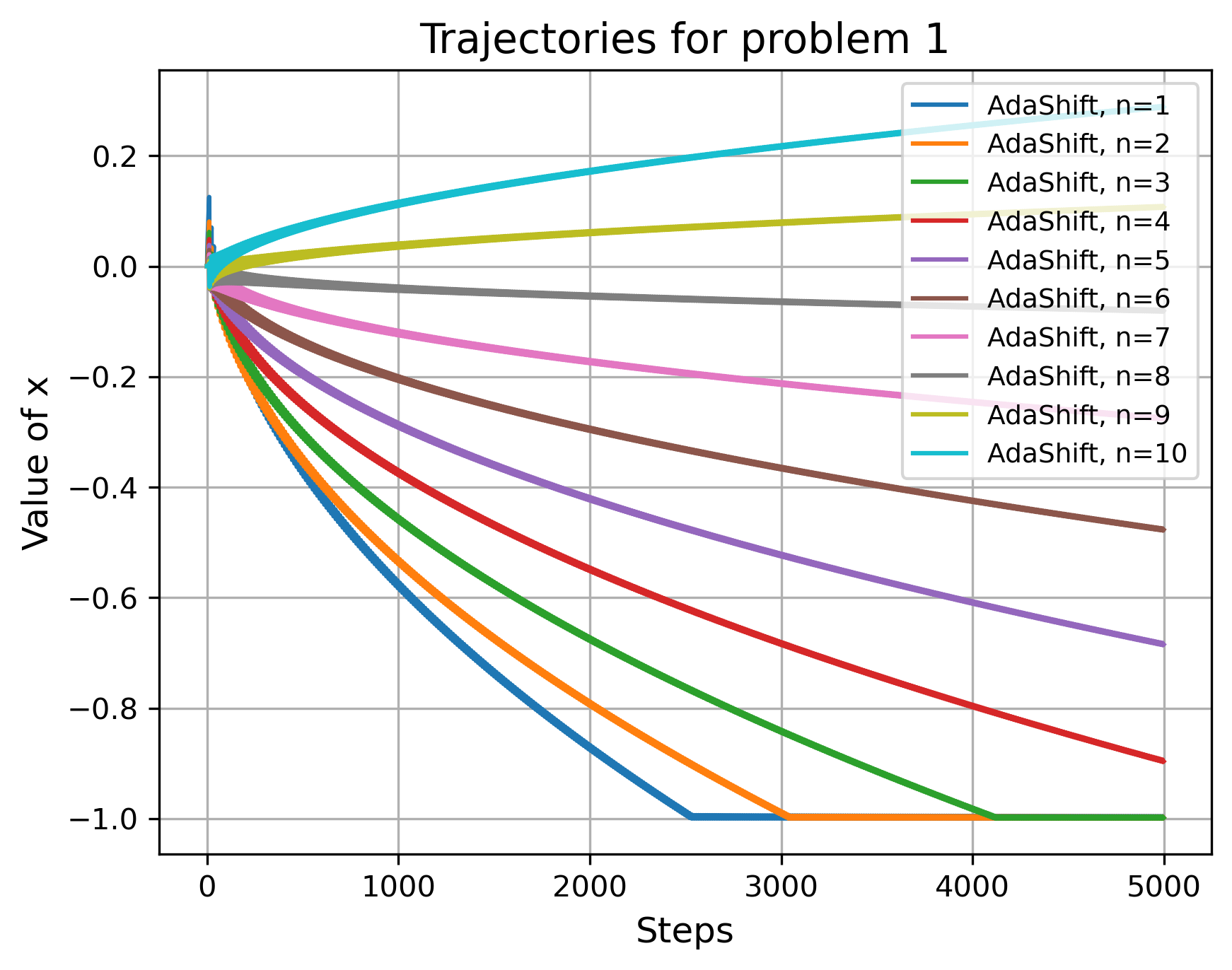}
    \caption{Trajectories of AdaShift with various $n$ for problem (1). Note that optimal is $x^*=-1$. Note that convergence of problem (1) requires a small delay step $n$, but convergence of problem (2) requires a large $n$, hence there's no good criterion to select an optimal $n$.}
    \end{subfigure}
    \hfill
    \begin{subfigure}{0.48\linewidth}
    \includegraphics[width=\linewidth]{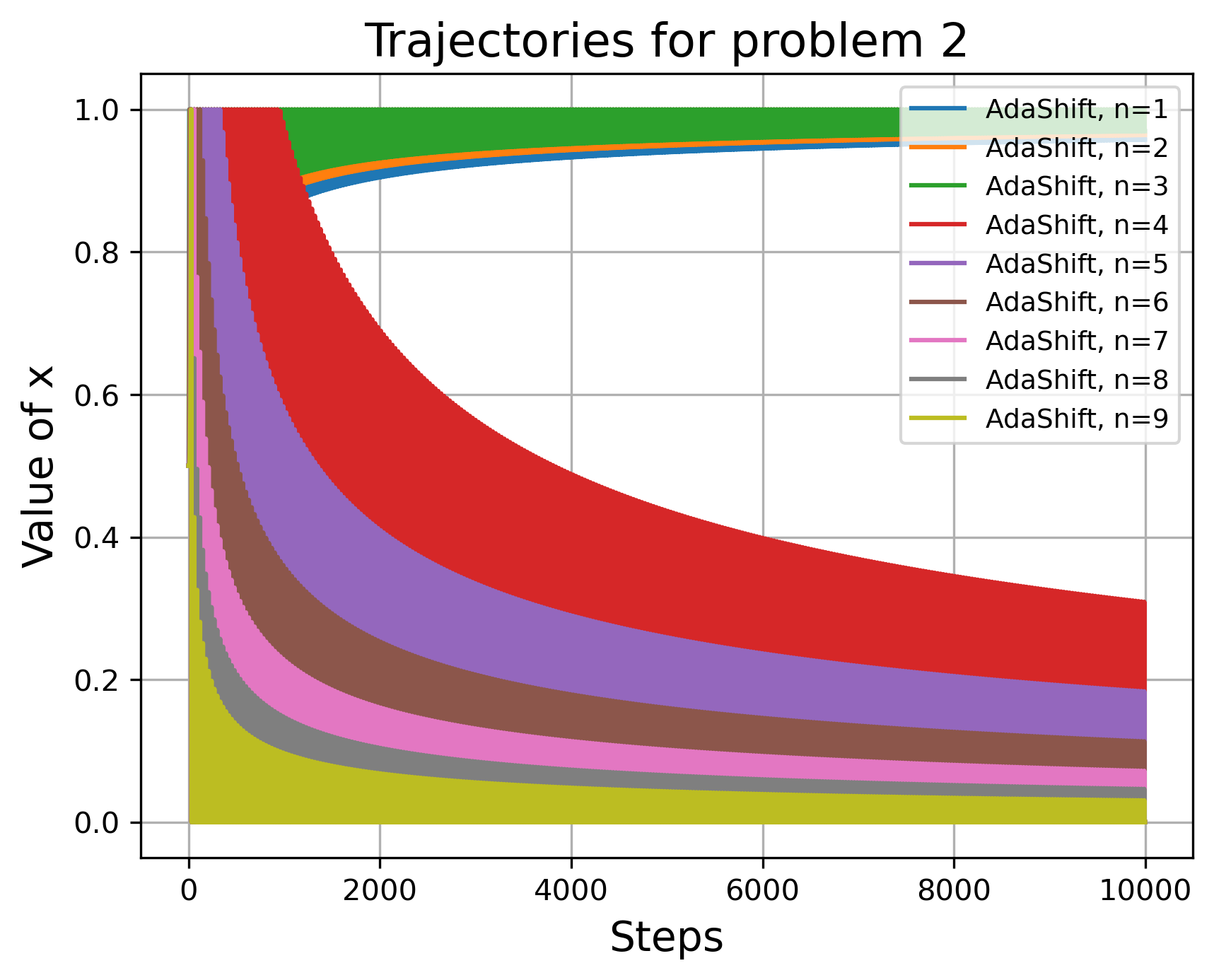}
    \caption{Trajectories of AdaShift with various $n$ for problem (43). Note that optimal is $x^*=0.0$, and the trajectories are oscillating at a high frequency hence appears to be spanning an area.}
    \end{subfigure}
\end{figure}

\FloatBarrier

\section{Convergence Analysis for stochastic non-convex optimization}

\subsection{Problem definition and assumptions}
The problem is defined as:
\begin{equation}
\label{eq:problem}
\operatorname{min}_{x \in \mathbb{R}^d} f(x) = \mathbb{E}[F(x, \xi)]
\end{equation}
where $x$ typically represents parameters of the model, and $\xi$ represents data which typically follows some distribution.

We mainly consider the stochastic non-convex case, with assumptions below.

\begin{enumerate}
\item[\textbf{A.1}] $f$ is continuously differentiable, $f$ is lower-bounde by $f^*$. $\nabla f(f)$ is globalluy Lipschitz continuous with constant $L$:
\begin{equation}
\label{append_eq_lip}
\vert \vert \nabla f(x) - \nabla f(y) \vert  \vert \leq L \vert  \vert x -y \vert  \vert
\end{equation}
\item[\textbf{A.2}] For any iteration $t$, $g_t$ is an unbiased estimator of $\nabla f(x_t)$ with variance bounded by $\sigma^2$. The norm of $g_t$ is upper-bounded by $M_g$.
\begin{align}
&(a)\ \ \ \ \mathbb{E}g_t = \nabla f(x_t) \\
&(b)\ \ \ \  \mathbb{E} \big[ \vert  \vert g_t - \nabla f(x_t) \vert  \vert^2 \big] \leq \sigma^2
\end{align}
\end{enumerate}



\subsection{Convergence analysis of Async-optimizers in stochastic non-convex optimization}
\begin{theorem}[Thm.4.1 in the main paper]
\label{thm:upper_bound}
Under assumptions \textbf{A.1-2}, assume $f$ is upper bounded by $M_f$, with learning rate schedule as
\begin{equation}
\label{append_append_eq:lr_schedule2}
\alpha_t = \alpha_0 t^{-\eta},\ \ \alpha_0 \leq \frac{C_l}{LC_u^2},\ \  \eta \in [0.5, 1)
\end{equation}
the sequence generated by
\begin{equation}
    x_{t+1} = x_t - \alpha_t A_t g_t
\end{equation}
satisfies
\begin{equation}
    \frac{1}{T} \sum_{t=1}^T\Big \vert \Big \vert \nabla f(x_t)\Big \vert \Big \vert^2 \leq \frac{2}{C_l} \Big[ (M_f - f^*)\alpha_0 T^{\eta -1 } + \frac{L C_u^2 \sigma^2 \alpha_0}{2 (1-\eta)} 
    T^{-\eta}
    \Big]
\end{equation}
where $C_l$ and $C_u$ are scalars representing the lower and upper bound for $A_t$, e.g. $C_l I \preceq A_t \preceq C_u I$, where $A \preceq B$ represents $B-A$ is semi-positive-definite.
\end{theorem}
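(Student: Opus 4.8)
The plan is a standard descent-lemma plus telescoping argument for preconditioned stochastic gradient methods, where the only essential ingredient is the asynchronous decorrelation: because in ACProp the preconditioner $A_t = 1/(\sqrt{s_{t-1}}+\epsilon)$ depends only on $g_0,\dots,g_{t-1}$ (hence is $\mathcal F_{t-1}$-measurable, with $\mathcal F_{t-1}=\sigma(g_0,\dots,g_{t-1})$), we have $\mathbb E[A_tg_t\mid\mathcal F_{t-1}] = A_t\nabla f(x_t)$, so the search direction is conditionally unbiased. Note $\beta_1,\beta_2$ then enter only through the bounds $C_lI\preceq A_t\preceq C_uI$, so once those are assumed the analysis is insensitive to the exact form of $s_{t-1}$.

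\textbf{Step 1 (descent inequality).} Apply $L$-smoothness (A.1) along $x_{t+1}=x_t-\alpha_tA_tg_t$ to get $f(x_{t+1}) \le f(x_t) - \alpha_t\langle\nabla f(x_t),A_tg_t\rangle + \tfrac{L\alpha_t^2}{2}\lVert A_tg_t\rVert^2$. \textbf{Step 2 (condition and decorrelate).} Take $\mathbb E[\cdot\mid\mathcal F_{t-1}]$; using A.2(a) the cross term becomes $-\alpha_t\langle\nabla f(x_t),A_t\nabla f(x_t)\rangle \le -\alpha_tC_l\lVert\nabla f(x_t)\rVert^2$ since $A_t\succeq C_lI$, while $A_t\preceq C_uI$ together with A.2(b) gives $\mathbb E[\lVert A_tg_t\rVert^2\mid\mathcal F_{t-1}] \le C_u^2(\lVert\nabla f(x_t)\rVert^2+\sigma^2)$. \textbf{Step 3 (absorb curvature via the step-size cap).} Since $\alpha_t\le\alpha_0\le C_l/(LC_u^2)$ we have $\tfrac{L\alpha_t^2C_u^2}{2}\le\tfrac{\alpha_tC_l}{2}$, so the coefficient of $\lVert\nabla f(x_t)\rVert^2$ is at most $-\alpha_tC_l+\tfrac{\alpha_tC_l}{2}=-\tfrac{\alpha_tC_l}{2}$; taking total expectation yields $\tfrac{\alpha_tC_l}{2}\mathbb E\lVert\nabla f(x_t)\rVert^2 \le \mathbb E f(x_t)-\mathbb E f(x_{t+1}) + \tfrac{L\alpha_t^2C_u^2\sigma^2}{2}$.

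\textbf{Step 4 (telescope and normalize).} Summing $t=1,\dots,T$, the function-value differences telescope and are bounded by $M_f-f^*$ using $f^*\le f\le M_f$. On the left, lower-bound $\alpha_t\ge\alpha_T=\alpha_0T^{-\eta}$; on the right, bound $\sum_{t=1}^T\alpha_t^2=\alpha_0^2\sum_{t=1}^T t^{-2\eta}$ by an integral comparison (e.g. $t^{-2\eta}\le t^{-\eta}$ and $\sum_{t=1}^Tt^{-\eta}\le T^{1-\eta}/(1-\eta)$, which is where the $1/(1-\eta)$ factor originates). Dividing through by $\tfrac{\alpha_TC_l}{2}\,T$ produces a bound of the form $\tfrac{2}{C_l}\big[(M_f-f^*)\,c_1\,T^{\eta-1} + \tfrac{LC_u^2\sigma^2}{2(1-\eta)}\,c_2\,T^{-\eta}\big]$, matching the statement (the displayed $\alpha_0$ factors are the bookkeeping that comes out of $\alpha_T=\alpha_0T^{-\eta}$ and $\sum\alpha_t^2$).

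I do not expect a genuine obstacle: all the substance is in Step 2. For sync-optimizers the identity $\mathbb E[A_tg_t\mid\mathcal F_{t-1}]=A_t\nabla f(x_t)$ fails because $A_t$ correlates with $g_t$, and one must instead split $A_tg_t=A_{t-1}g_t+(A_t-A_{t-1})g_t$ and control $\sum_t\lVert A_t-A_{t-1}\rVert$, which contributes the extra $\log T$ factor; sidestepping this decomposition is precisely what upgrades the rate to $O(1/\sqrt T)$. The only places requiring care are keeping the step-size cap $\alpha_0\le C_l/(LC_u^2)$ exactly tight enough to fold the $\tfrac{L\alpha_t^2C_u^2}{2}\lVert\nabla f(x_t)\rVert^2$ term into the descent term, and selecting the integral estimate for $\sum_{t=1}^T\alpha_t^2$ that yields the stated constants.
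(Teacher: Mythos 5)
Steps 1--3 of your plan coincide with the paper's proof: descent lemma, conditioning on $\mathcal F_{t-1}$ to kill the cross term with $\delta_t=g_t-\nabla f(x_t)$, the bound $\mathbb E[\|g_t\|^2\mid\mathcal F_{t-1}]\le\|\nabla f(x_t)\|^2+\sigma^2$, and absorbing the curvature term via $\alpha_t\le C_l/(LC_u^2)$ to reach
$\tfrac{\alpha_tC_l}{2}\|\nabla f(x_t)\|^2\le f(x_t)-\mathbb E[f(x_{t+1})\mid\cdot]+\tfrac{LC_u^2\sigma^2}{2}\alpha_t^2$.
The gap is in Step 4. You sum this inequality as is and then lower-bound the weights by $\alpha_t\ge\alpha_T$, so after dividing by $\tfrac{\alpha_TC_l}{2}T$ your variance term is
$\frac{LC_u^2\sigma^2}{C_l}\cdot\frac{\sum_{t=1}^T\alpha_t^2}{\alpha_T T}$.
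Since $\alpha_t^2\le\alpha_0\alpha_t$, this quantity is at most (and generically of order) $\frac{LC_u^2\sigma^2\alpha_0}{C_l(1-\eta)}$ with your integral estimate --- a constant that does not decay in $T$; with the sharper estimate $\sum t^{-2\eta}=\Theta(\log T)$ at $\eta=\tfrac12$ you get $O(\log T/\sqrt T)$, and for $\eta>\tfrac12$ you get $O(T^{\eta-1})$, both strictly worse than the claimed $O(T^{-\eta})$. In particular at $\eta=\tfrac12$ your route reproduces exactly the $O(\log T/\sqrt T)$ rate the theorem is meant to beat, so the loss is not bookkeeping: the crude replacement $\alpha_t\mapsto\alpha_T$ on the left costs a factor of $\alpha_0/\alpha_T=T^{\eta}$ on the noise term.

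The paper avoids this by dividing the per-step inequality by $\alpha_t$ \emph{before} summing, so the left-hand side is the unweighted sum $\tfrac{C_l}{2}\sum_t\|\nabla f(x_t)\|^2$ and the noise term is $\tfrac{LC_u^2\sigma^2}{2}\sum_t\alpha_t=O(T^{1-\eta})$, which after dividing by $T$ gives the stated $T^{-\eta}$. The price is that $\sum_t\tfrac{1}{\alpha_t}\bigl(\mathbb Ef(x_t)-\mathbb Ef(x_{t+1})\bigr)$ no longer telescopes; the paper controls it by summation by parts, bounding $\sum_{t\ge2}\bigl(\tfrac{1}{\alpha_t}-\tfrac{1}{\alpha_{t-1}}\bigr)\mathbb Ef(x_t)\le M_f\bigl(\tfrac{1}{\alpha_T}-\tfrac{1}{\alpha_1}\bigr)$, which is precisely where the global upper bound $f\le M_f$ in the hypotheses is used. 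The fact that your argument never needs $f\le M_f$ beyond $f(x_1)\le M_f$ is a signal that you are not exploiting the full hypothesis and are landing on a weaker conclusion. Your Step 4 needs to be replaced by this reweighting-plus-Abel-summation argument to obtain the stated constants and rate.
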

\begin{proof}
Let 
\begin{equation}
    \delta_t = g_t - \nabla f(x_t)
\end{equation}
then by \textbf{A.2}, $\mathbb{E} \delta_t =0$.
\begin{align}
    f(x_{t+1}) &\leq f(x_t) +\Big \langle \nabla f(x_t), x_{t+1} - x_t\Big \rangle + \frac{L}{2}\Big \vert \Big \vert x_{t+1}-x_t\Big \vert \Big \vert^2 \\
    &\Big( \textit{by L-smoothness of } f(x)\Big) \nonumber \\
    &= f(x_t) - \alpha_t\Big \langle \nabla f(x_t), A_t g_t\Big \rangle + \frac{L}{2} \alpha_t^2\Big \vert \Big \vert A_t g_t\Big \vert \Big \vert^2 \\
    &= f(x_t) - \alpha_t\Big \langle \nabla f(x_t), A_t \big (\delta_t + \nabla f(x_t) \big)\Big \rangle + \frac{L}{2} \alpha_t^2\Big \vert \Big \vert A_t g_t\Big \vert \Big \vert^2 \\
    &\leq f(x_t) - \alpha_t\Big \langle \nabla f(x_t), A_t \nabla f(x_t)\Big \rangle - \alpha_t\Big \langle \nabla f(x_t), A_t \delta_t\Big \rangle + \frac{L}{2} \alpha_t^2 C_u^2 \Big \vert \Big \vert g_t\Big \vert \Big \vert^2 \label{eq:f_ineq}
\end{align}
Take expectation on both sides of Eq.~\eqref{eq:f_ineq}, conditioned on $\xi_{[t-1]}=\{x_1, x_2, ... x_{t-1}\}$, also notice that $A_t$ is a constant given $\xi_{[t-1]}$, we have
\begin{align}
\label{eq:bd2}
    \mathbb{E} \big[ f(x_{t+1}) \vert x_1, ... x_{t} \big] &\leq f(x_t) - \alpha_t\Big \langle \nabla f(x_t), A_t \nabla f(x_t)\Big \rangle + \frac{L}{2}\alpha_t^2 C_u^2 \mathbb{E} \Big \vert \Big \vert  g_t \Big \vert \Big \vert^2 \\
    &\Big( A_t \textit{ is independent of } g_t \textit{ given } \{x_1, ... x_{t-1}\}, \textit{ and } \mathbb{E} \delta_t=0 \Big) \nonumber
\end{align}
In order to bound RHS of Eq.~\eqref{eq:bd2}, we first bound $\mathbb{E} \big[\vert \vert g_t \vert  \vert^2 \big]$.
\begin{align}
\mathbb{E} \Big[\Big \vert \Big \vert g_t\Big \vert \Big \vert^2 \Big \vert x_1,...x_{t} \Big] &= \mathbb{E} \Big[\Big \vert \Big \vert \nabla f(x_t) + \delta_t\Big \vert \Big \vert^2 \Big \vert x_1, ...x_t  \Big] \\
&= \mathbb{E} \Big[\Big \vert \Big \vert \nabla f(x_t)\Big \vert \Big \vert^2 \Big \vert x_1,...x_t \Big] + \mathbb{E} \Big[\Big \vert \Big \vert \nabla \delta_t\Big \vert \Big \vert^2 \Big \vert x_1,...x_t \Big] + 2 \mathbb{E} \Big[\Big \langle \delta_t, \nabla f(x_t)\Big \rangle \Big \vert x_1,... x_t \Big] \\
&\leq\Big \vert \Big \vert \nabla f(x_t)\Big \vert \Big \vert^2 + \sigma^2 \label{eq:bd3} \\
&\Big( \textit{By \textbf{A.2}, and } \nabla f(x_t) \textit{ is a constant given } x_t \Big) \nonumber
\end{align}
Plug Eq.~\eqref{eq:bd3} into Eq.~\eqref{eq:bd2}, we have
\begin{align}
    \mathbb{E} \Big[ f(x_{t+1}) \Big \vert x_1, ... x_t \Big] &\leq f(x_t) - \alpha_t\Big \langle \nabla f(x_t), A_t \nabla f(x_t)\Big \rangle + \frac{L}{2}C_u^2 \alpha_t^2 \Big[\Big \vert \Big \vert \nabla f(x_t)\Big \vert \Big \vert^2 + \sigma^2 \Big] \\
    &= f(x_t) - \Big( \alpha_t C_l - \frac{L C_u^2}{2} \alpha_t^2 \Big)\Big \vert \Big \vert \nabla f(x_t)\Big \vert \Big \vert^2 + \frac{LC_u^2 \sigma^2}{2} \alpha_t^2 \label{eq:bd4}
\end{align}
By \textbf{A.5} that $0 < \alpha_t \leq \frac{C_l}{L C_u^2}$, we have
\begin{align}
\label{eq:bd5}
\alpha_t C_l - \frac{LC_u^2 \alpha_t^2}{2} = \alpha_t \Big( C_l - \frac{L C_u^2 \alpha_t}{2} \Big) \geq \alpha_t \frac{C_l}{2} 
\end{align}
Combine Eq.~\eqref{eq:bd4} and Eq.~\eqref{eq:bd5}, we have
\begin{align}
    \frac{\alpha_t C_l}{2}\Big \vert \Big \vert \nabla f(x_t)\Big \vert \Big \vert^2 &\leq \Big( \alpha_t C_l - \frac{LC_u^2 \alpha_t^2}{2} \Big)\Big \vert \Big \vert \nabla f(x_t)\vert \vert^2 \\ 
    &\leq f(x_t) - \mathbb{E}\Big[f(x_{t+1}) \Big \vert x_1,...x_t \Big] + \frac{LC_u^2 \sigma^2}{2} \alpha_t^2
    \label{eq:bd6}
\end{align}
Then we have
\begin{align}
    \frac{C_l}{2}\Big \vert \Big \vert \nabla f(x_t)\Big \vert \Big \vert^2 \leq \frac{1}{\alpha_t} f(x_t) - \frac{1}{\alpha_t} \mathbb{E}\Big[f(x_{t+1}) \Big \vert x_1,...x_t \Big] + \frac{LC_u^2\sigma^2}{2} \alpha_t
    \label{eq:bd7}
\end{align}
Perform telescope sum on Eq.~\eqref{eq:bd7}, and recursively taking conditional expectations on the history of $\{x_i\}_{i=1}^T$, we have
\begin{align}
    \frac{C_l}{2}\sum_{t=1}^T\vert \vert \nabla f(x_t)\Big \vert \Big \vert^2 &\leq \sum_{t=1}^T \frac{1}{\alpha_t} \Big( \mathbb{E}f(x_t) - \mathbb{E}f(x_{t+1}) \Big) + \frac{LC_u^2\sigma^2}{2} \sum_{t=1}^T \alpha_t \\
    &= \frac{\mathbb{E}f(x_1)}{\alpha_1} - \frac{\mathbb{E} f(x_{T+1})}{\alpha_{T}} + \sum_{t=2}^T \Big( \frac{1}{\alpha_t}-\frac{1}{\alpha_{t-1}} \Big) \mathbb{E} f(x_t) +  \frac{LC_u^2\sigma^2}{2} \sum_{t=1}^T \alpha_t \\
    &\leq \frac{M_f}{\alpha_1} - \frac{f^*}{\alpha_T} + M_f \sum_{t=1}^T \Big( \frac{1}{\alpha_t} - \frac{1}{\alpha_{t-1}} \Big) + \frac{LC_u^2\sigma^2}{2} \sum_{t=1}^T \alpha_t \\
    &\leq \frac{M_f - f^*}{\alpha_T} + \frac{LC_u^2\sigma^2}{2} \sum_{t=1}^T \alpha_t \\
    &\leq (M_f - f^*) \alpha_0 T^{\eta} + \frac{L C_u^2 \sigma^2 \alpha_0}{2} \Big ( \zeta(\eta) + \frac{T^{1-\eta}}{1-\eta} + \frac{1}{2} T^{-\eta} \Big )\\
    &\Big( \textit{By sum of generalized harmonic series}, \nonumber \\
    &\sum_{k=1}^n \frac{1}{k^s} \sim \zeta(s) + \frac{n^{1-s}}{1-s}+\frac{1}{2 n^s}  +O(n^{-s-1}) \label{eq:harmonic}, \\ &\zeta(s) \textit{ is Riemann zeta function}. \Big) \nonumber
\end{align}
Then we have 
\begin{equation}
    \frac{1}{T} \sum_{t=1}^T\Big \vert \Big \vert \nabla f(x_t)\Big \vert \Big \vert^2 \leq \frac{2}{C_l} \Big[ (M_f - f^*)\alpha_0 T^{\eta -1 } + \frac{L C_u^2 \sigma^2 \alpha_0}{2 (1-\eta)} 
    T^{-\eta}
    \Big]
\end{equation}
\end{proof}
\subsubsection{Validation on numerical accuracy of sum of generalized harmonic series}
We performed experiments to test the accuracy of the analytical expression of sum of harmonic series. We numerically calculate $\sum_{i=1}^N \frac{1}{i^{\eta}}$ for $\eta$ varying from 0.5 to 0.999, and for $N$ ranging from $10^3$ to $10^7$ in the log-grid. We calculate the error of the analytical expression by Eq.~\eqref{eq:harmonic}, and plot the error in Fig.~\ref{fig:harmonic_errpr}. Note that the $y$-axis has a unit of $10^{-7}$, while the sum is typically on the order of $10^3$, this implies that expression Eq.~\eqref{eq:harmonic} is very accurate and the relative error is on the order of $10^{-10}$. Furthermore, note that this expression is accurate even when $\eta=0.5$.
\begin{figure}[h]
    \centering
    \includegraphics[width=0.7\linewidth]{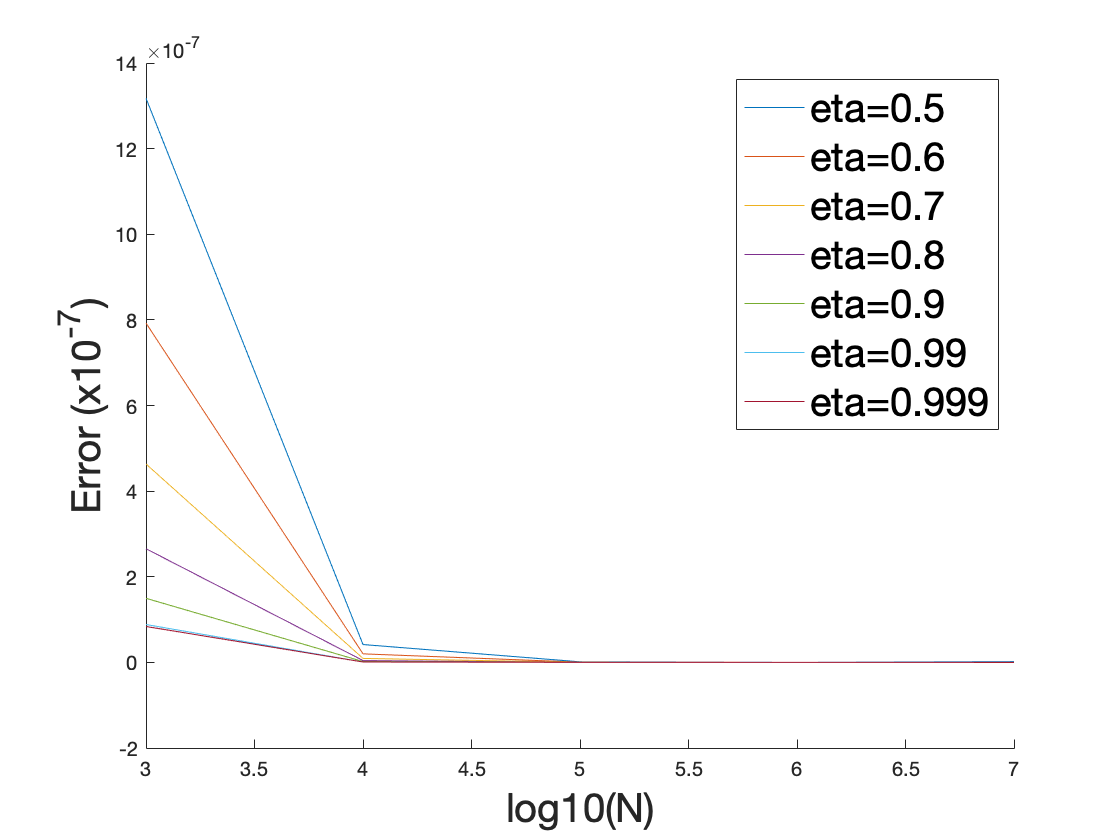}
    \caption{The error between numerical sum for $\sum_{i=1}^N \frac{1}{i^{\eta}}$ and the analytical form.}
    \label{fig:harmonic_errpr}
\end{figure}
\FloatBarrier
\subsection{Convergence analysis of Async-moment-optimizers in stochastic non-convex optimization}

\begin{lemma}
\label{lemma:mt}
Let $m_t = \beta_1 m_{t-1}+(1-\beta_1)g_t$, let $A_t \in \mathbb{R}^d$, then
\begin{equation}
   \Big \langle A_t, g_t\Big \rangle = \frac{1}{1-\beta_1} \Big(\Big \langle A_t, m_t\Big \rangle -\Big \langle A_{t-1}, m_{t-1}\Big \rangle \Big) +\Big \langle A_{t-1}, m_{t-1}\Big \rangle + \frac{\beta_1}{ 1 - \beta_1}\Big \langle A_{t-1} - A_t, m_{t-1}\Big \rangle
\end{equation}
\end{lemma}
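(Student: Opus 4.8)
The identity is purely algebraic, so the plan is to rearrange the EMA recursion and match coefficients. First I would invert the definition $m_t = \beta_1 m_{t-1} + (1-\beta_1)g_t$ to obtain
\begin{equation}
g_t = \frac{1}{1-\beta_1}\bigl(m_t - \beta_1 m_{t-1}\bigr),
\end{equation}
which is valid since $\beta_1 \in [0,1)$. Pairing with $A_t$ in the inner product gives the compact form
$\langle A_t, g_t\rangle = \frac{1}{1-\beta_1}\langle A_t, m_t\rangle - \frac{\beta_1}{1-\beta_1}\langle A_t, m_{t-1}\rangle$, and the whole task reduces to checking that the right-hand side of the lemma equals this expression.

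The second step is to expand the claimed right-hand side by distributing the inner products: the term $\frac{1}{1-\beta_1}(\langle A_t, m_t\rangle - \langle A_{t-1}, m_{t-1}\rangle)$ contributes $\frac{1}{1-\beta_1}\langle A_t,m_t\rangle$ and $-\frac{1}{1-\beta_1}\langle A_{t-1},m_{t-1}\rangle$, while $\frac{\beta_1}{1-\beta_1}\langle A_{t-1}-A_t, m_{t-1}\rangle$ contributes $\frac{\beta_1}{1-\beta_1}\langle A_{t-1},m_{t-1}\rangle - \frac{\beta_1}{1-\beta_1}\langle A_t, m_{t-1}\rangle$. Collecting the three terms proportional to $\langle A_{t-1}, m_{t-1}\rangle$ I would show their coefficients sum to zero:
\begin{equation}
-\frac{1}{1-\beta_1} + 1 + \frac{\beta_1}{1-\beta_1} = \frac{-1 + (1-\beta_1) + \beta_1}{1-\beta_1} = 0.
\end{equation}
What survives is exactly $\frac{1}{1-\beta_1}\langle A_t, m_t\rangle - \frac{\beta_1}{1-\beta_1}\langle A_t, m_{t-1}\rangle$, which matches the compact form of $\langle A_t, g_t\rangle$ derived in the first step, completing the proof.

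There is essentially no obstacle here; the content is bookkeeping. The only point worth flagging is the motivation rather than the difficulty: the decomposition is an Abel/summation-by-parts rearrangement, written so that the first bracket telescopes when summed over $t$ and the remaining two terms are the ``error'' pieces controlled later (the $\langle A_{t-1},m_{t-1}\rangle$ term yielding the main descent contribution and the $\langle A_{t-1}-A_t, m_{t-1}\rangle$ term bounded using the slow variation of the preconditioner $A_t$). I would state the lemma's proof in two displayed lines and defer that discussion to where the lemma is applied.
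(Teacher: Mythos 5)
Your verification is correct: inverting the EMA recursion to get $g_t = \frac{1}{1-\beta_1}(m_t - \beta_1 m_{t-1})$ and checking that the coefficients of $\langle A_{t-1}, m_{t-1}\rangle$ cancel is exactly the right (and essentially the only) argument, and the paper itself states this lemma without any proof, so your two-line computation fills that gap. Your closing remark about the telescoping/summation-by-parts motivation also matches precisely how the lemma is deployed in the subsequent theorem.
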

\begin{theorem}
\label{thm:momentum}
Under assumptions 1-4, $\beta_1 < 1, \beta_2 < 1$, also assume $A_{t+1} \leq A_{t}$ element-wise which can be achieved by tracking maximum of $s_t$ as in AMSGrad, $f$ is upper bounded by $M_f$, $\vert \vert g_t \vert  \vert_{\infty} \leq M_g$, with learning rate schedule as
\begin{equation}
\label{append_append_eq:lr_schedule}
\alpha_t = \alpha_0 t^{-\eta},\ \ \alpha_0 \leq \frac{C_l}{LC_u^2},\ \  \eta \in (0.5, 1]
\end{equation}
the sequence is generated by 
\begin{equation}
    x_{t+1} = x_t - \alpha_t A_t m_t
\end{equation}
then we have
\begin{equation}
    \frac{1}{T} \sum_{t=1}^T\Big \vert \Big \vert \nabla f(x_t)\Big \vert \Big \vert^2 \leq \frac{1}{\alpha_0 C_l} T^{\eta -1} \Big[ M_f - f^* + E M_g^2 \Big]
\end{equation}
where 
\begin{equation}
    E=  \frac{\beta_1^2}{4L (1-\beta_1)^2} + \frac{1}{1-\beta_1} \alpha_0 M_g + \Big( \frac{\beta_1}{1-\beta_1}+\frac{1}{2} \Big) L\alpha_0^2C_u^2 \frac{1}{1-2\eta}
\end{equation}
\end{theorem}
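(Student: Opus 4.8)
The plan is to run the same $L$-smoothness / potential-decrease argument as in the proof of Theorem~\ref{thm:upper_bound}, but with Lemma~\ref{lemma:mt} inserted to trade the momentum average $m_t$ for the one-step stochastic gradient $g_t$ (this is the inner-product form of the standard ``virtual iterate'' trick for heavy-ball SGD). Starting from $L$-smoothness applied to $x_{t+1}=x_t-\alpha_t A_t m_t$,
\[
f(x_{t+1})\le f(x_t)-\alpha_t\langle\nabla f(x_t),A_t m_t\rangle+\tfrac{L}{2}\alpha_t^2\lVert A_t m_t\rVert^2 ,
\]
the last term is controlled immediately by $\tfrac{L}{2}C_u^2M_g^2\alpha_0^2 t^{-2\eta}$, since $m_t$ is a convex combination of past gradients (so $\lVert m_t\rVert\le M_g$) and $A_t\preceq C_uI$; summing over $t$ and using that $\sum_{t\ge1}t^{-2\eta}$ is a finite constant for $\eta>1/2$ produces the $\tfrac12 LC_u^2\alpha_0^2$ piece of $E$.

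For the cross term I would write $\alpha_t\langle\nabla f(x_t),A_t m_t\rangle=\langle b_t,m_t\rangle$ with $b_t:=\alpha_t A_t\nabla f(x_t)$ (and $\langle b_0,m_0\rangle=0$ since $m_0=0$). Applying Lemma~\ref{lemma:mt} with the generic vector taken to be $b_t$ and summing $t=1,\dots,T$, the $\tfrac1{1-\beta_1}$-telescope collapses and one obtains
\[
\sum_{t=1}^T\langle b_t,m_t\rangle=\sum_{t=1}^T\langle b_t,g_t\rangle-\tfrac{\beta_1}{1-\beta_1}\langle b_T,m_T\rangle-\tfrac{\beta_1}{1-\beta_1}\sum_{t=1}^T\langle b_{t-1}-b_t,m_{t-1}\rangle .
\]
Because $A_t$ (hence $b_t$) is measurable with respect to $x_1,\dots,x_t$, taking conditional expectations gives $\mathbb{E}[\langle b_t,g_t\rangle\mid x_1,\dots,x_t]=\alpha_t\langle\nabla f(x_t),A_t\nabla f(x_t)\rangle\ge\alpha_t C_l\lVert\nabla f(x_t)\rVert^2$ — this is exactly the asynchronous property, which fails for sync optimizers where $A_t$ depends on $g_t$. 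The residual boundary term $\tfrac{\beta_1}{1-\beta_1}\langle b_T,m_T\rangle$ is handled by Cauchy--Schwarz together with a Young split against the leading $C_l$-term, and this is where the $\tfrac{\beta_1^2}{4L(1-\beta_1)^2}M_g^2$ piece of $E$ arises.

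It then remains to control $\sum_t\langle b_{t-1}-b_t,m_{t-1}\rangle$, which I would split as
\[
b_{t-1}-b_t=(\alpha_{t-1}-\alpha_t)A_{t-1}\nabla f(x_{t-1})+\alpha_t(A_{t-1}-A_t)\nabla f(x_{t-1})+\alpha_t A_t\bigl(\nabla f(x_{t-1})-\nabla f(x_t)\bigr).
\]
The first summand telescopes because $\alpha_t$ is decreasing, $\sum_t(\alpha_{t-1}-\alpha_t)\le\alpha_0$; the second telescopes because of the AMSGrad-style monotonicity $A_{t+1}\preceq A_t$, so $\sum_t\alpha_t(A_{t-1}-A_t)\preceq\alpha_0(A_0-A_T)\preceq\alpha_0 C_uI$ — this is the only place that hypothesis is used; the third is bounded via $L$-smoothness, $\lVert\nabla f(x_{t-1})-\nabla f(x_t)\rVert\le L\alpha_{t-1}\lVert A_{t-1}m_{t-1}\rVert\le LC_uM_g\alpha_{t-1}$, whose sum is $O\!\bigl(LC_u^2M_g^2\alpha_0^2\sum_t t^{-2\eta}\bigr)$ and finite for $\eta>1/2$, producing the $\tfrac{\beta_1}{1-\beta_1}LC_u^2\alpha_0^2$ part of $E$; each piece uses $\lVert m_{t-1}\rVert,\lVert\nabla f\rVert\le M_g$. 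Collecting everything gives $C_l\sum_{t=1}^T\alpha_t\,\mathbb{E}\lVert\nabla f(x_t)\rVert^2\le(M_f-f^*)+EM_g^2$, and since $\alpha_t\ge\alpha_T=\alpha_0T^{-\eta}$ for $t\le T$, dividing by $\alpha_0 C_lT^{1-\eta}$ yields the claim.

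The step I expect to be the main obstacle is exactly this final accounting: isolating all the momentum-induced error terms — those coming from the decaying step size, from the drifting preconditioner, and from the smoothness-induced change in the gradient — and verifying that each sums to a finite constant rather than a quantity growing in $T$, which requires both $\eta>1/2$ (so $\sum_t\alpha_t^2<\infty$) and $A_{t+1}\preceq A_t$ (so the preconditioner differences telescope). Once that is established, matching the precise constants $\tfrac{\beta_1^2}{4L(1-\beta_1)^2}$, $\tfrac{\alpha_0 M_g}{1-\beta_1}$ and $\bigl(\tfrac{\beta_1}{1-\beta_1}+\tfrac12\bigr)LC_u^2\alpha_0^2$ in $E$ is only a matter of fixing the Young parameter and invoking the generalized-harmonic-series estimate already used in the proof of Theorem~\ref{thm:upper_bound}.
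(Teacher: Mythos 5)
Your proposal is correct and follows essentially the same route as the paper's proof: apply Lemma~\ref{lemma:mt} to $b_t=\alpha_t A_t\nabla f(x_t)$, telescope, bound the $\langle b_t,m_t\rangle$ terms by $L$-smoothness, handle the boundary term $\langle b_T,m_T\rangle$ with the same Young split yielding $\tfrac{\beta_1^2}{4L(1-\beta_1)^2}M_g^2$, control the drift term $\sum_t\langle b_{t-1}-b_t,m_{t-1}\rangle$ via monotonicity of $\alpha_tA_t$ and smoothness, and use the asynchronous measurability of $A_t$ to pass to $\langle\nabla f(x_t),A_t\nabla f(x_t)\rangle\ge C_l\lVert\nabla f(x_t)\rVert^2$ in expectation. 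The only cosmetic difference is that the paper first shifts $\alpha_tA_t$ to $\alpha_{t-1}A_{t-1}$ via an extra H\"older step before taking expectations, whereas you take the conditional expectation directly at index $t$; both are valid for the async preconditioner, and the remaining bookkeeping matches the paper's.
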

\begin{proof}
Let $A_t = \alpha_t A_t \nabla f(x_t)$ and let $A_0 = A_1$, we have
\begin{align}
    \sum_{t=1}^T\Big \langle A_t, g_t\Big \rangle &= \frac{1}{1-\beta_1}\Big \langle A_T, m_T\Big \rangle + \sum_{t=1}^{T}\Big \langle A_{t-1}, m_{t-1}\Big \rangle + \frac{\beta_1}{1-\beta_1} \sum_{t=1}^T\Big \langle A_{t-1}-A_t, m_{t-1}\Big \rangle \\
    &= \frac{\beta_1}{1-\beta_1}\Big \langle A_T, m_T\Big \rangle + \sum_{t=1}^{T}\Big \langle A_{t}, m_{t}\Big \rangle + \frac{\beta_1}{1-\beta_1} \sum_{t=0}^{T-1}\Big \langle A_{t}-A_{t+1}, m_t\Big \rangle \label{eq:sum_prod}
\end{align}
First we derive a lower bound for Eq.~\eqref{eq:sum_prod}.
\begin{align}
   \Big \langle A_t, g_t\Big \rangle &=\Big \langle \alpha_t A_t \nabla f(x_t), g_t\Big \rangle \\
    &=\Big \langle \alpha_t A_t \nabla f(x_t) - \alpha_{t-1} A_{t-1} \nabla f(x_t), g_t\Big \rangle +\Big \langle \alpha_{t-1} A_{t-1} \nabla f(x_t), g_t\Big \rangle \\
    &=\Big \langle \alpha_{t-1} A_{t-1} \nabla f(x_t), g_t\Big \rangle -\Big \langle (\alpha_{t-1} A_{t-1}-\alpha_t A_t ) \nabla f(x_t), g_t\Big \rangle \\
    &\geq\Big \langle \alpha_{t-1} A_{t-1} \nabla f(x_t), g_t\Big \rangle -\Big \vert \Big \vert \nabla f(x_t) \Big \vert \Big  \vert_{\infty}\Big \vert \Big \vert \alpha_{t-1} A_{t-1} - \alpha_t A_t\Big \vert \Big \vert_1\Big \vert \Big \vert g_t\Big \vert \Big \vert_{\infty} \\
    &\Big( \textit{By H\"older's inequality} \Big) \nonumber \\
    &\geq \Big \langle \alpha_{t-1} A_{t-1} \nabla f(x_t), g_t\Big \rangle - M_g^2 \Big(\Big \vert \Big \vert \alpha_{t-1}A_{t-1}\Big \vert \Big \vert_1 -\Big \vert \Big \vert \alpha_t A_t\Big \vert \Big \vert_1 \Big) \\
    &\Big( \textit{Since }\Big \vert \Big \vert g_t\Big \vert \Big \vert_{\infty} \leq M_g, \alpha_{t-1} \geq \alpha_t > 0, A_{t-1} \geq A_t >0 \textit{ element-wise} \Big)
\end{align}
Perform telescope sum, we have
\begin{equation}
\label{eq:prod_lower_bound}
    \sum_{t=1}^T\Big \langle A_t, g_t\Big \rangle \geq \sum_{t=1}^T\Big \langle \alpha_{t-1} A_{t-1} \nabla f(x_t), g_t\Big \rangle - M_g^2 \Big(\Big \vert \Big \vert \alpha_0 H_0\Big \vert \Big \vert_1 -\Big \vert \Big \vert \alpha_T A_t\Big \vert \Big \vert_1 \Big)
\end{equation}
Next, we derive an upper bound for $\sum_{t=1}^T\Big \langle A_t, g_t\Big \rangle$ by deriving an upper-bound for the RHS of Eq.~\eqref{eq:sum_prod}. We derive an upper bound for each part.

\begin{align}
\langle A_t, m_t\Big \rangle &=\Big \langle \alpha_t A_t \nabla f(x_t), m_t\Big \rangle =\Big \langle \nabla f(x_t),  \alpha_t A_t m_t\Big \rangle \\
&=\Big \langle \nabla f(x_t), x_{t} - x_{t+1}\Big \rangle \\
&\leq f(x_t) - f(x_{t+1}) + \frac{L}{2}\Big \vert \Big \vert x_{t+1} - x_t\Big \vert \Big \vert^2 \Big( \textit{By L-smoothness of }f \Big) \label{eq:bd13}
\end{align}
Perform telescope sum, we have
\begin{align}
\sum_{t=1}^T\Big \langle A_t, m_t\Big \rangle \leq f(x_1) - f(x_{T+1}) + \frac{L}{2} \sum_{t=1}^T\Big \vert \Big \vert \alpha_t A_t m_t\Big \vert \Big \vert^2 
\label{eq:bd14}
\end{align}
\begin{align}
\langle A_t - A_{t+1}, m_t\Big \rangle &=\Big \langle \alpha_t A_t \nabla f(x_t) - \alpha_{t+1} A_{t+1} \nabla f(x_{t+1}), m_t\Big \rangle \\
&=\Big \langle \alpha_t A_t \nabla f(x_t) - \alpha_t A_t \nabla f(x_{t+1})  , m_t\rangle \nonumber \\
&+\Big \langle \alpha_t A_t \nabla f(x_{t+1}) - \alpha_{t+1} A_{t+1} \nabla f(x_{t+1}) , m_t\rangle \\
&=\Big \langle \nabla f(x_t) - \nabla f(x_{t+1}), \alpha_t A_t m_t\Big \rangle +\Big \langle (\alpha_t A_t - \alpha_{t+1} A_{t+1}) \nabla f(x_t) ,m_t\Big \rangle \\
&=\Big \langle \nabla f(x_t) - \nabla f(x_{t+1}), x_t - x_{t+1}\Big \rangle +\Big \langle \nabla f(x_t), ( \alpha_t A_t - \alpha_{t+1} A_{t+1}) m_t\Big \rangle \\
&\leq L\Big \vert \Big \vert x_{t+1} - x_t\Big \vert \Big \vert^2 +\Big \langle \nabla f(x_t), ( \alpha_t A_t - \alpha_{t+1} A_{t+1}) m_t\Big \rangle \\
&\Big( \textit{By smoothness of }f \Big) \nonumber \\
&\leq L\Big \vert \Big \vert x_{t+1} - x_t\Big \vert \Big \vert^2 +\Big \vert \Big \vert \nabla f(x_t)\Big \vert \Big \vert_\infty\Big \vert \Big \vert \alpha_t A_t - \alpha_{t+1} A_{t+1}\Big \vert \Big \vert_1\Big \vert \Big \vert m_t\Big \vert \Big \vert_\infty \\
&\Big( \textit{By H\"older's inequality} \Big) \nonumber \\
&\leq L\Big \vert \Big \vert x_{t+1} - x_t\Big \vert \Big \vert^2 + M_g^2 \Big(\Big \vert \Big \vert \alpha_t A_t\Big \vert \Big \vert_1 -\Big \vert \Big \vert \alpha_{t+1} A_{t+1}\Big \vert \Big \vert_1 \Big) \\
&\Big( \textit{Since } \alpha_t \geq \alpha_{t+1} \geq 0, A_t \geq A_{t+1} \geq 0, \textit{element-wise} \Big)
\label{eq:bd15}
\end{align}
Perform telescope sum, we have 
\begin{align}
    \sum_{t=1}^{T-1}\Big \langle A_t - A_{t+1}, m_t\rangle \leq L \sum_{t=1}^{T-1}\Big \vert \Big \vert \alpha_t A_t m_t\Big \vert \Big \vert^2 + M_g^2 \Big(\Big \vert \Big \vert \alpha_1 H_1\Big \vert \Big \vert_1 -\Big \vert \Big \vert \alpha_T A_t\Big \vert \Big \vert_1 \Big)
    \label{eq:bd16}
\end{align}
We also have
\begin{align}
   \Big \langle A_T, m_T\Big \rangle &=\Big \langle \alpha_T A_t \nabla f(x_T), m_T \Big \rangle =\Big \langle \nabla f(x_T), \alpha_T A_t m_T\Big \rangle \\
    &\leq L \frac{1-\beta_1}{\beta_1}\Big \vert \Big \vert  \alpha_T A_t m_T\Big \vert \Big \vert^2 + \frac{\beta_1}{4L(1-\beta_1)}\Big \vert \Big \vert \nabla f(x_T)\Big \vert \Big \vert^2 \\
    &\Big( \textit{By Young's inequality} \Big) \nonumber \\
    &= L \frac{1-\beta_1}{\beta_1}\Big \vert \Big \vert  \alpha_T A_t m_T\Big \vert \Big \vert^2 + \frac{\beta_1}{4L (1-\beta_1)} M_g^2
    \label{eq:bd17}
\end{align}
Combine Eq.~\eqref{eq:bd14}, Eq.~\eqref{eq:bd16} and Eq.~\eqref{eq:bd17} into Eq.~\eqref{eq:sum_prod}, we have
\begin{align}
    \sum_{t=1}^T\Big \langle A_t, g_t\Big \rangle &\leq \frac{\beta_1}{1-\beta_1}\Big \langle A_T, m_T\Big \rangle + f(x_1) - f(x_{T+1}) + \frac{L}{2} \sum_{t=1}^T\Big \vert \Big \vert \alpha_t A_t m_t\Big \vert \Big \vert^2 \nonumber \\
    &+ \frac{\beta_1}{1-\beta_1} L \sum_{t=1}^{T-1}\Big \vert \Big \vert \alpha_t A_t m_t\Big \vert \Big \vert^2 + \frac{\beta_1}{1-\beta_1} M_g^2 \Big(\Big \vert \Big \vert \alpha_1 H_1\Big \vert \Big \vert_1 -\Big \vert \Big \vert \alpha_T A_t\Big \vert \Big \vert_1 \Big) \\
    &\leq f(x_1) - f(x_{T+1}) + \Big( \frac{\beta_1}{1-\beta_1}+\frac{1}{2} \Big)L \sum_{t=1}^T\Big \vert \Big \vert \alpha_t A_t m_t\Big \vert \Big \vert^2 \nonumber \\
    &+ \Big( \frac{\beta_1^2}{4L (1-\beta_1)^2} + \frac{\beta_1}{1-\beta_1} \Big \vert \Big \vert \alpha_1 H_1\Big \vert \Big \vert_1 \Big)M_g^2
    \label{eq:bd18}
\end{align}
Combine Eq.~\eqref{eq:prod_lower_bound} and Eq.~\eqref{eq:bd18}, we have
\begin{align}
 \sum_{t=1}^T\Big \langle \alpha_{t-1} A_{t-1} \nabla f(x_t), g_t\Big \rangle &- M_g^2 \Big(\Big \vert \Big \vert \alpha_0 H_0\Big \vert \Big \vert_1 -\Big \vert \Big \vert \alpha_T A_t\Big \vert \Big \vert_1 \Big) \leq    \sum_{t=1}^T\Big \langle A_t, g_t\Big \rangle \nonumber \\
 &\leq f(x_1) - f(x_{T+1}) + \Big( \frac{\beta_1}{1-\beta_1}+\frac{1}{2} \Big)L \sum_{t=1}^T\Big \vert \Big \vert \alpha_t A_t m_t\Big \vert \Big \vert^2 \nonumber \\
    &+ \Big( \frac{\beta_1^2}{4L (1-\beta_1)^2} + \frac{\beta_1}{1-\beta_1} \Big \vert \Big \vert \alpha_1 H_1\Big \vert \Big \vert_1 \Big)M_g^2
\end{align}
Hence we have
\begin{align}
    &\sum_{t=1}^T\Big \langle \alpha_{t-1} A_{t-1} \nabla f(x_t), g_t\Big \rangle \leq f(x_1) - f(x_{T+1}) + \Big( \frac{\beta_1}{1-\beta_1} + \frac{1}{2} \Big)L \sum_{t=1}^T\Big \vert \Big \vert \alpha_t A_t m_t\Big \vert \Big \vert^2 \nonumber \\
    &+ \Big( \frac{\beta_1^2}{4L (1-\beta_1)^2} +  \Big \vert \Big \vert \alpha_0 H_0 \Big \vert \Big \vert_1 + \frac{\beta_1}{1-\beta_1} \Big \vert \Big \vert \alpha_1 H_1 \Big \vert \Big \vert_1 \Big)M_g^2 \\
    &\leq f(x_1) - f^* + \Big( \frac{\beta_1}{1-\beta_1}+\frac{1}{2} \Big) L \alpha_0^2 M_g^2 C_u^2 \sum_{t=1}^T t^{-2\eta} \nonumber \\
    &+ \Big( \frac{\beta_1^2}{4L (1-\beta_1)^2} +  \Big \vert \Big \vert \alpha_0 H_0 \Big \vert \Big \vert_1 + \frac{\beta_1}{1-\beta_1} \Big \vert \Big \vert \alpha_1 H_1 \Big \vert \Big \vert_1\Big)M_g^2 \\
    &\leq f(x_1) - f^* \nonumber \\
    &+ M_g^2  \Big[ \frac{\beta_1^2}{4L (1-\beta_1)^2} +  \Big \vert \Big \vert \alpha_0 H_0 \Big \vert \Big \vert_1 + \frac{\beta_1}{1-\beta_1} \Big \vert \Big \vert \alpha_1 H_1 \Big \vert \Big \vert_1 + \Big( \frac{\beta_1}{1-\beta_1}+\frac{1}{2} \Big) L\alpha_0^2C_u^2 \frac{T^{1-2\eta}}{1-2\eta} \Big] \\
    &\leq f(x_1)-f^* + M_g^2 \underbrace{ \Big[  \frac{\beta_1^2}{4L (1-\beta_1)^2} + \frac{1}{1-\beta_1} \alpha_0 M_g + \Big( \frac{\beta_1}{1-\beta_1}+\frac{1}{2} \Big) L\alpha_0^2C_u^2 \frac{1}{1-2\eta} \Big]}_{E}
\end{align}
Take expectations on both sides, we have
\begin{align}
    \sum_{t=1}^T\Big \langle \alpha_{t-1} A_{t-1} \nabla f(x_t), \nabla f(x_t)\Big \rangle \leq \mathbb{E} f(x_1) - f^*+E M_g^2 \leq M_f - f^* + E M_g^2
    \label{eq:bd19}
\end{align}
Note that we have $\alpha_t$ decays monotonically with $t$, hence
\begin{align}
    \sum_{t=1}^T\Big \langle \alpha_{t-1} A_{t-1} \nabla f(x_t), \nabla f(x_t)\Big \rangle &\geq \alpha_0 T^{-\eta} \sum_{t=1}^T\Big \langle A_{t-1} \nabla f(x_t), \nabla f(x_t)\Big \rangle \\
    &\geq \alpha_0 T^{1-\eta} C_l \Big[ \frac{1}{T}\sum_{t=1}^T\Big \vert \Big \vert \nabla f(x_t)\Big \vert \Big \vert^2 \Big]
    \label{eq:bd20}
\end{align}
Combine Eq.~\eqref{eq:bd19} and Eq.~\eqref{eq:bd20}, assume $f$ is upper bounded by $M_f$, we have
\begin{align}
    \frac{1}{T} \sum_{t=1}^T\Big \vert \Big \vert \nabla f(x_t)\Big \vert \Big \vert^2 \leq \frac{1}{\alpha_0 C_l} T^{\eta -1} \Big[ M_f - f^* + E M_g^2 \Big]
\end{align}
\end{proof}



\section{Experiments}
\begin{figure}
\begin{subfigure}{0.48\textwidth}
\includegraphics[width=\linewidth]{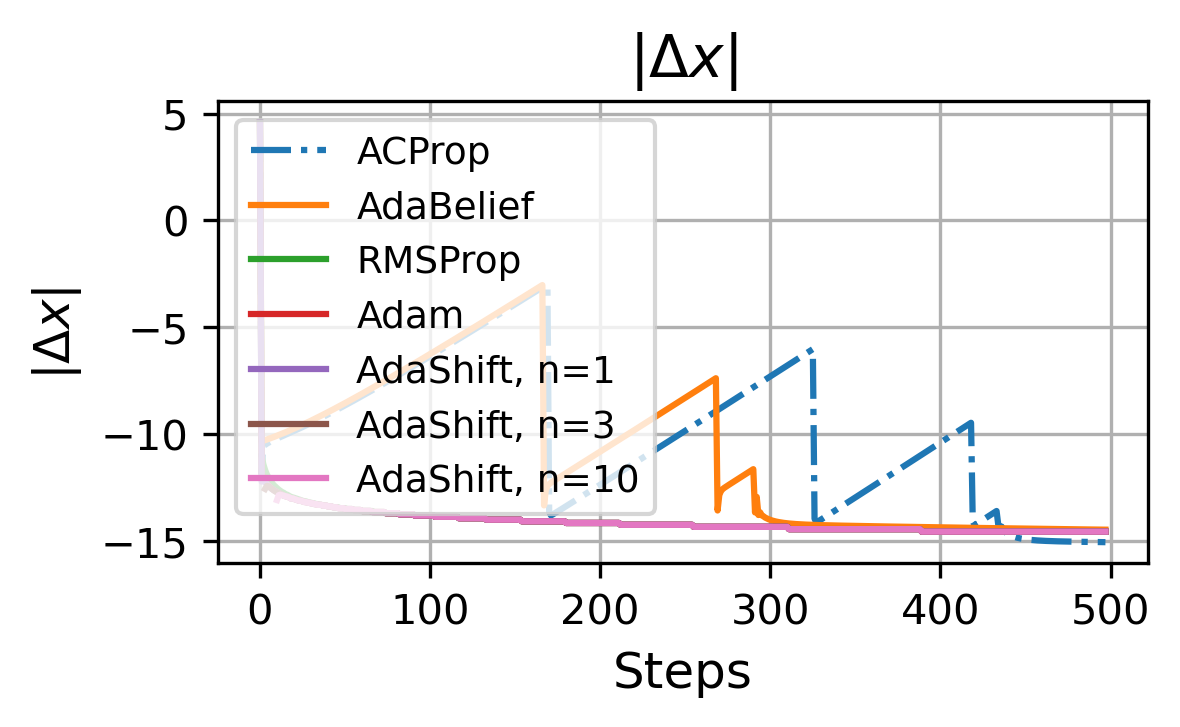}
\end{subfigure}
\begin{subfigure}{0.48\textwidth}
\includegraphics[width=\linewidth]{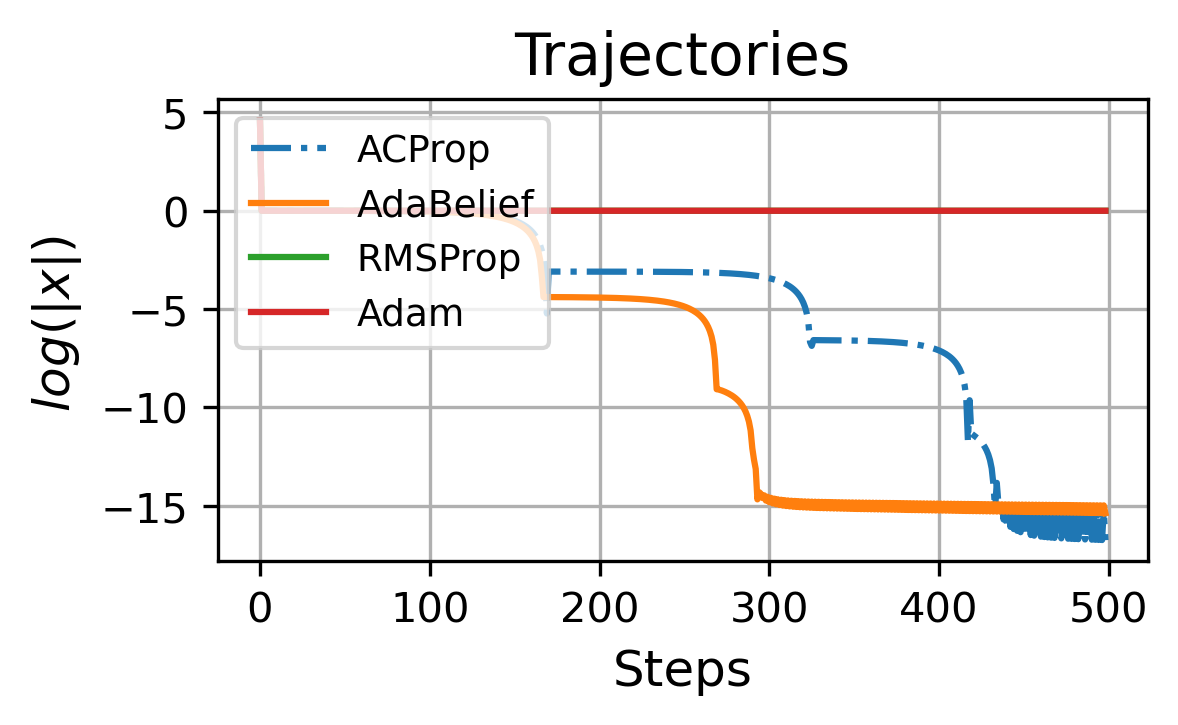}
\end{subfigure}
\caption{Behavior of ACProp for optimization of the function $f(x)=\vert x \vert$ with $lr=0.00001$.}
\label{fig:numeric}
\end{figure}

\begin{figure}
\begin{subfigure}{0.48\linewidth}
\includegraphics[width=\linewidth]{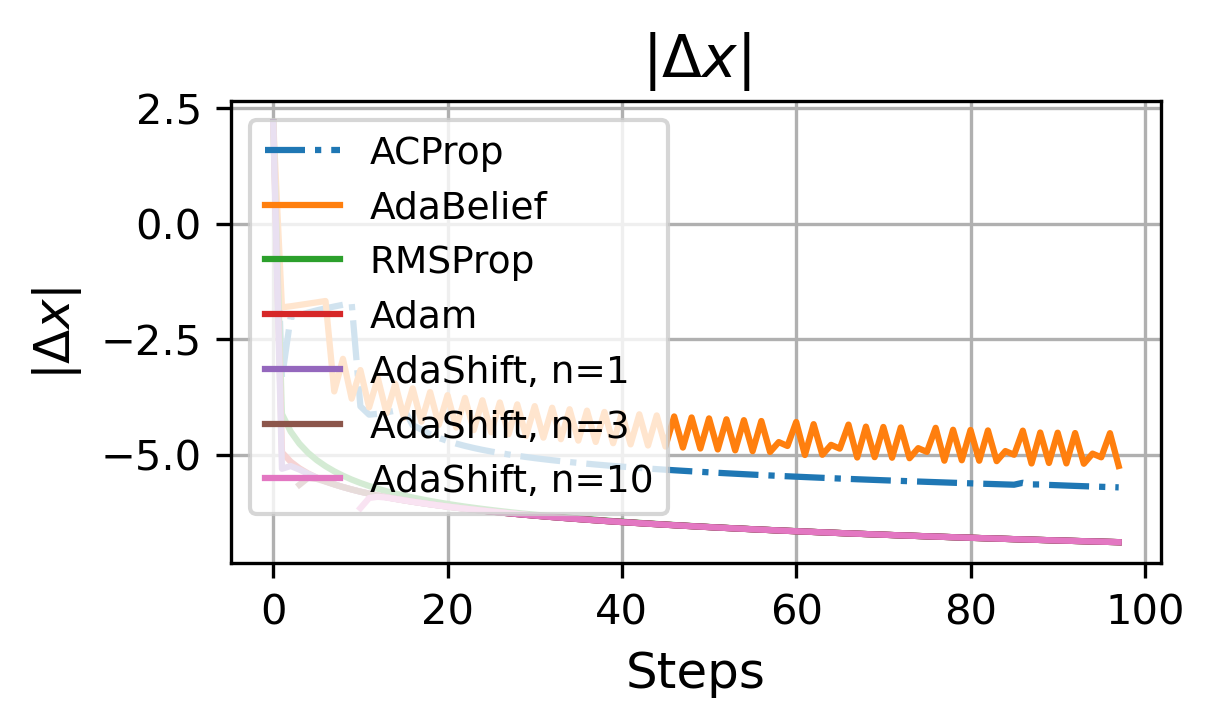}
\end{subfigure}
\begin{subfigure}{0.48\linewidth}
\includegraphics[width=\linewidth]{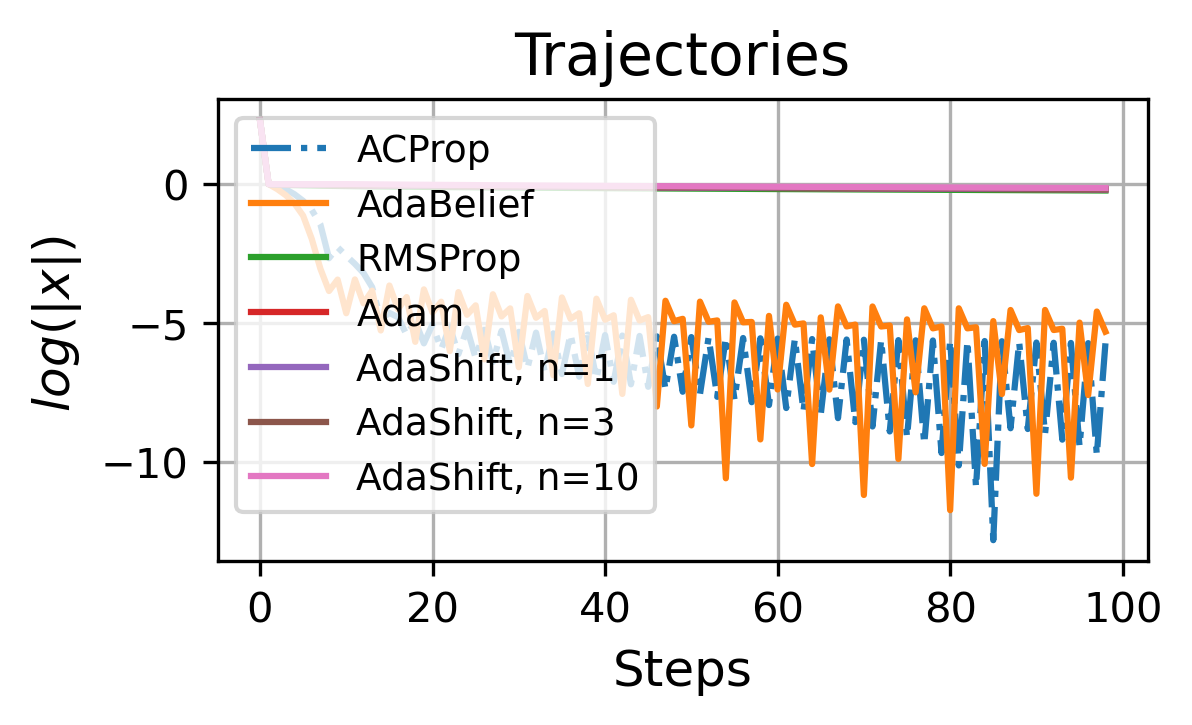}
\end{subfigure}
\caption{Behavior of ACProp for optimization of the function $f(x)=\vert x \vert$ with $lr=0.01$.}
\label{fig:numeric_1e2}
\end{figure}
\subsection{Centering of second momentum does not suffer from numerical issues}
Note that the centered second momentum $s_t$ does not suffer from numerical issues in practice. The intuition that ``$s_t$ is an estimate of variance in gradient'' is based on a strong assumption that the gradient follows a stationary distribution, which indicates that the true gradient $\nabla f_t(x)$ remains a constant function of $t$. In fact, $s_t$ tracks $EMA((g_t-m_t)^2)$, and it includes two aspects: the change in true gradient $\vert \vert \nabla f_{t+1}(x) - \nabla f_t(x) \vert \vert^2$, and the noise in gradient observation $\vert \vert g_t-\nabla f_t(x) \vert \vert^2$. In practice, especially in deep learning, the gradient suffers from large noise, hence $s_t$ does not take extremely small values. 

Next, we consider an ideal case that the observation $g_t$ is noiseless, and conduct experiments to show that centering of second-momentum does not suffer from numerical issues. Consider the function $f(x)=\vert x \vert$ with initial value $x_0=100$, we plot the trajectories and stepsizes of various optimizers in Fig.~\ref{fig:numeric} and Fig.~\ref{fig:numeric_1e2} with initial learning rate $lr=0.00001$ and $lr=0.01$ respectively. Note that ACProp and AdaBelief take a large step at the initial phase, because a constant gradient is observed without noise. But note that the gradient remains constant only within half of the plane; when it cross the boundary $x=0$, the gradient is reversed, hence $\vert \vert \nabla f_{t+1}(x) - \nabla f_t(x) \vert \vert^2\neq0$, and $s_t$ becomes a non-zero value when it hits a valley in the loss surface. Therefore, the stepsize of ACProp and AdaBelief automatically decreases when they reach the local minimum. As shown in Fig.~\ref{fig:numeric} and Fig.~\ref{fig:numeric_1e2}, ACProp and AdaBelief does not take any extremely large stepsizes for both a very large (0.01) and very small (0.00001) learning rates, and they automatically decrease stepsizes near the optimal. We do not observe any numerical issues even for noise-free piecewise-linear functions. If the function is not piecewise linear, or the gradient does not remain constant within any connected set, then $\vert \vert \nabla f_{t+1}(x) - \nabla f_t(x) \vert \vert^2\neq0$ almost everywhere, and the numerical issue will never happen.

The only possible case where centering second momentum causes numerical issue has to satisfy two conditions simultaneously: (1) $\vert \vert \nabla f_{t+1}(x) - \nabla f_t(x) \vert \vert^2=0, \forall t$ and (2) $g_t$ is a noise-free observation of $\nabla f(x)$. This is a trivial case where the loss surface is linear, and gradient is noise-free. This is case is almost never encountered in practice. Furthermore, in this case, $s_t=0$ and ACProp reduces to SGD with stepsize $1/\epsilon$. But note that the optimal is $-\infty$ and achieved at $\infty$ or $-\infty$, taking a large stepsize $1/\epsilon$ is still acceptable for this trivial case.

\begin{table}
\centering
\caption{Hyper-parameters for ACProp in various experiments}
\begin{tabular}{l|llll}
\hline
\multicolumn{1}{c|}{} & lr   & beta1 & beta2 & eps   \\ \hline
ImageNet              & 1e-3 & 0.9   & 0.999 & 1e-12 \\ \hline
GAN                 & 2e-4 & 0.5   & 0.999 & 1e-16 \\ \hline
Transformer           & 5e-4 & 0.9   & 0.999 & 1e-16 \\ \hline
\end{tabular}
\end{table}

\subsection{Image classification with CNN}
We performed extensive hyper-parameter tuning in order to better compare the performance of different optimizers: for SGD we set the momentum as 0.9 which is the default for many cases, and search the learning rate between 0.1 and $10^{-5}$ in the log-grid; for other adaptive optimizers, including AdaBelief, Adam, RAdam, AdamW and AdaShift, we search the learning rate between 0.01 and $10^{-5}$ in the log-grid, and search $\epsilon$ between $10^{-5}$ and $10^{-10}$ in the log-grid. We use a weight decay of 5e-2 for AdamW, and use 5e-4 for other optimizers. We conducted experiments based on the official code for AdaBound and AdaBelief \footnote{https://github.com/juntang-zhuang/Adabelief-Optimizer}.
\begin{figure}
\begin{subfigure}[b]{0.32\textwidth}
\includegraphics[width=\linewidth]{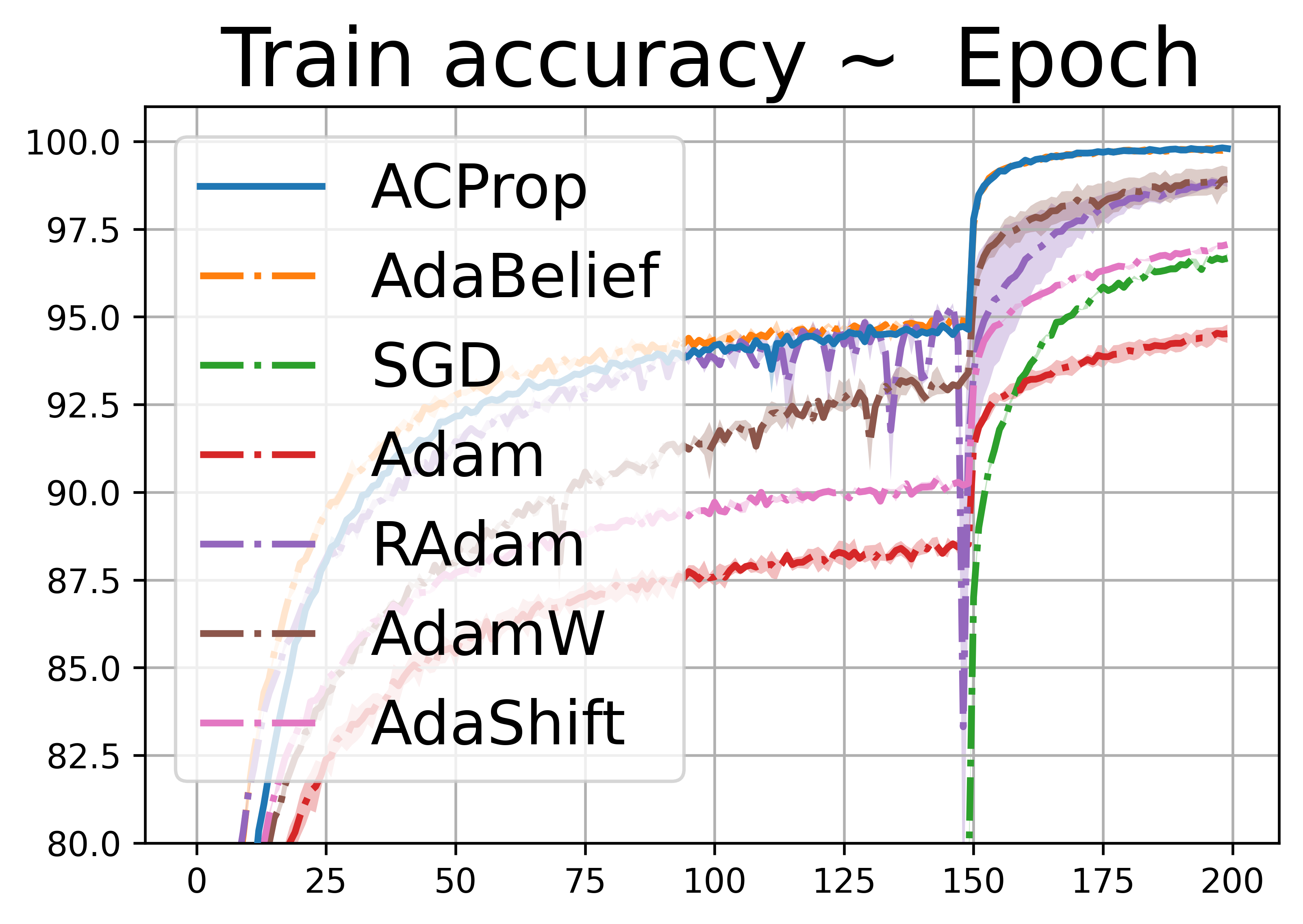}
\caption{\small{
VGG11 on Cifar10
}}
\end{subfigure}
\begin{subfigure}[b]{0.32\textwidth}
\includegraphics[width=\linewidth]{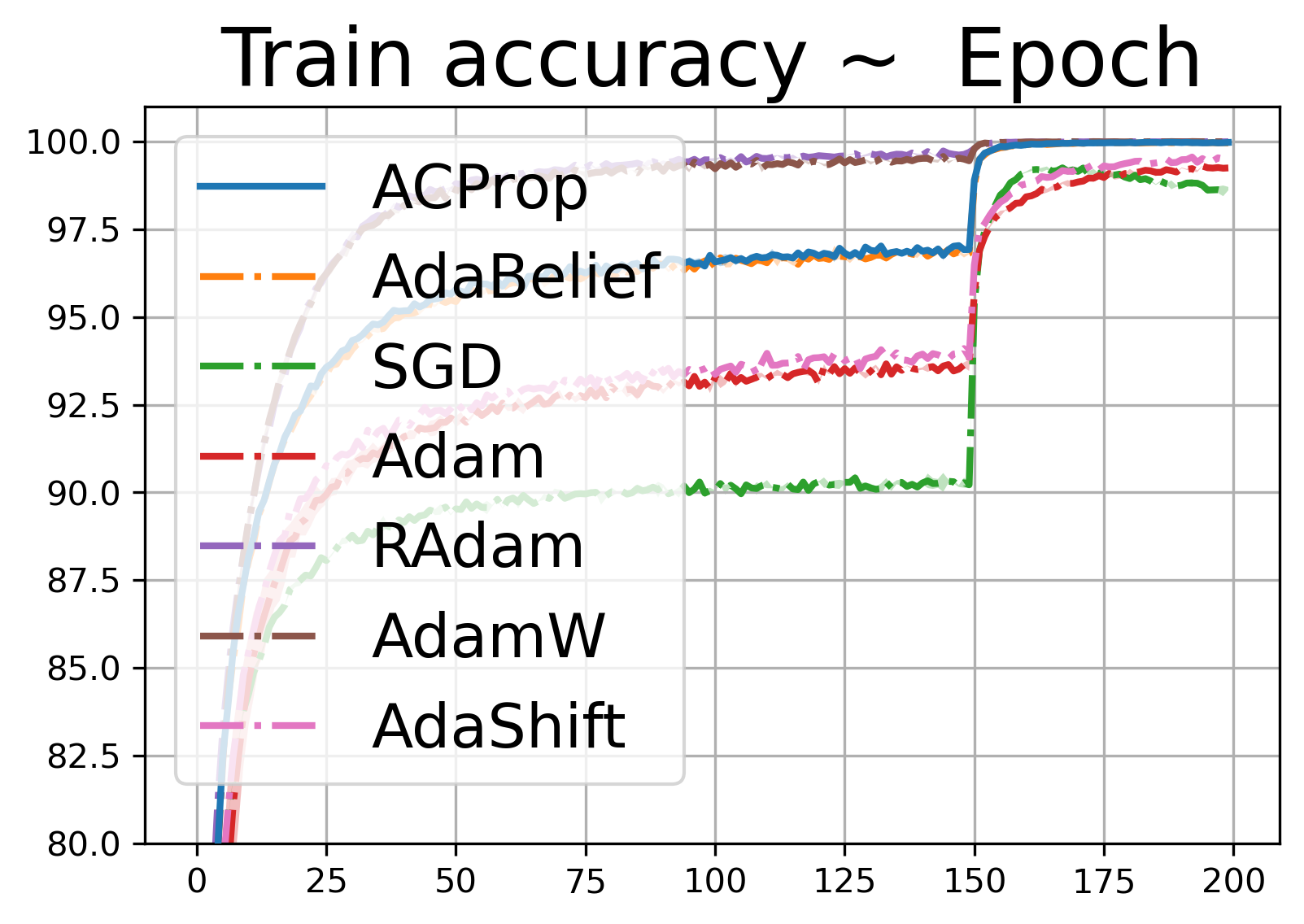}
\caption{\small{
ResNet34 on Cifar10
}}
\end{subfigure}
\begin{subfigure}[b]{0.32\textwidth}
\includegraphics[width=\linewidth]{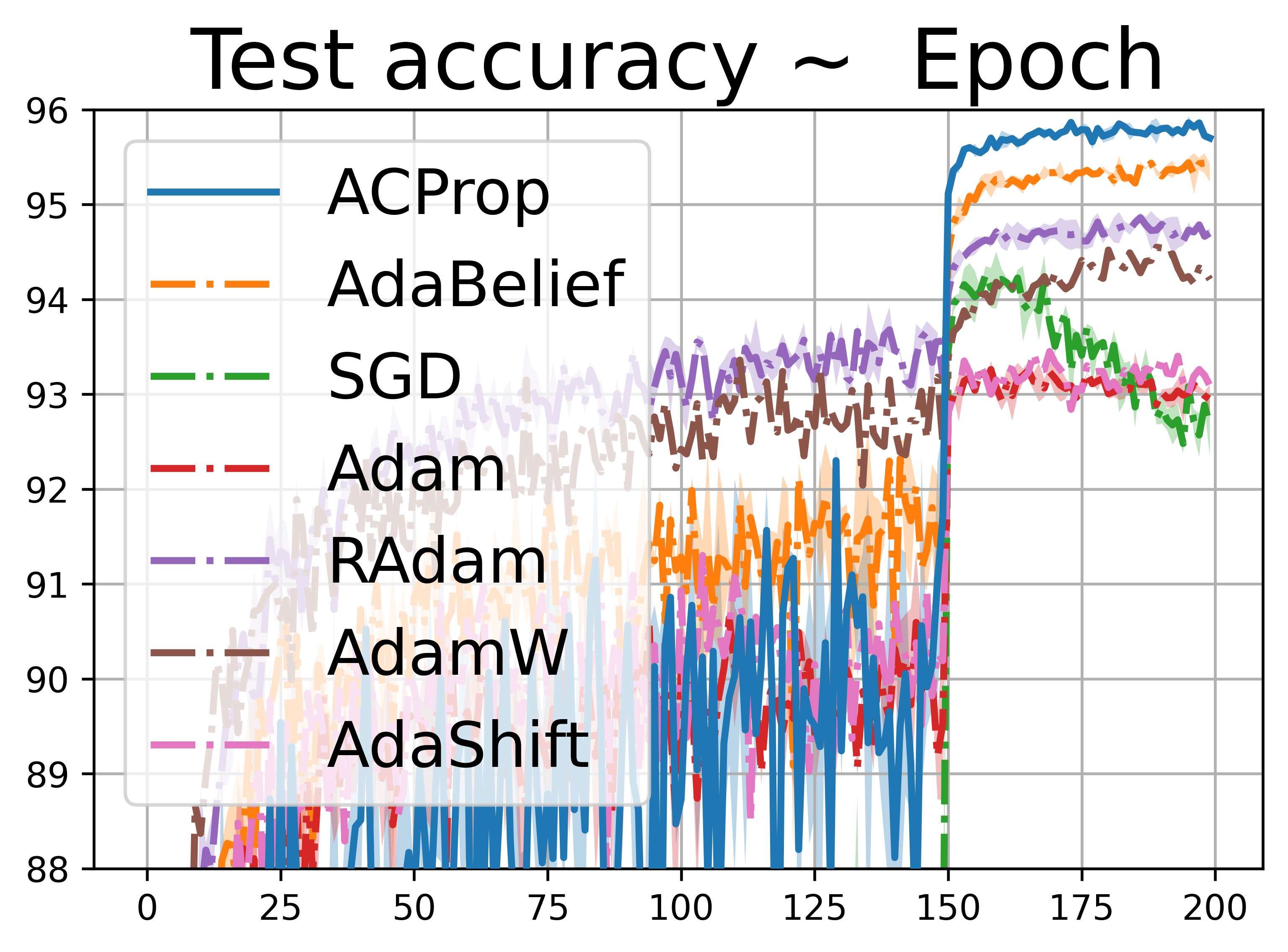}
\caption{\small{
DenseNet121 on Cifar10
}}
\end{subfigure} \\
\begin{subfigure}[b]{0.32\textwidth}
\includegraphics[width=\linewidth]{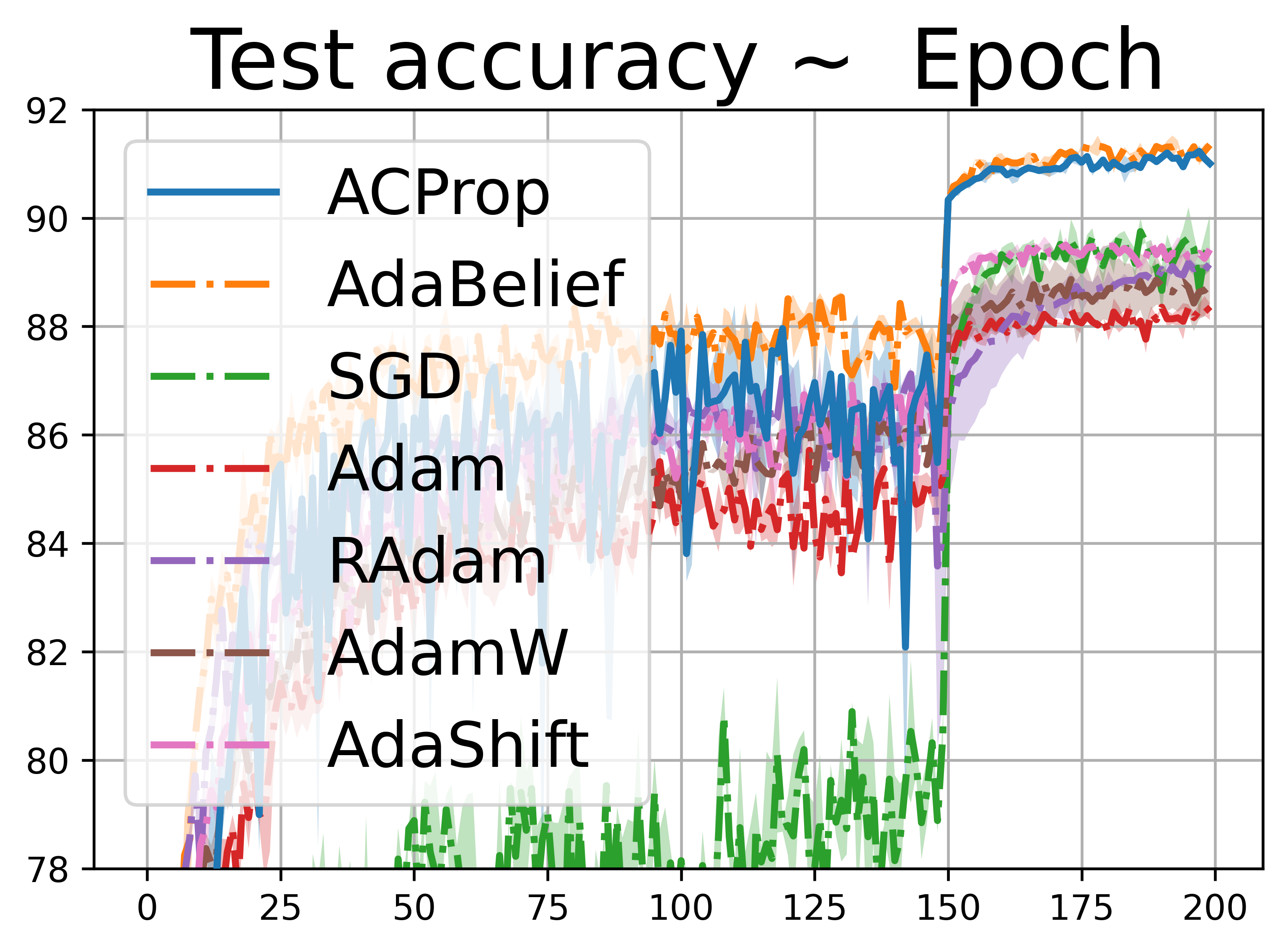}
\caption{\small{
VGG11 on Cifar10
}}
\end{subfigure}
\begin{subfigure}[b]{0.32\textwidth}
\includegraphics[width=\linewidth]{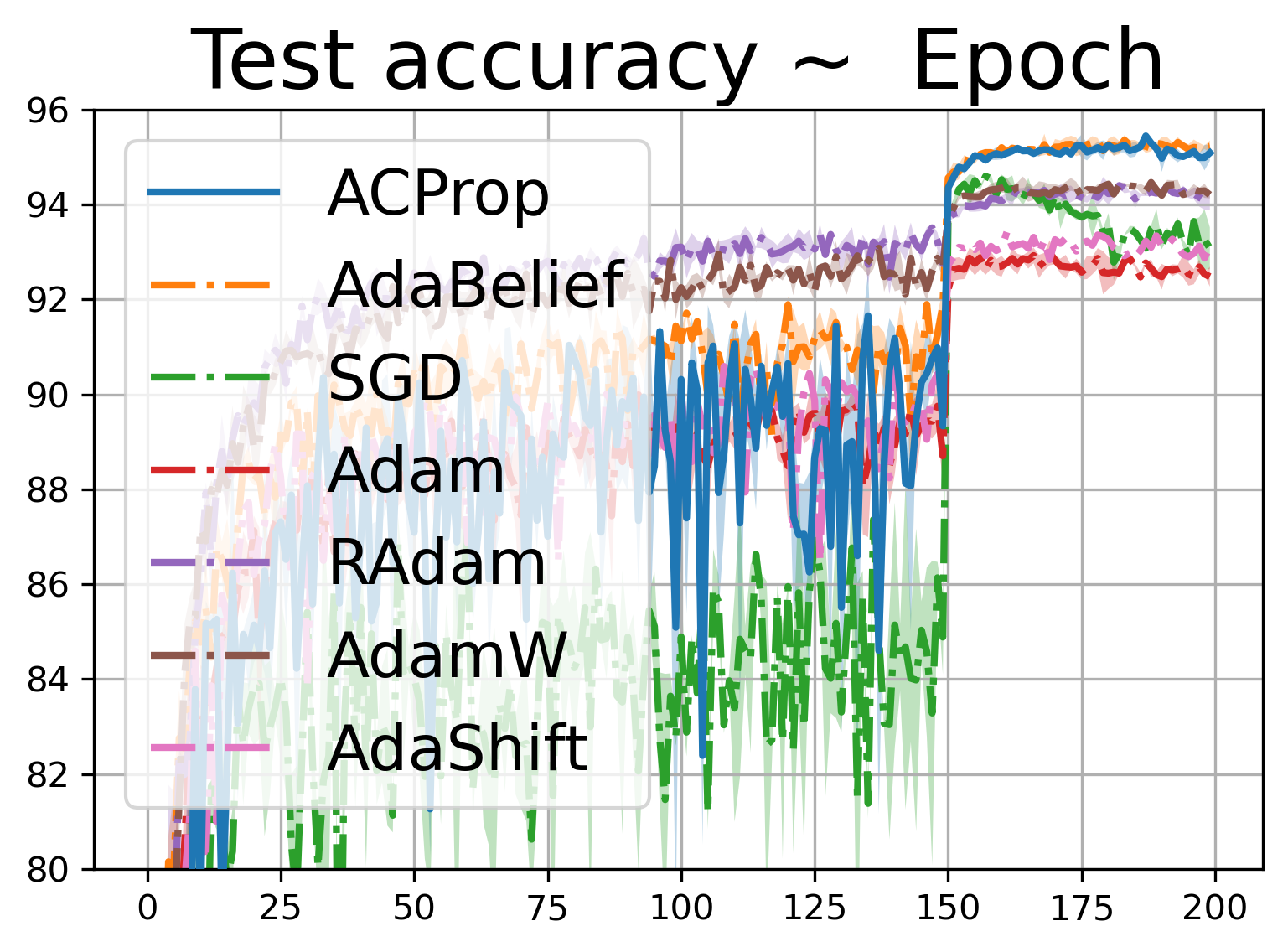}
\caption{\small{
ResNet34 on Cifar10
}}
\end{subfigure}
\begin{subfigure}[b]{0.32\textwidth}
\includegraphics[width=\linewidth]{figs_appendix/nips_Test_cifar10_densenet_conf.png}
\caption{\small{
DenseNet121 on Cifar10
}}
\end{subfigure}
\caption{Training (top row) and test (bottom row) accuracy of CNNs on Cifar10 dataset.}
\label{supfig:Cifar10}
\end{figure}
\begin{figure}
    \begin{subfigure}[b]{0.49\textwidth}
    \includegraphics[width=\linewidth]{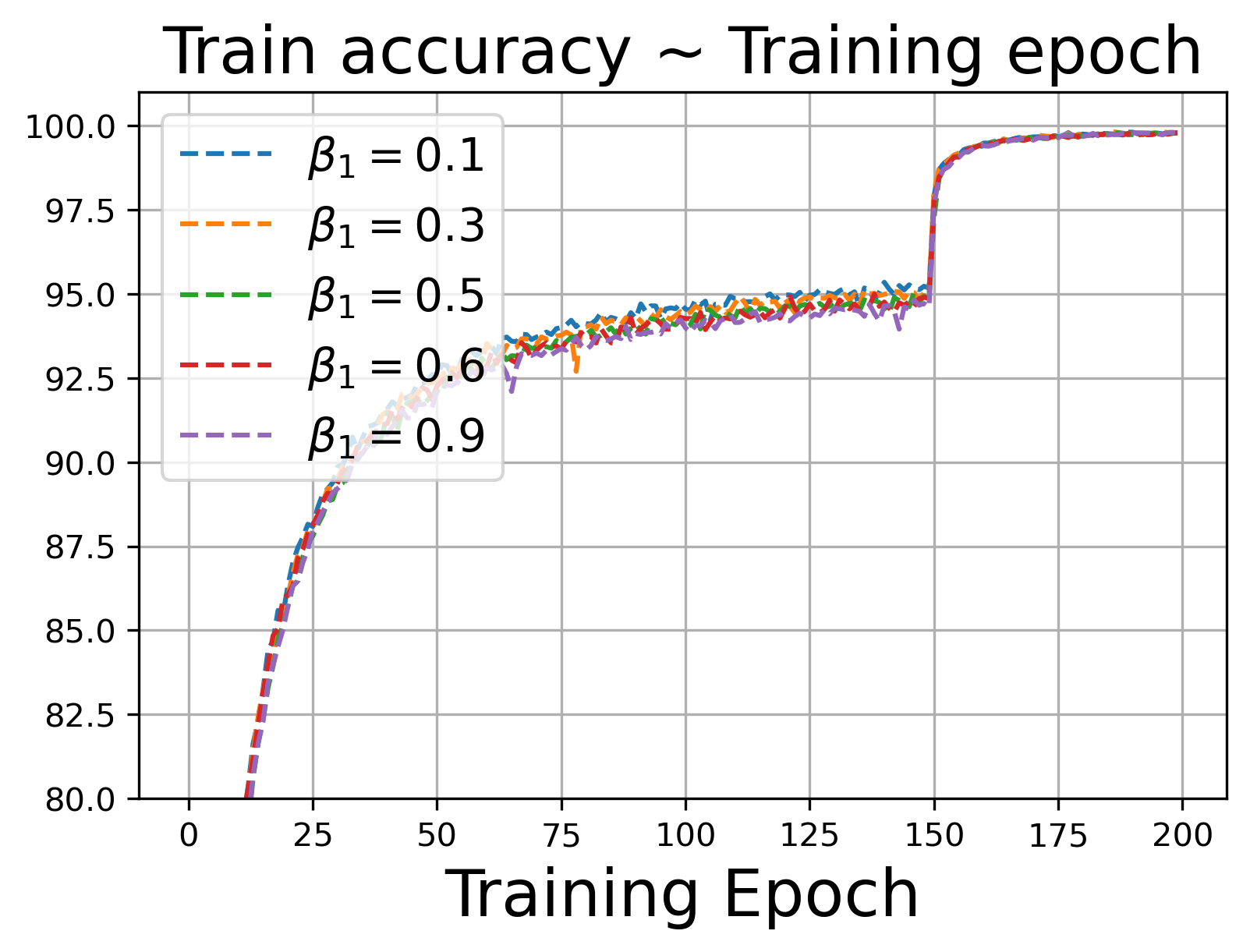}
    \end{subfigure}
    \begin{subfigure}[b]{0.47\textwidth}
    \includegraphics[width=\linewidth]{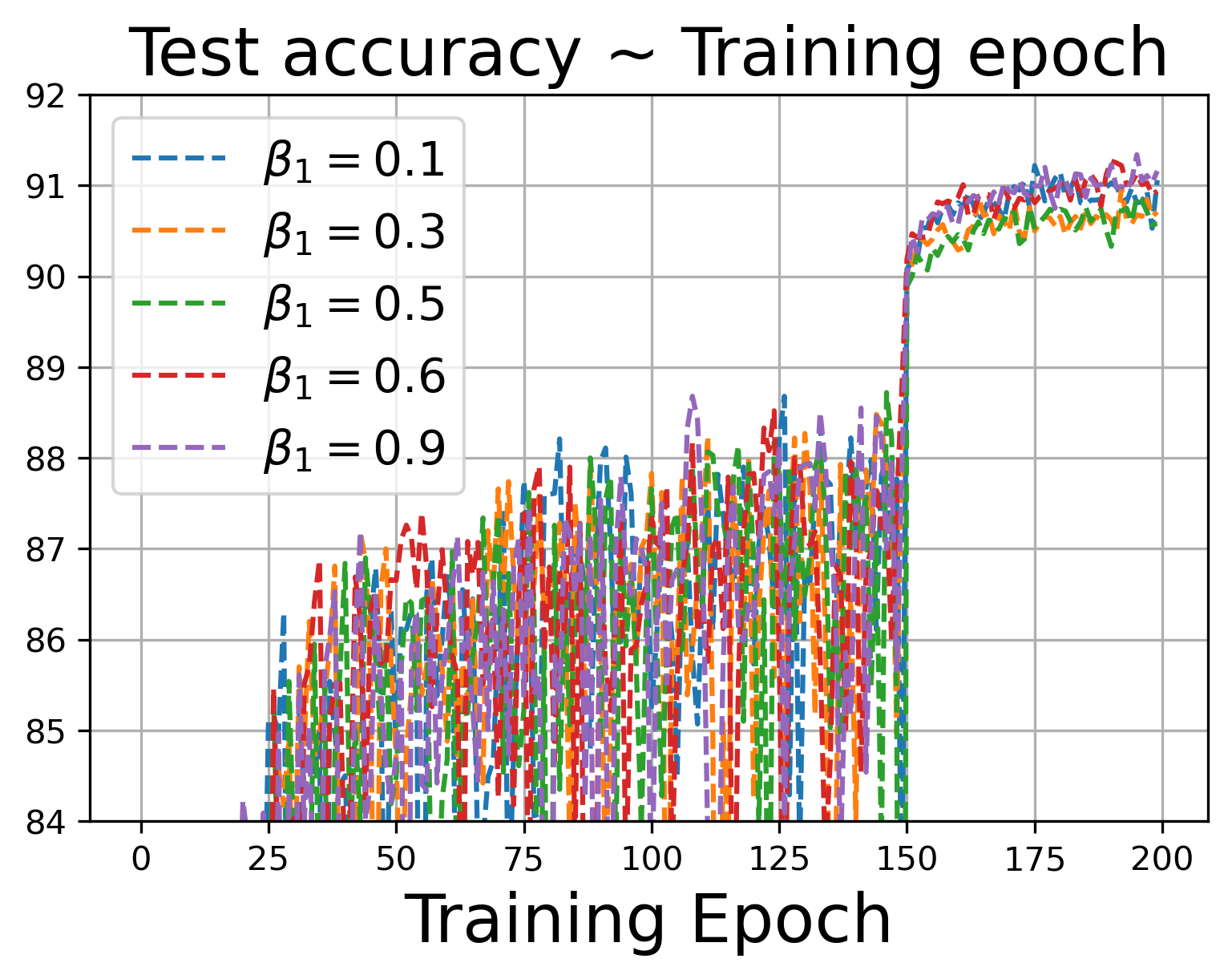}
    \end{subfigure}
    \captionof{figure}{The training and test accuracy curve of VGG11 on CIFAR10 with different $\beta_1$ values.}
    \label{fig:vgg_beta1}
\end{figure}
\begin{figure}
    \begin{subfigure}[b]{0.49\textwidth}
    \includegraphics[width=\linewidth]{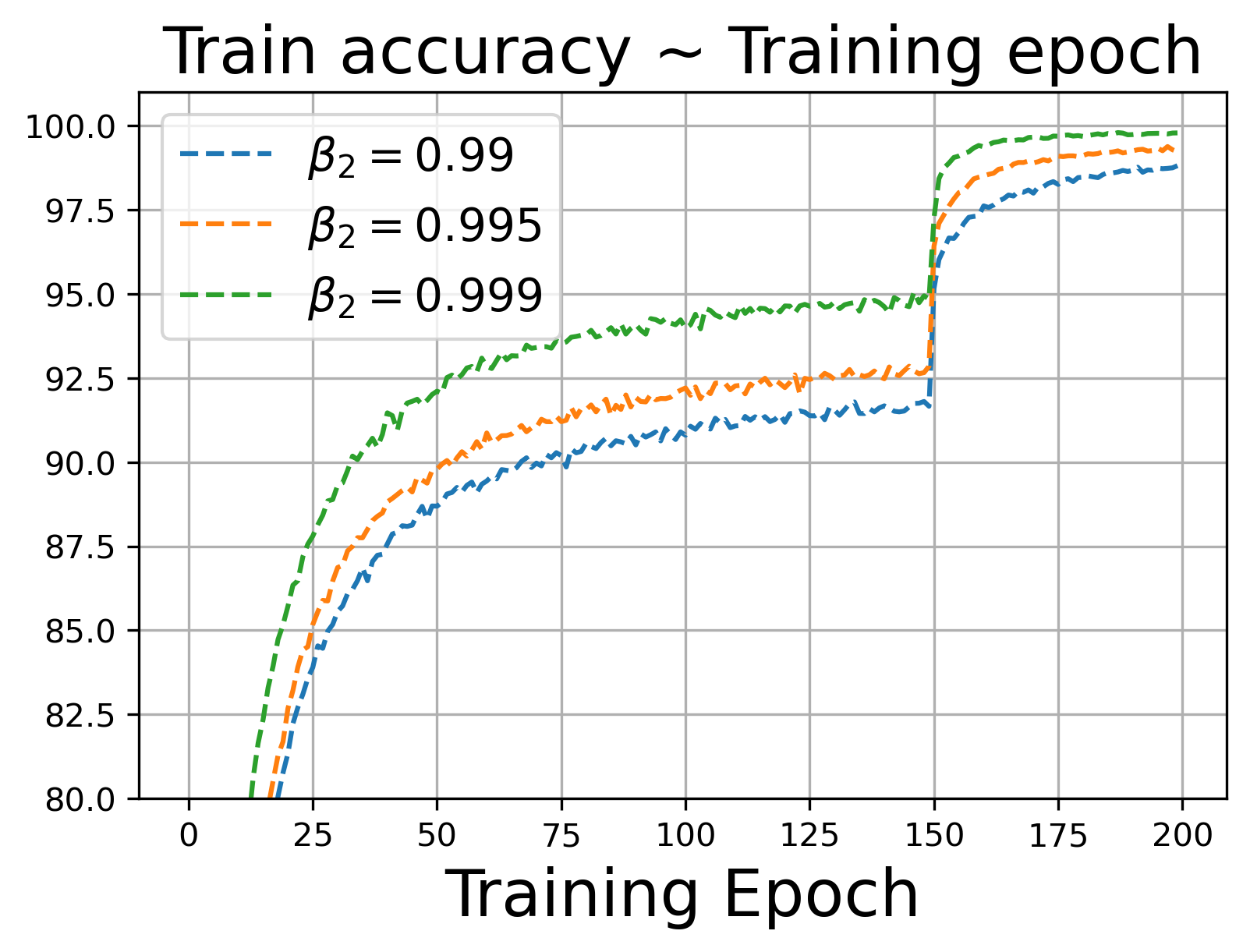}
    \end{subfigure}
    \begin{subfigure}[b]{0.47\textwidth}
    \includegraphics[width=\linewidth]{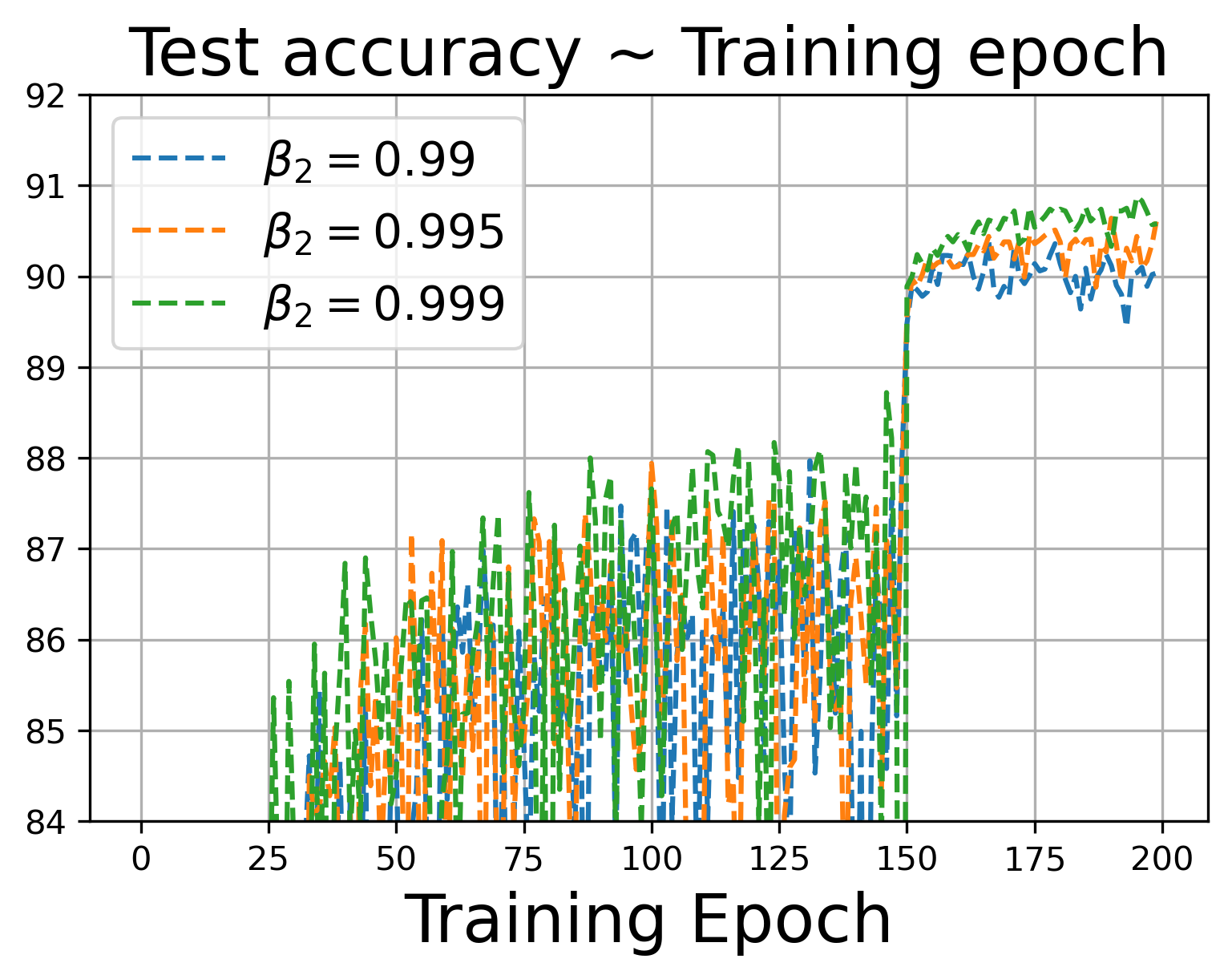}
    \end{subfigure}
    \captionof{figure}{The training and test accuracy curve of VGG11 on CIFAR10 with different $\beta_2$ values.}
    \label{fig:vgg_beta2}
\end{figure}

\begin{figure}
    \centering
    \includegraphics[width=0.9\linewidth]{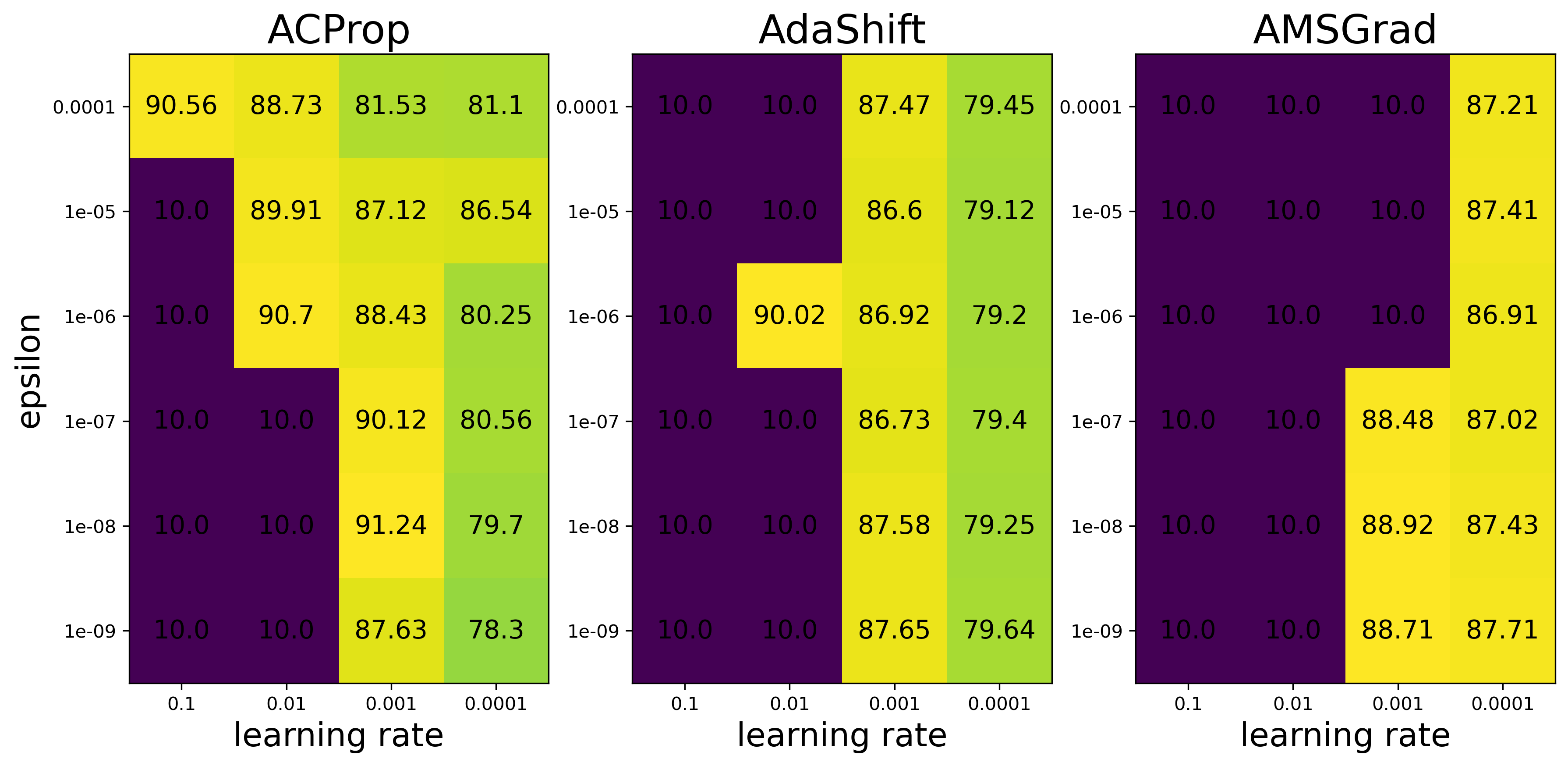}
    \caption{Test accuracy of VGG-11 on CIFAR10 trained under various hyper-parameter settings with different optimizers}
    \label{fig:hyper_tune}
\end{figure}

We further test the robustness of ACProp to values of hyper-parameters $\beta_1$ and $\beta_2$. Results are shown in Fig.~\ref{fig:vgg_beta1} and Fig.~\ref{fig:vgg_beta2} respectively. ACProp is robust to different values of $\beta_1$, and is more sensitive to values of $\beta_2$.

\begin{figure}
    \centering
    \includegraphics[width=0.8\linewidth]{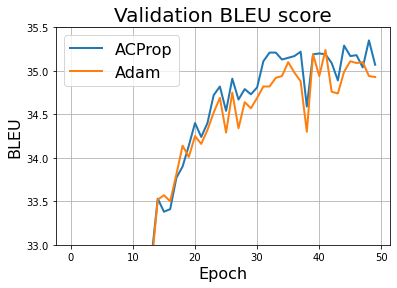}
    \caption{BLEU score on validation set of a Transformer-base trained with ACProp and Adam}
    \label{fig:bleu}
\end{figure}
\FloatBarrier
\subsection{Neural Machine Translation with Transformers}
We conducted experiments on Neural Machine Translation (NMT) with transformer models. Our experiments on the IWSLT14 DE-EN task is based on the 6-layer transformer-base model in fairseq implementation \footnote{https://github.com/pytorch/fairseq}. For all methods, we use a learning rate of 0.0002, and standard invser sqrt learning rate schedule with 4,000 steps of warmup. For other tasks, our experiments are based on an open-source implementation\footnote{https://github.com/DevSinghSachan/multilingual\_nmt} using a 1-layer Transformer model. We plot the BLEU score on validation set varying with training epoch in Fig.~\ref{fig:bleu}, and ACProp consistently outperforms Adam throughout the training. 

\subsection{Generative adversarial networks}
The training of GANs easily suffers from mode collapse and numerical instability \cite{salimans2016improved}, hence is a good test for the stability of optimizers. We conducted experiments with Deep Convolutional GAN (DCGAN) \cite{radford2015unsupervised}, Spectral-Norm GAN (SNGAN) \cite{miyato2018spectral}, Self-Attention GAN (SAGAN) \cite{zhang2019self} and Relativistic-GAN (RLGAN) \cite{jolicoeur2018relativistic}. We set $\beta_1=0.5$, and search for $\beta_2$ and $\epsilon$ with the same schedule as previous section. Our experiments are based on an open-source implementation \footnote{https://github.com/POSTECH-CVLab/PyTorch-StudioGAN}.
\begin{figure}[!htb]
    \centering
    \includegraphics[width=0.7\linewidth]{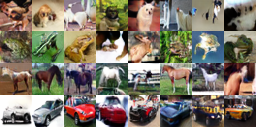}
    \caption{Generated figures by the SN-GAN trained with ACProp.}
    \label{fig:sn_gan}
\end{figure}
\begin{figure}[!htb]
    \centering
    \includegraphics[width=0.7\linewidth]{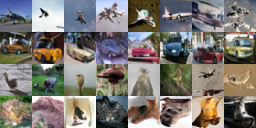}
    \caption{Generated figures by the SA-GAN trained with ACProp.}
    \label{fig:sa_gan}
\end{figure}
\begin{figure}[!htb]
    \centering
    \includegraphics[width=0.7\linewidth]{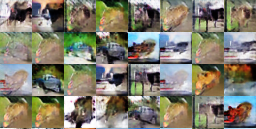}
    \caption{Generated figures by the DC-GAN trained with ACProp.}
    \label{fig:dc_gan}
\end{figure}
\begin{figure}[!htb]
    \centering
    \includegraphics[width=0.7\linewidth]{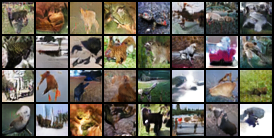}
    \caption{Generated figures by the RL-GAN trained with ACProp.}
    \label{fig:rl_gan}
\end{figure}
\FloatBarrier

\end{document}